\newtheorem{theorem}{Theorem}
\newtheorem{lemma}{Lemma}
\newtheorem{assumption}{Assumption}
\newtheorem{definition}{Definition}
\newtheorem{corollary}{Corollary}
\title{Neural Networks can Learn Representations with Gradient Descent}
\let\P\relax\DeclareMathOperator{\P}{\mathbb{P}}
\DeclareMathOperator{\E}{\mathbb{E}}
\DeclareMathOperator{\Var}{Var}
\newcommand{\R}{\mathbb{R}}
\newcommand{\1}{\mathbf{1}}
\newcommand{\iprod}[1]{\left\langle #1 \right\rangle}
\DeclareMathOperator{\spn}{span}
\DeclareMathOperator{\unif}{Unif}
\DeclareMathOperator{\poly}{poly}
\DeclareMathOperator{\polylog}{polylog}
\DeclareMathOperator{\relu}{ReLU}
\DeclareMathOperator{\mat}{Mat}
\let\vec\relax\DeclareMathOperator{\vec}{Vec}
\DeclareMathOperator{\sym}{Sym}
\newcommand{\bw}{\overline{w}}
\begin{document}

\setlength{\parindent}{0pt}
\setlength{\parskip  }{5.5pt}

\title{Neural Networks can Learn Representations with Gradient Descent}

\author{%
  Alex Damian \\
  Princeton University\\
  \texttt{ad27@princeton.edu}
  \and
  Jason D. Lee \\
  Princeton University \\
  \texttt{jasonlee@princeton.edu}
  \and
  Mahdi Soltanolkotabi \\
  University of Southern California \\
  \texttt{soltanol@usc.edu}
}

\maketitle

\begin{abstract}%
Significant theoretical work has established that in specific regimes, neural networks trained by gradient descent behave like kernel methods. However, in practice, it is known that neural networks strongly outperform their associated kernels. In this work, we explain this gap by demonstrating that there is a large class of functions which cannot be efficiently learned by kernel methods but can be easily learned with gradient descent on a two layer neural network outside the kernel regime by learning representations that are relevant to the target task. We also demonstrate that these representations allow for efficient transfer learning, which is impossible in the kernel regime.

Specifically, we consider the problem of learning polynomials which depend on only a few relevant directions, i.e. of the form $f^\star(x) = g(Ux)$ where $U: \R^d \to \R^r$ with $d \gg r$. When the degree of $f^\star$ is $p$, it is known that $n \asymp d^p$ samples are necessary to learn $f^\star$ in the kernel regime. Our primary result is that gradient descent learns a representation of the data which depends only on the directions relevant to $f^\star$. This results in an improved sample complexity of $n\asymp d^2 r + dr^p$. Furthermore, in a transfer learning setup where the data distributions in the source and target domain share the same representation $U$ but have different polynomial heads we show that a popular heuristic for transfer learning has a target sample complexity independent of $d$.
\end{abstract}

\section{Introduction}
Crucial to the practical success of deep learning is the ability of gradient-based algorithms to learn good feature representations from the training data and learn simple functions on top of these representations. Despite significant progress towards a theoretical foundation for neural networks, a robust understanding of this unique representation learning capability of gradient descent methods has remained elusive. A major challenge is that due to the highly nonconvex loss landscape, establishing convergence to a global optimum that achieves near zero training loss is challenging. Furthermore, due to the overparameterized nature of modern neural nets (containing many more parameters than training data) the training landscape has many global optima. In fact, there are many global optima with poor generalization performance~\citep{zhang2016understanding,liu2020bad}. This paper thus focuses on answering this intriguing question:

\begin{quote}
\center
How do gradient-based methods learn feature representations and why do these representations allow for efficient generalization and transfer learning?
\end{quote}
\vspace{1em}

 The most prominent contemporary approach to understanding neural networks is the linearization or neural tangent kernel (NTK)~\citep{soltanolkotabi2018theoretical,jacot2018neural} technique. The premise of the linearization method is that the dynamics of gradient descent are well-approximated by gradient descent on a linear regression instance with fixed feature representation. Using this linearization technique, it is possible to prove convergence to a zero training loss point~\citep{soltanolkotabi2018theoretical,du2018gradienta, du2018gradientb}. However, this technique often requires unrealistic hyper-parameter choices (e.g.~small learning rate, large initialization, or wide networks) that does not allow the features to evolve across the iterations and thus the generalization error with this technique cannot be better than that of a kernel method. Indeed, precise lower bounds show that the NTK solutions do not generalize better than the polynomial kernel~\citep{ghorbani2019limitations}. As a result this regime of training is also sometimes referred to as the \emph{lazy} regime \cite{chizat2019lazy}.\footnote{See Section \ref{section:relatedwork} for a more in depth discussion of this literature and other related work.} In practice, neural networks far outperform their corresponding induced kernels~\citep{arora2019exact}. Therefore, understanding the representation learning of neural networks beyond the lazy regime is of fundamental importance.
 
 In this paper, we initiate the study of the representation learning of neural networks beyond this NTK/linear/lazy regime. To this aim, we consider the problem of learning polynomials with low-dimensional latent representation of the form $f^*(x) = g(Ux)$, where $U$ maps from $d$ to  $ r $ dimensions with $d\gg r$ with $g$ a multivariate polynomial of degree $p$. This is a natural choice as the failure of the NTK solution is in part due to its inability to learn data-dependent feature representations that adapt to the intrinsic low latent dimensionality of the ground truth function. Existing analysis based on the NTK regime provably require $n \asymp d^p$ samples~\citep{ghorbani2019limitations} to learn \textit{any} degree $p$ polynomial, even if they only depend on a few relevant directions. In contrast we show that gradient descent from random initialization only requires $n \asymp d^2r + dr^p$ samples, breaking the sample complexity barrier dictated by NTK proof techniques. More specifically, our contributions are as follows:


 \begin{enumerate}
 	\item \textbf{Feature Learning:} When the target function $f^\star = g(Ux)$ only depends on the projection of $x$ onto a hidden subspace $\spn(U)$, we show that gradient descent learns features that span $\spn(U)$. Leveraging these features, gradient descent can reach vanishing training loss with a very small network which guarantees good generalization performance. See \Cref{sec:sketch:samplecomplexity}.
 	\item \textbf{Improved Sample Complexity:} Using classical generalization theory, we demonstrate that when $f^\star: \R^d \to \R$ is a polynomial of degree $p$ which depends on $r$ relevant dimensions (\Cref{assumption:rdim}), gradient descent on a two layer neural network learns $f^\star$ with only $n \asymp d^2 r + dr^p$ samples. This contrasts with the lower bound for random features/NTK methods which require $d^p$ samples to learn \textit{any} degree $p$ polynomial. See \Cref{thm:sample_complexity}.
 	
 	\item \textbf{Transfer learning:} We show that when the target task ground truth is $f_{\text{target}}^\star (x) = \tilde g(Ux)$, then by simply retraining the network head, gradient descent learns $f_{\text{target}}^\star$ with only $N \asymp r^{p}$ target samples and width $m \asymp r^p$, which is independent of the ambient dimension $d$. In contrast, learning from scratch would require $N \asymp d^{\Omega(p)}$ target samples.
 	\item \textbf{Lower Bound:} Finally, we show a lower bound that demonstrates our non-degeneracy assumption (Assumption \ref{assumption:C2rank}) is strictly necessary. Without the non-degeneracy, there is a family of polynomials which depend on single relevant dimensions (i.e. of the form $f^\star(x) = g(u \cdot x)$) which cannot be learned with fewer than $n \asymp d^{p/2}$ by any gradient descent based learner.
 \end{enumerate}

\section{Setup}
\label{secsetup}
\subsection{Input Distribution and Target Function}
In this paper we focus on learning a target function $f^\star(x): \R^d \to \R$ over the input distribution $\mathcal{D} := N(0,I_d)$. We assume that $f^\star$ is a degree $p$ polynomial, normalized so that $\E_{x \sim \mathcal{D}}[f^\star(x)^2] = 1.$ We will attempt to learn $f^\star$ given $n$ i.i.d. datapoints $\qty{x_i,y_i}_{i \in [n]}$ with 
\begin{align*}
	x_i \sim \mathcal{D} \qc y_i = f^\star(x_i) + \epsilon_i \qand \epsilon_i \sim \{-\varsigma,\varsigma\}
\end{align*}
where $\varsigma^2$ controls the strength of the label noise.

In order to make the problem of learning $f^\star$ tractable, additional assumptions are necessary. The set of degree $p$ polynomials in $d$ dimensions span a linear subspace of $L^2(\mathcal{D})$ of dimension $\Theta(d^p)$. Learning arbitrary degree $p$ polynomials therefore requires $n \gtrsim d^p$ samples. We follow \citet{chen2020learning,chen2020towards} in assuming that the ground truth $f^\star$ has a special low dimensional latent structure. Specifically, we assume that $f^\star$ only depends on a small number of relevant dimensions and that the expected Hessian is non degenerate. We show in \Cref{thm:CSQ_lower_bound} that this non degeneracy assumption is strictly necessary to avoid sample complexity $d^{\Omega(p)}$.
\begin{assumption}\label{assumption:rdim}
	There exists a function $g: \R^r \to \R$ and linearly independent vectors $u_1,\ldots,u_r$ such that for all $x \in \R^d$,
	\begin{align*}
		f^\star(x) = g(\langle x, u_1 \rangle,\ldots,\langle x, u_r \rangle).
	\end{align*}
	We will call $S^\star := \spn(u_1,\ldots,u_r)$ the principal subspace of $f^\star$. We will also denote by $\Pi^\star := \Pi_{S^\star}$ the orthogonal projection onto $S^\star$.
\end{assumption}

Note that \Cref{assumption:rdim} guarantees that for any $x$, $\spn(\nabla^2 f^\star(x)) \subset S^\star$. In particular, if we denote the average Hessian by $H := \E_{x \sim \mathcal{D}}[\nabla^2 f^\star(x)]$, we have that $\spn(H) \subset S^\star$ so that $H$ has rank at most $r$. The following non-degeneracy assumption states that $H$ has rank \textit{exactly} $r$.

\begin{assumption}\label{assumption:C2rank}
	$H := \E_{x \sim \mathcal{D}}[\nabla^2 f^\star(x)]$ has rank $r$, i.e. $\spn(H) = S^\star$.
\end{assumption}
We will also denote the normalized condition number of $H$ by $\kappa := \frac{\norm*{H^\dagger}}{\sqrt{r}}$.

\subsection{The Network and Loss}\label{sec:network}
Let $\sigma(x) = \relu(x) = \max(0,x)$, let $a \in \R^m$, $W \in \R^{m \times d}$, $b \in \R^m$, and let $\theta = (a,W,b)$. We define the neural network $f_\theta$ by
\begin{align*}
	f_\theta(x) &= a^T \sigma(Wx + b) = \sum_{j=1}^m a_j \sigma(w_j \cdot x + b_j),
\end{align*}
where $m$ denotes the width of the network. We use a symmetric initialization, so that $f_{\theta_0}(x) =0$~\citep{chizat2018note}. Explicitly, we will assume that $m$ is an even number and that 
\begin{align*}
a_j = -a_{m-j} \qc w_j = w_{m-j} \qand b_j = b_{m-j} \qquad \forall j \in [m/2].	
\end{align*}
We will use the following initialization:
\begin{align*}
	a_j \sim \qty{-1,1} \qc w_j \sim N\qty(0,\frac{1}{d} I_d) \qand b_j = 0.
\end{align*}
We note that while we focus on such symmetric initialization for clarity of exposition, our results also hold with small random initialization that is not necessarily symmetric. This holds by simple modifications in the proof accounting for the small nonzero output of the network at initialization. We will also denote the empirical and population losses by $\mathcal{L}(\theta)$ and $\mathcal{L}_\mathcal{D}(\theta)$ respectively:
\begin{align*}
	\mathcal{L}(\theta) = \frac{1}{n} \sum_{i=1}^n \left(f(x_i) - y_i\right)^2 \qand \mathcal{L}_\mathcal{D}(\theta) = \E_{x \sim \mathcal{D}}\qty[\left(f(x) - f^\star(x)\right)^2].
\end{align*}

\subsection{Notation}

We use $\lesssim, O(\cdot), \Omega(\cdot)$ to denote quantities that are related by absolute constants and we treat $p,\varsigma = O(1)$. We use $\tilde O, \tilde \Omega$ to hide additional dependencies on $\polylog(mnd)$. We denote the $L^1(\mathcal{D})$ and $L^2(\mathcal{D})$ losses of a function $f$ by $\E_{x,y}\abs{f(x) - y}$ and $\E_{x,y}\qty(f(x) - y)^2$ respectively where $x \sim \mathcal{D}$, $y = f^\star(x) + \epsilon$, and $\epsilon \sim \qty{\pm \varsigma}$.

\section{Main Results}
Before we formally state our main result let us specify the exact form of gradient-based training we use in our theory.
\begin{algorithm}
	\SetKwInOut{Input}{Input}
	\SetKwBlock{Preprocess}{preprocess data}{end}
	\SetKw{Reinitialize}{re-initialize}
	
	\Input{Learning rates $\eta_t$, weight decay $\lambda_t$, number of steps $T$}
	
	\Preprocess{
	$\alpha \leftarrow \frac{1}{n} \sum_{i=1}^n y_i$, $\beta \leftarrow \frac{1}{n} \sum_{i=1}^n y_i x_i$
	
	$y_i \leftarrow y_i - \alpha - \beta \cdot x_i$ for $i=1,\ldots,n$
	}
	$W^{(1)} \leftarrow W^{(0)} - \eta_1 [\nabla_{W} \mathcal{L}(\theta) + \lambda_1 W]$
	
	\Reinitialize{ $b_j \sim N(0,1)$ }
	
	\For{$t=2$ \KwTo $T$}{
		$a^{(t)} \leftarrow a^{(t-1)} - \eta_t [\nabla_{a} \mathcal{L}(\theta^{(t-1)}) + \lambda_t a^{(t-1)}]$
	}
	
	\Return Prediction function $x \to \alpha + \beta \cdot x + a^T \sigma(Wx + b)$
	\caption{Gradient-based training}\label{algorithm:layerwise_training}
\end{algorithm}

\noindent With this algorithm in place, we are now ready to state our main result.
\begin{theorem}\label{thm:sample_complexity} Consider the data model, network and loss per Section \ref{secsetup} and train the network via $\Cref{algorithm:layerwise_training}$ with parameters $\eta_1 = \tilde O(\sqrt{d})$, $\lambda_1 = \eta_1^{-1}$, and $\eta_t = \eta,\lambda_t = \lambda$ for $t \ge 2$. Furthermore, assume $n \ge \tilde \Omega(d^2 \kappa^2 r)$ and $d \ge \tilde\Omega(\kappa r^{3/2})$. Then, there exists $\lambda$ such that if $\eta$ is sufficiently small, $T = \tilde\Theta(\eta^{-1}\lambda^{-1})$ and $\theta^{(T)}$ denotes the final iterate of $\Cref{algorithm:layerwise_training}$, we have that the excess population loss in $L^1(\mathcal{D})$ is bounded with probability at least $0.99$ by
	\begin{align*}
	&\E_{x,y}\abs{f_{\theta^{(T)}}(x) - y} - \varsigma \le \tilde O\qty(\sqrt{\frac{d r^p \kappa^{2p}}{n}} + \sqrt{\frac{r^p \kappa^{2p}}{m}} + \frac{1}{n^{1/4}}).
	\end{align*}
\end{theorem}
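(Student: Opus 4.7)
The plan is to decompose the analysis along the two phases of \Cref{algorithm:layerwise_training}: a single-step \emph{feature learning} phase that reshapes $W$, followed by a convex \emph{output-layer training} phase on $a$ with $W$ frozen. The central claim enabling the improved sample complexity is that after the first gradient update the rows of $W^{(1)}$ acquire approximately Gaussian distributions supported on the principal subspace $S^\star$, so that training $a$ thereafter is effectively a random-feature regression on an $r$-dimensional problem rather than a $d$-dimensional one. I would package Phase~1 as a structural lemma producing a decomposition $w_j^{(1)} = w_j^\flat + \xi_j$ with $w_j^\flat \in S^\star$ and $\|\xi_j\|$ small, and Phase~2 as convex optimization plus Rademacher generalization applied to this structure.

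\textbf{Phase 1: the single gradient step.} Symmetric initialization gives $f_{\theta_0}\equiv 0$, and the choice $\eta_1\lambda_1 = 1$ erases the initialization contribution so that $w_j^{(1)} = -\eta_1 \nabla_{w_j}\mathcal{L}(\theta_0) = \tfrac{2\eta_1 a_j}{n}\sum_i \tilde y_i\,\sigma'(w_j^{(0)}\cdot x_i)\,x_i$, where $\tilde y_i$ are the preprocessed labels (constant and linear components removed). I would first compute the population version of this vector: conditioning $x$ on $\Pi^\star x$ and applying a Stein-type identity, the $S^{\star\perp}$-component is proportional to $\Pi^{\star\perp} w_j^{(0)}$ with a small prefactor, while the $S^\star$-component is a non-degenerate transformation of $Hw_j^{(0)}$ (using Hermite expansion of $\sigma'$ together with the observation that the quadratic coefficient of $f^\star$ projects exactly onto $H$). \Cref{assumption:C2rank} ensures this in-subspace signal has magnitude $\gtrsim 1/(\kappa\sqrt r)$. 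I would then bound the empirical--population gap by matrix Bernstein / vector concentration for sub-exponential summands: the $\mathbb{R}^d$-wide fluctuation after rescaling by $\eta_1\asymp\sqrt d$ is $\tilde O(\sqrt{d^2/n})$, and requiring it to be dominated by the signal yields precisely $n\gtrsim \tilde\Omega(d^2\kappa^2 r)$.

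\textbf{Phase 2: approximation and convex training of $a$.} Once the biases are re-randomized $b_j\sim N(0,1)$ and $(W^{(1)},b)$ are frozen, $f_\theta$ is linear in $a$, so $\mathcal{L}(\theta)$ is a convex quadratic in $a$ and gradient descent with weight decay $\lambda$ converges to the ridge optimum in $T=\tilde\Theta(\eta^{-1}\lambda^{-1})$ steps. The expressivity lemma to prove is: there exists $a^\star$ with $\|a^\star\|_1^2\lesssim r^p\kappa^{2p}$ (up to polylogs) such that $\mathbb{E}_x[(a^{\star\top}\sigma(W^{(1)}x+b)-f^\star(x))^2]\lesssim r^p\kappa^{2p}/m$. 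Because the $w_j^\flat$ are approximately Gaussians supported on $S^\star$, and $f^\star(x)=g(Ux)$ depends only on the $r$-dimensional coordinate $\Pi^\star x$, this reduces to the classical random-feature approximation of a degree-$p$ polynomial on $\mathbb{R}^r$, where $m\gtrsim r^p$ neurons suffice and the $\ell_1$-norm is controlled by the Hermite coefficients of $g$. To transfer this bound from the idealized features $w_j^\flat$ to the actual features $w_j^{(1)}=w_j^\flat+\xi_j$ I would use Lipschitzness of $\sigma$ after smoothing by the freshly random biases, which averages out the pointwise perturbation to an $L^2$ error that ultimately contributes the $n^{-1/4}$ term after passing to $L^1$. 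Finally, the $\sqrt{dr^p\kappa^{2p}/n}$ term follows from the Rademacher complexity of the class $\{x\mapsto a^\top\sigma(W^{(1)}x+b):\|a\|_1\le \tilde O(\sqrt{r^p\kappa^{2p}})\}$, which for Gaussian $x\in\mathbb{R}^d$ scales as $\|a\|_1\sqrt{d/n}$.

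The main obstacle is the feature-learning step. Three things must be controlled simultaneously: (i)~the $S^{\star\perp}$-component of $w_j^{(1)}$ must be genuinely smaller than the $S^\star$-component in a distribution-dependent sense (not just in expectation), which requires exploiting non-degeneracy of $H$ rather than just its span; (ii)~the joint distribution of the useful components $w_j^\flat$ must be characterized finely enough---essentially as a non-isotropic Gaussian on $S^\star$ driven by $Hw_j^{(0)}$---to feed into the $r$-dimensional random-feature approximation; and (iii)~finite-sample concentration of the empirical gradient has to be pushed down to $n\asymp d^2 r$, close to the edge of what matrix Bernstein allows after multiplication by $\eta_1\asymp\sqrt d$. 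The rest of the argument (convex optimization of $a$, Rademacher bounds, and the classical random-features approximation on $\mathbb{R}^r$) is comparatively routine once the Phase~1 output is in hand.
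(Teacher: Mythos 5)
Your high-level decomposition (single feature-learning step reshaping $W$, then convex ridge regression on $a$, then Rademacher generalization) is the same as the paper's, and your Phase~1 analysis is correct in spirit: the paper indeed shows $\nabla_{w_j}\mathcal{L}_\mathcal{D}(\theta_0) \approx -2a_j H w_j/\sqrt{2\pi}$ via Hermite/Stein expansion (\Cref{lem:feature_error_bound}) and derives $n \gtrsim d^2\kappa^2 r$ from uniform gradient concentration (\Cref{lem:gradient_concentration}). However, there are three concrete gaps in the Phase~2 plan. First, the norm and the Rademacher bound do not line up. The paper controls $\norm{a^\star}_2^2 \lesssim r^p\kappa^{2p}\iota^{6p}/m$ (\Cref{lem:perfect_features_finite_m}) and invokes $\mathfrak{R}_n(\mathcal{F}) \lesssim B_a B_w\sqrt{md/n}$ for the $\ell_2$-constrained class (\Cref{lem:rademacher_2layernn}), giving $\sqrt{dr^p\kappa^{2p}/n}$. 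Your $\ell_1$ bound $\norm{a^\star}_1 \lesssim \sqrt{r^p\kappa^{2p}}$ is the wrong scale (the construction $a^\star_j = \tfrac{1}{m}h(a_j,w_j,b_j)$ with $|h|\lesssim r^p\kappa^{2p}$ gives $\norm{a^\star}_1 \lesssim r^p\kappa^{2p}$), and the $\ell_1$-class Rademacher complexity over bounded features has no $\sqrt{d}$ factor, so it would not reproduce the stated rate. Second, the $n^{-1/4}$ term does not come from Lipschitz smoothing of feature perturbations; in the paper it comes entirely from the noise cross-term in \Cref{lem:thetastar_noisy_loss}, $\tfrac{1}{n}\langle\delta,\epsilon\rangle \lesssim \sqrt{\iota/n}$, followed by the crude $L^2\to L^1$ step $\sqrt{\varsigma^2 + O(\sqrt{1/n})} - \varsigma = O(n^{-1/4})$.

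Third, you target the population approximation $\E_x[(f_h(x)-f^\star(x))^2]$, but what the downstream ridge-regression argument needs is a small \emph{empirical} loss of $a^\star$ on the same $\{x_i\}$ that produced $W^{(1)}$; these weights are data-dependent, so this is not a routine step. The paper handles the dependence explicitly (\Cref{lem:gn_xi_concentration} bounds $g_n(w)\cdot x_j$ despite the coupling) and then constructs $a^\star$ directly against the training set (\Cref{lem:perfect_features_finite_m}). Relatedly, your plan to approximate by treating $w_j^\flat$ as Gaussian features on $\R^r$ is a reasonable intuition, but the paper avoids this extra idealization by working with the empirical features $g_n(w)$ directly and showing the moment matrix $\mat(\E_w[(\Pi^\star g_n(w))^{\otimes 2k}])$ is bounded below (\Cref{lem:g_min_sval}, which is where \Cref{assumption:C2rank} and the condition number $\kappa$ enter quantitatively); the pseudoinverse construction in \Cref{corollary:z_construction} then reproduces any degree-$p$ polynomial in $\Pi^\star x$ without needing the feature distribution to be Gaussian. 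You should either adopt that route or carefully justify both the Gaussian approximation and the transfer from idealized to actual features, tracking how the error feeds into the final bound.
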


It is useful to note that the use of $\lambda$ in the algorithm corresponds to the common practice of weight decay and its value is chosen in such a way that $\norm{a^{(T)}} \le B_a$, i.e. to solve a constrained minimization problem (see \Cref{sec:sketch:samplecomplexity}). In practice, one simply tunes the hyperparameter $\lambda$ in order to achieve the desired tradeoff between training and test loss.

An intriguing aspect of the above result is that despite the fact that $f^\star$ may be of arbitrarily high degree, learning $f^\star$ requires only $n \gtrsim dr^p + d^2 r$ samples and only requires a very small network with $m \gtrsim r^p$. We note that our dependence on the latent dimension $r$ is near optimal as the minimax sample complexity even when the principal subspace $S^\star$ is known is $\Theta(r^p)$. 

We show in \Cref{thm:transferlearning} that by resampling the data after the first step, the sample complexity can be further reduced to $d^2 r + r^p$, dropping a factor of $d$ from the second term. The extra factor of $d$ results from the dependence between the data used in the first and second stages and we believe that a more careful analysis could remove this additional factor.

 We contrast \Cref{thm:sample_complexity} with the following lower bound for learning a function class which satisfies \Cref{assumption:rdim} with $r=1$ but does not satisfy \Cref{assumption:C2rank}.

\begin{theorem}\label{thm:CSQ_lower_bound}
	For any $p \ge 0$, there exists a function class $\mathcal{F}_p$ of polynomials of degree $p$, each of which depends on a single relevant dimension, such that any correlational statistical query learner using $q$ queries requires a tolerance $\tau$ of at most
	\begin{align*}
		\tau \le \frac{\log^{p/4} \qty(qd)}{d^{p/4}}
	\end{align*}
	in order to output a function $f \in \mathcal{F}_p$ with $L^2(\mathcal{D})$ loss at most $1$.
\end{theorem}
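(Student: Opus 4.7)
The plan is to follow the standard correlational statistical query (CSQ) lower bound template via a ``many nearly orthogonal hard functions'' construction, instantiated with degree-$p$ Hermite polynomials along directions drawn from a spherical packing. This is natural because Hermite polynomials composed with linear forms have extremely clean correlation structure under the Gaussian measure, and they vanish under the averaged Hessian (satisfying Assumption~\ref{assumption:rdim} with $r=1$ while violating Assumption~\ref{assumption:C2rank}).

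Concretely, let $h$ be the $L^2(N(0,1))$-normalized probabilist's Hermite polynomial of degree $p$, and for each $u \in S^{d-1}$ define $f_u(x) = h(\langle u, x\rangle)$. A direct Hermite computation gives $\E_{x \sim \mathcal{D}}[f_u(x) f_v(x)] = (u \cdot v)^p$ and, for $p\ge 3$, $\E_{x \sim \mathcal{D}}[\nabla^2 f_u(x)] = 0$, so the family satisfies Assumption~\ref{assumption:rdim} with $r=1$ but fails Assumption~\ref{assumption:C2rank}. I would take $\mathcal{F}_p = \{f_u : u \in \mathcal{N}\}$ where $\mathcal{N} \subset S^{d-1}$ is a packing with pairwise inner products at most $\epsilon$ in absolute value; a standard volumetric / random construction on the sphere gives $|\mathcal{N}| \ge \exp(c\, d\, \epsilon^{2})$.

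Next I would apply the classical CSQ argument. For any query $\phi$ with $\|\phi\|_{L^2(\mathcal{D})}\le 1$, write $S_\phi := \{u \in \mathcal{N} : |\langle \phi, f_u\rangle_{L^2(\mathcal{D})}| > \tau\}$ and, after flipping signs, estimate
\begin{align*}
\tau\, |S_\phi| \;\le\; \Big\langle \phi, \sum_{u \in S_\phi} \pm f_u \Big\rangle \;\le\; \Big\|\sum_{u \in S_\phi} \pm f_u\Big\|_{L^2(\mathcal{D})} \;\le\; \sqrt{|S_\phi| + |S_\phi|^2 \epsilon^p},
\end{align*}
where the last inequality expands the inner product and uses $|\langle f_u, f_v\rangle| \le \epsilon^p$ for $u\neq v$. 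Squaring yields $|S_\phi| \le 1/(\tau^2 - \epsilon^p)$. Hence across $q$ adaptively chosen queries, the oracle can meaningfully respond (i.e.\ return anything other than $0$) for at most $q/(\tau^2 - \epsilon^p)$ targets in $\mathcal{N}$. If this number is less than $|\mathcal{N}|$, there exists $u^\star$ on which the adversary can force identical query transcripts with every other direction, so the learner cannot identify $u^\star$; since the packing ensures $\|f_{u^\star} - f_v\|_{L^2(\mathcal{D})}^2 = 2 - 2(u^\star \cdot v)^p \ge 2 - 2\epsilon^p > 1$ for every other $v \in \mathcal{N}$, any output $f \in \mathcal{F}_p$ must then have loss exceeding $1$.

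Finally, I would balance the two terms by setting $\epsilon^2 = C\log(qd)/d$ with $C$ chosen large enough that $|\mathcal{N}| \ge \exp(cC\log(qd)) \ge q \cdot d^{p}$, which makes $q/|\mathcal{N}|$ negligible compared to $\epsilon^p$. The feasibility condition $\tau^2 - \epsilon^p \ge q/|\mathcal{N}|$ then collapses to $\tau^2 \lesssim \epsilon^p = (C\log(qd)/d)^{p/2}$, i.e.\ $\tau \le \log^{p/4}(qd)/d^{p/4}$, matching the claim. The only genuinely delicate step is the CSQ covering bound of the previous paragraph—everything else (Hermite identities, spherical packing bounds, parameter calibration) is routine; the one subtlety to track is that the packing must be fine enough so that all incorrect hypotheses violate the $L^2$ loss budget of $1$, but this is automatic once $\epsilon^p < 1/2$.
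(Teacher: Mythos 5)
Your proposal is correct and follows essentially the same route as the paper: both use the family $\{He_p(u\cdot x)/\sqrt{p!}\}$ indexed by a near-orthogonal spherical packing of size $e^{\Omega(\epsilon^2 d)}$, both prove the CSQ elimination bound $|S_\phi|\le 1/(\tau^2-\epsilon^p)$ via Cauchy--Schwarz (the paper's Lemma~\ref{lem:general_CSQ}), and both calibrate $\epsilon^2 \asymp \log(qd)/d$ to get the stated tolerance. The only differences are cosmetic (signed sums versus separate $A^\pm$ sets, and slightly different packaging of the final parameter balance).
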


Using the heuristic $\tau \approx \frac{1}{\sqrt{n}}$, which represents the expected scale of the concentration error, we get the immediate corollary that violating \Cref{assumption:C2rank} allows us to construct a function class which \textit{any neural network} with polynomially many parameters trained for polynomially many steps of gradient descent cannot learn without at least $n \gtrsim d^{p/2}$ samples. We emphasize that this is only a heuristic argument as concentration errors are random rather than adversarial.

 On the other hand, \Cref{thm:sample_complexity} shows that incorporating \Cref{assumption:C2rank} allows gradient descent to efficiently learn polynomials of arbitrarily high degree with only $d^2 r + d r^p$ samples.

The difference in sample complexity between \Cref{thm:sample_complexity} and \Cref{thm:CSQ_lower_bound} is that in \Cref{thm:sample_complexity}, our non-degeneracy assumption (\Cref{assumption:C2rank}) allows the network $f_\theta$ to extract useful features that aid robust learning and allowed learning high degree polynomials with $n \gtrsim d^2$ samples. \Cref{thm:CSQ_lower_bound} shows that violating this assumption allows us to construct a function class which cannot be learned without $d^{\Omega(p)}$ samples, demonstrating the necessity of \Cref{assumption:C2rank}.

 The fact that the network $f_\theta$ extracts useful features not only allows it to learn $f^\star$ efficiently, but also allows for efficient \textit{transfer learning}. In particular, \Cref{thm:transferlearning} shows that we can efficiently learn any target polynomial $g^\star(x)$ that depends on the same relevant dimensions as $f^\star$ with sample complexity \textit{independent of $d$} by simply truncating and retraining the head of the network:

\begin{theorem}\label{thm:transferlearning}
	Let $g^\star(x)$ be a degree $p$ polynomial with $\E_\mathcal{D}[g^\star(x)^2] = 1$ and $g(x) = g(\Pi^\star x)$ for all $x \in \R^d$. Let $D_N = \qty{(x_i,y_i)}_{i \in [N]}$ be a second dataset with $y_i = g(x_i) + \epsilon_i$. We retrain the last layer of the network $f_\theta$ in \Cref{thm:sample_complexity} with gradient descent with learning rate $\eta$ and weight decay $\lambda$, i.e. we will use the function class:
	\begin{align*}
		g_a(x) = a^T(W^{(1)}x + b)
	\end{align*}
	where $W^{(1)}$ is the second iterate of \Cref{algorithm:layerwise_training} on the pre-training dataset. Assume that $d = \tilde\Omega(\kappa r^{3/2})$. Then there exists $\lambda$ such that if the network is pretrained on $n \ge \tilde \Omega(d^2 \kappa^2 r)$ datapoints from $f^\star$ and $\eta$ is sufficiently small, the excess population loss in $L^1(\mathcal{D})$ after $T=\tilde\Theta(\eta^{-1}\lambda^{-1})$ steps is bounded with probability at least $0.99$ by
	\begin{align*}
		\E_{x,y}\abs{g_{a^{(T)}}(x) - y} - \varsigma \le \tilde O\qty(\sqrt{\frac{r^{p}\kappa^{2p}}{\min(m,N)}} + \frac{1}{N^{1/4}}).
	\end{align*}
\end{theorem}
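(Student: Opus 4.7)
Because $W^{(1)}$ and the biases $b$ are frozen after pretraining and only the head $a$ is trained under squared loss with weight decay, the transfer problem is a convex ridge regression in the random feature map $\phi(x) := \sigma(W^{(1)} x + b) \in \R^m$. With a small enough step size, gradient descent converges to the ridge minimizer $\hat a_\lambda$, and tuning $\lambda$ is equivalent to imposing a norm constraint $\|a\| \le B_a$ for some $B_a$ chosen below. The proof then breaks into three classical ingredients: (i) a representation/approximation bound exhibiting some $a^\star$ with $\|a^\star\| \le B_a$ that achieves small population loss against $g^\star$; (ii) a Rademacher-style generalization bound for the norm-constrained linear class on $N$ samples; and (iii) conversion from excess $L^2$ loss to excess $L^1$ loss plus the noise floor $\varsigma$.

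\textbf{Step 1: Features from pretraining.} The analysis that underlies \Cref{thm:sample_complexity} already establishes the feature-learning claim we need here: after the first weight update the rows of $W^{(1)}$ lie approximately in $S^\star$, with the in-subspace component shaped by the population Hessian $H$ whose spectrum is controlled by $\kappa$. Combined with the re-initialized biases $b_j \sim N(0,1)$, the features $\{\sigma(w_j^{(1)} \cdot x + b_j)\}_{j \in [m]}$ act, restricted to the $r$-dimensional subspace $S^\star$, like random ReLU features on an $r$-dimensional input, reweighted by the eigenbasis of $H$. Since $W^{(1)}$ here is the same matrix produced by \Cref{algorithm:layerwise_training} in \Cref{thm:sample_complexity}, Step 1 is a direct quotation of the pretraining analysis.

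\textbf{Step 2: Approximation and generalization.} Because $g^\star(x) = g(\Pi^\star x)$ is a degree-$p$ polynomial on the $r$-dimensional subspace $S^\star$, a Hermite expansion of $\sigma$ combined with a random-features universality argument on $S^\star$ produces a head $a^\star$ with $\|a^\star\|^2 \lesssim \tilde O(r^p \kappa^{2p}/m)$ and $\E_x(g_{a^\star}(x) - g^\star(x))^2 \lesssim \tilde O(r^p \kappa^{2p}/m)$; the $\kappa^{p}$ factor arises because inverting the Hessian-weighted feature measure on $S^\star$ to synthesize each of the $\Theta(r^p)$ basis polynomials of degree $\le p$ costs one factor of $\|H^\dagger\|^{1/2}$ per degree. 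Standard Rademacher bounds for $\ell_2$-constrained linear regression with subgaussian, polylog-bounded features then give an estimation error of order $\tilde O(r^p \kappa^{2p}/N)$ for the ridge minimizer on $N$ samples. Combining yields $\E_x(g_{\hat a}(x) - g^\star(x))^2 \lesssim \tilde O(r^p \kappa^{2p}/\min(m, N))$, with the $\min$ because $m$ bounds approximation and $N$ bounds estimation.

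\textbf{Step 3: From $L^2$ to $L^1$, and main obstacle.} A Cauchy--Schwarz manipulation gives $\E_{x,y}|g_{\hat a}(x) - y| - \varsigma \le \sqrt{\E_x(g_{\hat a}(x) - g^\star(x))^2}$, which accounts for the first term in the bound; the additive $\tilde O(N^{-1/4})$ is the concentration slack from uniformly controlling the empirical ridge objective along the gradient-descent trajectory and relating it to its population counterpart under the $L^1$ conversion. The principal obstacle is Step 2: one must upgrade the qualitative statement ``$W^{(1)}$ captures $S^\star$'' from the analysis of \Cref{thm:sample_complexity} into a quantitative approximation result with the correct $\kappa^p$ dependence, which requires tracking how the alignment of the first-step gradient with $H$ distorts the effective random-feature measure on $S^\star$, and verifying that $m \gtrsim r^p$ such perturbed features still span all polynomials of degree $\le p$ on $S^\star$ with coefficient norm $\tilde O(r^{p/2} \kappa^p / \sqrt{m})$.
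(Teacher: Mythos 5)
Your proposal follows essentially the same route as the paper: (i) import the feature-learning guarantee from the pretraining step to exhibit a good head $a^\star$ on the frozen features $\phi(x)=\sigma(W^{(1)}x+b)$ with $\|a^\star\|^2 \lesssim \tilde O(r^p\kappa^{2p}/m)$, (ii) pass from ridge to norm-constrained regression and apply a Rademacher bound for a fixed-feature linear class on the $N$ fresh samples, (iii) convert to $L^1$ and absorb the noise floor $\varsigma$. That is exactly the paper's decomposition, which cites Lemma~\ref{lem:perfect_features_finite_m} for $a^\star$ and Lemma~\ref{lem:rademacher_linear} for the estimation error. One point worth making explicit, which you leave implicit: the reason the $d$-dependence vanishes relative to Theorem~\ref{thm:sample_complexity} is that $D_N$ is independent of the pretraining data that determined $W^{(1)}$, so the Rademacher bound with $W$ held fixed (Lemma~\ref{lem:rademacher_linear}) applies, rather than the bound over a two-layer class (Lemma~\ref{lem:rademacher_2layernn}) which carries the extra $\sqrt{md}$ factor; your phrase ``polylog-bounded features'' hints at this but the independence is the operative fact. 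Also, the $\tilde O(N^{-1/4})$ term arises in the paper from concentrating the empirical noise energy $\frac1N\|\epsilon\|^2$ around $\varsigma^2$ and then taking the square root in the $L^1$ conversion (the $\sqrt{\iota/N}$ slack in Lemma~\ref{lem:thetastar_noisy_loss}), not from controlling the GD trajectory, but this is a cosmetic distinction that does not change the argument.
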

Learning $g^\star(x)$ therefore only requires $N,m \gtrsim r^p$, which is independent of the ambient dimension $d$. We note that this is minimax optimal for learning arbitrary degree $p$ polynomials even when the hidden subspace $S^\star$ is known. \Cref{thm:transferlearning} also shows that $n \gtrsim d^2 r$ pre-training samples are sufficient for gradient descent to learn the subspace $S^\star$ from the pre-training data.

\section{Related work}\label{section:relatedwork}

 A growing body of recent work show the connection between gradient descent on the full network and the Neural Tangent Kernel (NTK) \cite{jacot2018neural,oymak2019overparameterized,oymak2020towards,du2019gradient,arora2019fine, du2018gradient, lee2019wide}. Using this technique one can prove concrete results about neural network training~\citep{li2018learning,du2018gradient, du2019gradient,allen2018convergence,zou2018stochastic} and generalization~\citep{arora2019fine,oymak2019generalization,allen2019learning,cao2019generalization, oymak2021generalization} in the kernel regime. The key idea is that for a large enough initialization, it suffices to consider a linearization of the neural network around the origin. This allows connecting the analysis of neural networks with the well-studied theory of kernel methods. This is also sometimes referred to as \textit{lazy training}, as with such an initialization the parameters of the neural networks stay close to the parameters at initialization and these results can only show that neural networks are as powerful as shallow learners such as kernels. There is however growing evidence that this NTK-style analysis might not be sufficient to completely explain the success of neural networks in practice. The papers \cite{chizat2019lazy,woodworth2019kernel} provides  empirical evidence that by choosing a smaller initialization the test error of the neural network decreases. A similar performance gap between the performance of the  NTK and neural networks has been observed in \cite{ghorbani2020neural}. This NTK-style analysis however does not yield satisfactory results in the setting studied in this paper. In particular for learning the polynomials of the form we study in this paper, \cite{ghorbani2019linearized} demonstrates that one needs at least $d^p$ samples in the kernel regime. In contrast, our results only require on the order of $d^2$ samples.

Leveraging the fact that linearized models are not feature learners, 
\citet{ghorbani2019linearized} and \cite{wei2019regularization} showed precise upper and lower bounds on the sample complexity of NTK methods. They showed that because NTK is unable to learn new features, learning any polynomial in dimension $d$ of degree $p$ requires $n = \Theta(d^p)$ samples, which \textit{gives no improvement} over polynomial kernels. 
On the empirical front, the NTK linearization analysis is also lacking. \citet{arora2019exact} demonstrated that the kernel predictor loses more than $20\%$ in test accuracy relative to a deep network trained with SGD and state-of-art regularization on CIFAR-10. Our work is motivated by the contrast between these negative theoretical results for linearized NTK models and the spectacular empirical performance of deep learning.

The gap between such shallow learners and the full neural network has been established in theory ~\citep{wei2019regularization,allen2020backward,allen2019can,yehudai2019power,ghorbani2019limitations,woodworth2020kernel,dyer2019asymptotics,du2018power} and observed in practice~\citep{arora2019exact,lee2019wide,chizat2018note}.  There is an emerging literature on learning beyond the lazy/NTK regime in the small initialization setting. The papers \cite{li2018algorithmic,stoger2021small} shows that for the problem of low-rank reconstruction in a non-lazy regime with small random initialization gradient descent finds globally optimal solutions with good generalization capability. This is carried out by utilizing a spectral bias phenomena exhibited by the early stages of gradient descent from small random initialization that puts the iterates on the trajectory towards generalizable models. For the problem of  tensor decomposition it has also been shown that gradient descent with small initialization is able to leverage low-rank structure \citep{wang2020beyondNTK}. In \cite{li2020learningbeyondNTK},  it has been shown that neural networks with orthogonal weights  can be learned via SGD and outperform any kernel method. One crucial element in their analysis is that the early stage of the training is connected with learning the first and second moment of the data. Higher-order approximations of the training dynamics~\citep{bai2019beyond,bai2020taylorized} and the Neural Tangent Hierarchy~\citep{huang2019dynamics} have also been recently proposed towards closing this gap. None of the above papers, however, focus on learning polynomial representations efficiently via neural networks as carried out in this paper.

 Another line of work focuses on learning single activations such as the ReLU function. In this context \citep{YS19} shows that it is hard to learn a single ReLU activation via stochastic gradient descent with random features where as learning such activations is possible in a non-NTK regime \citep{Mahdi17,GKKT17,GKK19-neurips} again highlighting this important gap. In related work where the label also only depends on a single relevant direction~\citep{daniely2020learning}, the authors show that in the context of learning the parity function, gradient descent is able to efficiently learn the planted set. However, this is a result of the unbalanced data distribution which skews the gradient towards the planted set. In contrast, we consider isotropic Gaussian data so that no information can be extracted from the data distribution itself and features must be extracted from higher order correlations between the data and the labels. \citet{chen2020learning} also studied the problem of learning polynomials of few relevant dimensions. They provide an algorithm that learns polynomials of degree $p$ in $d$ dimensions that depends on $r$ hidden dimensions with $n \gtrsim C(r,p) d$ samples where $C(r,p)$ is an unspecified function of $r,p$ which is likely exponential in $r$. However, their algorithm is not a variant of gradient descent, and requires a clever spectral initialization. On the other hand, this work focuses on the ability of gradient descent to \textit{automatically} extract hidden features and learn representations from the data.

There is also a line of work \cite{mei2018mean,chizat_meanfield,mei2019mean,javanmard2020analysis,sirignano,wei2019regularization}, which is concerned with the mean-field analysis of neural networks. The insight is that for sufficiently large width the training dynamics of the neural network can be coupled with the evolution of a probability distribution described by a PDE. These papers use a smaller initialization than in the NTK-regime and, hence, the parameters can move away from the initialization. However, these results do not provide explicit convergence rates and require an unrealistically large width of the neural network. To the extent of our knowledge such an analysis technique has not been used to show efficient learning of polynomial representations using neural networks as carried out in this paper.

A concurrent line of work studied the feature learning ability of gradient descent in the mean field regime with data sampled from the boolean cube \citep{abbe2022merged}. The authors identified a necessary and sufficient condition for learning with sample complexity linear in $d$, dubbed the \textit{merged staircase property}, in the special case when the hidden weights of the two layer neural network are initialized at $0$. However, the zero initialization hinders the feature learning ability of the network. For example, the boolean function XOR violates the merged staircase property, however noisy XOR is known to be learnable by two layer neural networks with sample complexity linear in $d$ \citep{bai2019beyond, chen2020towardshierarchical}. In this work we study the impact that the nonzero initialization of the hidden weights has on the feature learning ability of neural networks.

~\\


\section{Proof Sketches}
\subsection{Proof of \Cref{thm:sample_complexity,thm:transferlearning}}\label{sec:sketch:samplecomplexity}
The proofs of \Cref{thm:sample_complexity,thm:transferlearning} are essentially identical so we will focus on \Cref{thm:sample_complexity}. We begin by noting that the symmetric initialization implies that $f_\theta(x) = 0$ for all $x \in \R^d$. This implies that the population gradient of each feature $w_j$ can be written as
\begin{align*}
	\nabla_{w_j} \mathcal{L}_\mathcal{D}(\theta) = \E_{x \sim \mathcal{D}}\qty[2(f_\theta(x)-f^\star(x))\nabla_{w_j} f_\theta(x)] = -2\E_{x \sim \mathcal{D}}[f^\star(x)\nabla_{w_j} f_\theta(x)].
\end{align*}
Using the chain rule, we can further expand this as
\begin{align*}
	-2\E_{x \sim \mathcal{D}}[f^\star(x)\nabla_{w_j} f_\theta(x)] = -2a_j \E_x[f^\star(x) x \1_{w_j \cdot x \ge 0}].
\end{align*}
The main computation that drives \Cref{thm:sample_complexity,thm:transferlearning} is that for any unit vector $w \in \R^d$, the expression $\E_x[f^\star(x) x \1_{w \cdot x \ge 0}]$ has a natural series expansion in powers of $w$, which can be computed explicitly in terms of the Hermite expansions of $f^\star$ and $\sigma'$. Explicitly, if $C_k = \E_x[\nabla^k f^\star(x)]$ is a symmetric $k$ tensor denoting the expected $k$th derivative of $f^\star$ and $c_k$ are the Hermite coefficients of $\sigma'(x) = \1_{x \ge 0}$,
\begin{align}
	\E_x[f^\star(x) x \1_{w \cdot x + b \ge 0}] \nonumber
	&= \sum_{k=0}^{p-1} \frac{1}{k!}\left[c_{k+1} C_{k+1}(w^{\otimes k}) + c_{k+2} w C_k(w^{\otimes k})\right] \nonumber \\
	&= \underbrace{\frac{H w}{\sqrt{2\pi}}}_{O(d^{-1/2})} + \underbrace{\frac{1}{2}\qty[c_3 C_3(w,w) + c_4 w C_2(w,w)]}_{O(d^{-1})} + \underbrace{\frac{1}{6}\qty[\cdots]}_{O(d^{-3/2})} + \ldots \label{eq:gradient_series}
\end{align}
where we note that $C_2 = \E_x[\nabla^2 f^\star(x)] = H$. We emphasize that because $w$ is a unit vector, its inner product with any fixed unit vector is of order $d^{-1/2}$ so temporarily ignoring factors of $r$, $\norm{C_{k+1}(w^{\otimes k})},\abs{C_k(w^{\otimes k})} = O(d^{-\frac{k}{2}})$. Therefore \Cref{eq:gradient_series} is an asymptotic series in $d^{-1/2}$. As $k$ increases, each term in \Cref{eq:gradient_series} reveals more information about $f^\star$. However, this information is also better hidden. A standard concentration argument shows that extracting information from the $C_k$ term in this series requires $n \ge d^k$ samples. This paper focuses on the first term in this expansion, $\frac{H w}{\sqrt{2\pi}}$, which requires $n \ge d^2$ samples to isolate. We directly truncate this series expansion:
\begin{lemma}\label{lem:feature_error_bound}
	With high probability over the random initialization,
	\begin{align*}	
		\nabla_{w_j} \mathcal{L}_\mathcal{D}(\theta) = -2a_j\frac{Hw_j}{\sqrt{2\pi}} + \tilde O\qty(\frac{\sqrt{r}}{d}).
	\end{align*}
\end{lemma}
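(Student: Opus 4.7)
My plan is to derive the identity by evaluating the population gradient in closed form via a Hermite expansion, then identifying the $Hw_j/\sqrt{2\pi}$ leading contribution once the preprocessing is accounted for, and finally bounding the higher-order tail. Concretely, I would (i) use the symmetric initialization and the chain rule to reduce the gradient to $-2a_j\,\E_x[f^\star(x)\,x\,\sigma'(w_j\cdot x)]$, (ii) derive the asymptotic series shown in \eqref{eq:gradient_series}, (iii) show that the preprocessing kills the lowest-order terms so that the leading surviving contribution is exactly $Hw_j/\sqrt{2\pi}$, and (iv) control the tail using the low-rank structure of Assumption~\ref{assumption:rdim} together with sharp Gaussian concentration.

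\paragraph{Deriving the Hermite series.} For step (ii), I would combine two ingredients. First, Stein's lemma coordinate-wise gives $\E_x[f^\star(x)\,x_i\,\sigma'(w\cdot x)] = \E[\partial_i f^\star(x)\,\sigma'(w\cdot x)] + w_i\,\E[f^\star(x)\,\sigma''(w\cdot x)]$, with the second term handled via a direct Hermite expansion so as to avoid the delta at zero. Second, expanding $\sigma'$ in the probabilist's Hermite basis and using the standard identity $\E[g(x)\,\mathrm{He}_k(w\cdot x)] = \langle\E[\nabla^k g(x)],\,w^{\otimes k}\rangle$ for $\|w\|=1$ (a consequence of iterating Stein) turns both pieces into contractions of the $C_k$ tensors against $w^{\otimes k}$. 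Reindexing and collecting terms yields exactly \eqref{eq:gradient_series}, and a direct computation of the leading Hermite coefficients pins down the constant $1/\sqrt{2\pi}$ multiplying $Hw_j$.

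\paragraph{Preprocessing and the leading term.} The $k=0$ piece of the series involves $C_0 = \E[f^\star]$ and $C_1 = \E[\nabla f^\star]$. By Stein, $C_1 = \E[f^\star(x)\,x]$, so the preprocessing in \Cref{algorithm:layerwise_training} --- which estimates $\alpha\to C_0$ and $\beta\to C_1$ and subtracts them from the labels --- is precisely what is needed to replace $f^\star$ by an effective target with $C_0=C_1=0$, up to $\tilde O(1/\sqrt n)$ concentration error that is absorbed into the advertised remainder under $n\ge\tilde\Omega(d^2 r)$. The $C_1$-dependent part of the $k=1$ term therefore vanishes, and the surviving piece collapses to $Hw_j/\sqrt{2\pi}$.

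\paragraph{Tail bound and main obstacle.} For $k\ge 2$, I would use Assumption~\ref{assumption:rdim} to rewrite $C_k(w_j^{\otimes k}) = C_k((\Pi^\star w_j)^{\otimes k})$, confining each contraction to the $r$-dimensional subspace $S^\star$. Since $w_j\sim N(0,I_d/d)$, chi-squared concentration gives $\|\Pi^\star w_j\| = \tilde O(\sqrt{r/d})$ with high probability, which immediately controls each $k\ge 3$ term by $\tilde O((r/d)^{k/2}) \le \tilde O(\sqrt r/d)$. The crux --- and the main technical obstacle --- is the $k=2$ contribution $\tfrac{1}{2}c_3\,H(w_j,w_j)\,w_j$, where a naive operator-norm bound yields only $O(r/d)$. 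To obtain the sharper $\tilde O(\sqrt r/d)$, I would exploit that the normalization $\E[f^{\star 2}]=1$ forces $\|H\|_F = O(1)$ via Parseval on the Hermite expansion of $f^\star$, which implies $|\operatorname{tr}(H)|/d \le \sqrt r\,\|H\|_F/d = \tilde O(\sqrt r/d)$. Combined with Hanson--Wright concentration on the Gaussian quadratic form $w_j^\top H w_j$ (which gives fluctuations of order $\tilde O(\|H\|_F/d) = \tilde O(1/d)$ around its mean), this controls $|H(w_j,w_j)|$ at the $\tilde O(\sqrt r/d)$ scale; multiplying by $\|w_j\|\approx 1$ and summing over $k\ge 2$ yields the advertised remainder.
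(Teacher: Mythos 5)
Your overall route --- Stein/Hermite expansion of $\E_x[f^\star(x)\,x\,\sigma'(w\cdot x)]$, preprocessing killing the $C_0,C_1$ contributions, and isolating $Hw/\sqrt{2\pi}$ as the leading surviving term --- is the paper's route. Your treatment of the $k=2$ contribution is also essentially the paper's argument in disguise: the paper's Corollary~\ref{cor:tensor_power_r} specialized to $k=2$ says $\E_w\abs{w^\top H w}^2 \lesssim r\norm{H}_F^2/d^2$, and the reason an $r$ (rather than $r^2$) appears is exactly your observation that $\abs{\operatorname{tr}(H)}\leq\sqrt r\,\norm{H}_F$; the polynomial tail bound (Lemma~\ref{lem:poly_tail}) then plays the role of Hanson--Wright. (Minor slip: the coefficient at $k=2$ is $c_4$, not $c_3$; $c_3=0$ since $\sigma(x)-x/2$ is even so $\relu$ has no odd Hermite components beyond $He_1$.)

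The genuine gap is in the $k\ge 3$ tail. You bound $\norm{C_{k+1}(w^{\otimes k})}$ by $\norm{C_{k+1}}\cdot\norm{\Pi^\star w}^k\lesssim(r/d)^{k/2}$, which is a deterministic bound that only uses $\norm{\Pi^\star w_j}\approx\sqrt{r/d}$ and ignores averaging \emph{within} $S^\star$. For $k=3$ this gives $r^{3/2}/d^{3/2}$, and $r^{3/2}/d^{3/2}\le\sqrt r/d$ only when $r\le\sqrt d$; the paper's standing assumption $d\gtrsim\kappa r^{3/2}$ permits $r$ up to roughly $d^{2/3}$, so your bound can exceed the advertised remainder in the allowed regime. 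This is precisely the place where the paper uses its sharper tensor concentration (Corollary~\ref{cor:tensor_power_r} and Corollary~\ref{lem:sphere_power_frobenius_bound}): for a rank-$r$ symmetric tensor, $\E_w\norm{T(w^{\otimes k})}_F^2\lesssim r^{\lfloor k/2\rfloor}\norm{T}_F^2/d^k$, which is $\sqrt{r^{\lfloor k/2\rfloor}/d^k}$ at scale versus your $r^{k/2}/d^{k/2}$; for $k=3$ that is $\sqrt r/d^{3/2}$ versus your $r^{3/2}/d^{3/2}$, a full factor of $r$. The fix is to run for every $k\ge 3$ the same stochastic argument you used for $k=2$ --- compute the second moment of $T(w^{\otimes k})$ using $\E[w^{\otimes 2k}]$ and then apply a polynomial tail bound --- rather than the deterministic operator-norm bound. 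Once that is done, the argument matches the paper's.
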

Note that the remainder term, of order $d^{-1}$, contains all higher order terms in the series expansion. 

Recall that $H = \E_{x \sim \mathcal{D}}[\nabla^2 f^\star(x)]$ is the average Hessian of $f^\star$ with respect to $\mathcal{D}$. Because $f^\star$ depends only on the subspace $S^\star$, this implies that up to higher order terms, the population gradient at initialization already points each feature vector $w_j$ towards the principal subspace $S^\star$. In addition, \Cref{assumption:C2rank} guarantees that the gradients at initialization span the principal subspace $S^\star$. 

However, it is also important to note that the population gradient is bounded by $\norm{\nabla_{w_j} \mathcal{L}_\mathcal{D}(\theta)} = O(d^{-1/2})$ and we only have access to the empirical gradient $\nabla_{w_j} \mathcal{L}(\theta)$. As mentioned above, extracting the necessary subspace information from $\nabla_{w_j} \mathcal{L}_\mathcal{D}(\theta)$ to learn $f^\star$ therefore requires $n \gtrsim d^2$ samples, which is the dominant term in our final sample complexity result.

Once we show that the gradient at initialization contains all the relevant features, we note that after the first step of gradient descent,
\begin{align*}
	W^{(1)} = W^{(0)} - \eta_1[\nabla_W \mathcal{L}(\theta^{(0)}) + \eta_1^{-1} W] = -\eta_1 \nabla_W \mathcal{L}(\theta^{(0)}).
\end{align*}
After the first step, the model therefore resembles a random feature model with random features $\{Hw\}_{w \in S^{d-1}} \subset S^\star$. Previous results have shown that in these linearized regimes, e.g. random feature models/NTK, learning degree $p$ polynomials requires $n \gtrsim d^p$ samples and width $m \gtrsim d^{p}$. As our ``random features'' are now constrained to the hidden subspace $S^\star$, which has dimension $r$, we should expect that our sample complexity improves to $n \gtrsim r^p$.

The remainder of \Cref{algorithm:layerwise_training} runs ridge regression on the network head $a$ with fixed features $x \to \sigma(W^{(1)}x + b)$. We can directly analyze the generalization of this algorithm using standard techniques from Rademacher complexity. In particular, a high level sketch of the remainder of the proof goes as follows:
\begin{enumerate}
	\item (\Cref{sec:appendix:random_feature_approx}): We use the features from \Cref{lem:feature_error_bound} to construct a vector $a^\star \in \R^m$ such that
	\begin{align*}
		\mathcal{L}(a^\star,W^{(1)},b) \ll 1 \qand \norm{a^\star} = \tilde O\qty(\frac{r^p \kappa^{2p}}{\sqrt{m}}).
	\end{align*}
	
	\item (\Cref{sec:thm1proof}): We show the equivalence between ridge regression and norm constrained linear regression implies the existence of $\lambda > 0$ such that the $T$th iterate $a^{(T)}$ satisfies
	\begin{align*}
		\mathcal{L}(a^{(T)},W^{(1)},b) \ll 1 \qand \norm*{a^{(T)}} \le \norm{a^\star}.
	\end{align*}
	
	\item (\Cref{sec:thm1proof}): A standard Rademacher generalization bound for two layer neural networks bounds the population risk $\E_{x,y} \abs{f_{\theta^{(T)}}(x)-y}$ by the empirical risk $\frac{1}{n} \sum_{i=1}^n \abs{f_{\theta^{(T)}}(x_i) - y_i}$ and $\|a^{(T)}\|$ which are small from step 2.
\end{enumerate}

\subsection{Proof of \Cref{thm:CSQ_lower_bound}}\label{sec:sketch:CSQ}

Statistical query learners are a family of learners that can query values $q(x,y)$ and receive outputs $\hat q$ with $\abs{\hat q - \E_{x,y}[q(x,y)]} \le \tau$ where $\tau$ denotes the query tolerance \citep{goel2020superpolynomial,diakonikolas2020algorithms}. An important class of statistical query learners is that of correlational/inner product statistical queries (CSQ) of the form $q(x,y) = yh(x)$. This includes a wide class of algorithms including gradient descent with square loss. For example from \Cref{sec:sketch:samplecomplexity}, for a two layer neural network we have
\begin{align*}
	\nabla_{w_j} \mathcal{L}_\mathcal{D}(\theta) = \E_{x,y}[yh(x)] \qq{where} h(x) = -2a_j x \1_{w_j \cdot x + b_j \ge 0}.
\end{align*}

In order to prove \Cref{thm:CSQ_lower_bound}, we must construct a function class $\mathcal{F}_p$ such that inner product queries of the form $\E_{x,y}[yh(x)]$ provide little to no information about the target function. The standard approach is to construct a function class with small pairwise correlations, i.e. for $f \ne g \in \mathcal{F}_p$, $\abs{\E_x[f(x)g(x)]} \le \epsilon$ \citep{goel2020superpolynomial,diakonikolas2020algorithms}. The number of functions in the function class $\mathcal{F}_p$ and the size of the pairwise correlations $\epsilon$ directly imply a correlational statistical query lower bound:
\begin{lemma}[Modified from Theorem 2 in \citet{Szrnyi2009CharacterizingSQ}]\label{lem:general_CSQ}
	Let $\mathcal{F}$ be a class of functions and $\mathcal{D}$ be a data distribution such that 
	\begin{align*}
		\E_{x \sim \mathcal{D}}[f(x)^2] = 1 \qand \abs{\E_{x \sim \mathcal{D}}[f(x) g(x)]} \le \epsilon \qquad \forall f \ne g \in \mathcal{F}.
	\end{align*}
	Then any correlational statistical query learner requires at least $\frac{\abs{\mathcal{F}}(\tau^2-\epsilon)}{2}$ queries of tolerance $\tau$ to output a function in $\mathcal{F}$ with $L^2(\mathcal{D})$ loss at most $2-2\epsilon$.
\end{lemma}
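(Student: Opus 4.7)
The plan is to implement the standard adversarial CSQ lower bound in the style of Sz\H{o}r\'enyi: fix an adversary that answers $0$ to every query and show that too few queries leave too many candidate targets consistent with this all-zero transcript. First I would assume without loss of generality that every query $q(x,y) = y h(x)$ the learner asks satisfies $\E_{\mathcal{D}}[h(x)^2] \le 1$ (the tolerance $\tau$ is naturally measured at this scale; the general case follows by rescaling $h$). For any target $f^\star \in \mathcal{F}$ the true value of the query equals $\langle h, f^\star\rangle_{L^2(\mathcal{D})}$, so the adversary answering $0$ is legal against $f^\star$ precisely when $\abs{\langle h,f^\star\rangle} \le \tau$. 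After $q$ adaptively chosen queries $h_1,\ldots,h_q$, the set of targets still consistent with the all-zero transcript is
\[
C_q = \qty{ f \in \mathcal{F} \,:\, \abs{\langle h_i,f\rangle_{L^2(\mathcal{D})}} \le \tau \text{ for all } i \le q}.
\]

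The technical heart is a per-query bound: for any $h$ with $\norm{h}_{L^2(\mathcal{D})}\le 1$, the bad set $B_h := \{f \in \mathcal{F} : \abs{\langle h,f\rangle} > \tau\}$ has size strictly less than $1/(\tau^2-\epsilon)$ (the lemma is vacuous when $\tau^2 \le \epsilon$ since its conclusion is then non-positive). I would prove this by a sign-combining, near-orthogonality trick. Enumerating $B_h = \{f_1,\ldots,f_k\}$ and setting $\alpha_i = \sgn\langle h,f_i\rangle$, we estimate the quantity $\langle h, \sum_i \alpha_i f_i\rangle$ in two ways:
\[
\left\langle h, \sum_i \alpha_i f_i \right\rangle = \sum_i \abs{\langle h,f_i\rangle} > k\tau, \qquad \norm*{\sum_i \alpha_i f_i}^2 = \sum_i \norm{f_i}^2 + \sum_{i\ne j} \alpha_i \alpha_j \langle f_i,f_j\rangle \le k + k(k-1)\epsilon,
\]
using $\norm{f_i}^2 = 1$ and $\abs{\langle f_i,f_j\rangle}\le\epsilon$. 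Cauchy--Schwarz against $\norm{h}\le 1$ then forces $k^2 \tau^2 < k(1+k\epsilon)$, i.e.\ $k < 1/(\tau^2-\epsilon)$.

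Combining over the $q$ queries by a union bound yields $\abs{C_q} \ge \abs{\mathcal{F}} - q/(\tau^2-\epsilon)$. Since any two distinct $f \ne g \in \mathcal{F}$ satisfy $\norm{f-g}_{L^2(\mathcal{D})}^2 = 2 - 2\langle f,g\rangle \ge 2-2\epsilon$, a deterministic learner whose output $\hat f$ is determined by the fixed all-zero transcript can attain $L^2(\mathcal{D})$ loss at most $2-2\epsilon$ for \emph{every} $f^\star \in C_q$ only if $\abs{C_q}\le 1$: otherwise the adversary commits to $f^\star \in C_q \setminus \qty{\hat f}$, which forces $\norm{\hat f - f^\star}^2 \ge 2-2\epsilon$ with strict inequality except in a boundary case where the correlation saturates. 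Enforcing $\abs{C_q} \le 1$ gives $q \ge (\abs{\mathcal{F}}-1)(\tau^2-\epsilon) \ge \abs{\mathcal{F}}(\tau^2-\epsilon)/2$ whenever $\abs{\mathcal{F}} \ge 2$, matching the stated bound. Randomized learners reduce to this deterministic case via Yao's minimax principle applied to the uniform prior on $\mathcal{F}$, and adaptivity is harmless because the per-query bound on $\abs{B_h}$ holds for every $h$ regardless of the transcript so far.

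The main obstacle is the near-orthogonality step in the second paragraph: one must squeeze the bound $1/(\tau^2-\epsilon)$ out of the $\epsilon$-correlated system rather than the weaker $1/\tau^2$ one gets by ignoring cross-correlations, and this is precisely where the hypothesis $\abs{\langle f,g\rangle}\le\epsilon$ is cashed in. Everything else --- the choice of adversarial transcript, the union-bound counting, and the passage from ``outputs a low-loss function'' to ``identifies the target'' --- is standard bookkeeping, and the factor of $2$ in the statement is a convenient slack absorbing $\abs{\mathcal{F}}-1$ into $\abs{\mathcal{F}}/2$.
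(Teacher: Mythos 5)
Your proof follows the same Sz\H{o}r\'enyi-style adversarial argument as the paper: answer every query with $0$, and use near-orthogonality plus Cauchy--Schwarz to bound how many $f \in \mathcal{F}$ a single query can eliminate. The only difference is a minor optimization — you merge both sign classes into one bad set via $\alpha_i = \sgn\langle h, f_i\rangle$, saving the factor of $2$ the paper spends by treating $A_k^+$ and $A_k^-$ separately — which yields the same conclusion with a little extra slack.
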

To construct $\mathcal{F}_p$, we begin by showing that there are a large number of approximately orthogonal unit vectors in $S^{d-1}$:
\begin{lemma}\label{lem:approx_orthogonal_vectors}
	There exists an absolute constant $c$ such that for any $\epsilon > 0$, there exists a set $S$ of $\frac{1}{2} e^{c\epsilon^2d}$ unit vectors such that for any $v,w \in S$ such that $v \ne w$, we have $\abs{v \cdot w} \le \epsilon$.
\end{lemma}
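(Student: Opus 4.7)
The plan is to use the standard probabilistic method with alteration. Sample $N$ unit vectors $v_1,\ldots,v_N$ independently and uniformly from $S^{d-1}$, bound the expected number of pairs with large inner product, and then delete one vertex from each bad pair.

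The key analytical input is the concentration of inner products on the sphere. For independent $v,w \sim \unif(S^{d-1})$, the distribution of $v \cdot w$ coincides with the first coordinate of a uniform unit vector, whose density on $[-1,1]$ is proportional to $(1-t^2)^{(d-3)/2}$. A routine estimate of this density (or an equivalent sub-Gaussian argument via the projection of a standard Gaussian onto the sphere) yields an absolute constant $c_0 > 0$ with
\begin{equation*}
\P\bigl[\abs{v \cdot w} \ge \epsilon\bigr] \le 2 e^{-c_0 \epsilon^2 d}.
\end{equation*}
I would cite this (or record a one-line derivation) rather than redo the Gaussian tail calculation.

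Next, set $N = e^{c_0 \epsilon^2 d / 3}$ and let $X$ denote the number of pairs $\{i,j\}$ with $\abs{v_i \cdot v_j} > \epsilon$. Linearity of expectation gives
\begin{equation*}
\E[X] \le \binom{N}{2} \cdot 2 e^{-c_0 \epsilon^2 d} \le N^2 e^{-c_0 \epsilon^2 d} = e^{-c_0 \epsilon^2 d / 3},
\end{equation*}
so in particular $\E[X] \le N/2$ once $\epsilon^2 d$ exceeds an absolute constant. Hence some realization of the $v_i$ satisfies $X \le N/2$; deleting one vector from each bad pair leaves a set $S$ of at least $N/2 = \tfrac{1}{2} e^{c_0 \epsilon^2 d / 3}$ unit vectors whose pairwise inner products are all at most $\epsilon$ in absolute value. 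Taking $c := c_0/3$ gives the claim.

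The only subtlety is the regime where $\epsilon^2 d$ is so small that the target $\tfrac{1}{2} e^{c\epsilon^2 d}$ is below $1$; there the statement is vacuous, since a single unit vector trivially works. I do not anticipate a real obstacle — the argument is a direct application of concentration on the sphere plus alteration, and essentially matches the well-known lower bound on the size of an $\epsilon$-net complement (or, equivalently, on spherical codes).
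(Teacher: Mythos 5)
Your proof is correct and rests on exactly the same concentration input as the paper's — namely that for independent $v,w \sim \unif(S^{d-1})$, the inner product $v \cdot w$ is $O(1/d)$ sub-Gaussian, so $\P[\abs{v \cdot w} \ge \epsilon] \le 2e^{-c_0\epsilon^2 d}$. The only divergence is in the final bookkeeping: the paper samples $\frac{1}{2}e^{c\epsilon^2 d}$ vectors and applies a direct union bound over all $\binom{k}{2}$ pairs to conclude that with positive probability \emph{no} pair violates the bound, so no deletion is needed; you instead sample $N = e^{c_0\epsilon^2 d/3}$ vectors, bound the expected number of bad pairs, and delete one vector per bad pair (the alteration method). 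Both are standard probabilistic-method variants and both yield the lemma as stated. Your alteration route is slightly more conservative in the constant you extract (you lose a factor of $3$ in the exponent where the union bound loses only a factor of $2$), and it forces you to separately dispatch the degenerate regime $\epsilon^2 d = O(1)$, which the union-bound version sidesteps since there the target cardinality $\frac{1}{2}e^{c\epsilon^2 d}$ is already below $1$. Neither issue affects correctness, since the lemma permits an arbitrary absolute constant $c$.
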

The proof bounds the probability that randomly sampled unit vectors have a large inner product and existence then follows from the probabalistic method. Therefore for any $m$, we can find $m$ unit vectors in $\R^d$ such that their pairwise inner products are all bounded by $d^{-1/2}\sqrt{\log m}$. We combine this with the fact that if $f_u(x) = \frac{He_k(u \cdot x)}{\sqrt{k!}}$ where $He_k$ denotes the $k$th Hermite polynomial,
\begin{align*}
	\E_{x \sim \mathcal{D}}\qty[f_u(x)f_v(x)] = (u \cdot v)^k.
\end{align*}
Therefore $\abs{u \cdot v} \le d^{-1/2}\sqrt{\log m}$ implies $\abs{E_{x \sim \mathcal{D}}[f_u(x)f_v(x)]} \le d^{-k/2}(\log m)^{k/2}$. \Cref{thm:CSQ_lower_bound} then directly follows from \Cref{lem:general_CSQ} (see \Cref{sec:appendix:CSQproof} for a more detailed proof).

\section{Experiments}\label{section:experiments}
\subsection{Sample Complexity}\label{sec:experiment:sample}

In this section we present a toy example that clearly demonstrates the gap between kernel methods and gradient descent on two layer networks. For $u \in S^{d-1}$, consider the target function
\begin{align}
	f^\star_u(x) = g(u \cdot x) \qq{where} g(x) = \frac{He_2(x)}{2} + \frac{He_p(x)}{\sqrt{2p!}},\label{eq:experiment_fn}
\end{align}
which satisfies $E_{x \sim \mathcal{D}}[f_u^\star(x)^2] = 1$. Note that $f^\star$ only depends on the projection of $x$ onto a single relevant direction, $u$. We show in \Cref{sec:sketch:samplecomplexity} that gradient descent is capable of isolating the subspace spanned by $u$ and then fitting a one dimensional random feature model to $g$, and that this entire process requires $n \asymp d^2$ samples to generalize.

On the other hand, existing works \citet{ghorbani2019linearized,ghorbani2020neural} have shown that $n \asymp d^p$ samples are strictly necessary in order to learn $f^\star$ in the NTK or random features regime. The theory predicts that with $n < d^2$ samples, kernel regression will return the $0$ predictor and with $d^2 < n < d^p$ samples, kernel regression will return $\frac{1}{2}He_2(u \cdot x)$, incurring a $L^2(\mathcal{D})$ loss of $\frac{1}{2}$.

We empirically verify these predictions. We take $d=10$ and $p = 4$ and consider the function $f_{e_1}^\star(x) = \frac{He_2(x_1)}{2} + \frac{He_4(x_1)}{4\sqrt{3}}$. We use label noise $\sigma^2 = 1$ and attempt to learn $f^\star$ using \Cref{algorithm:layerwise_training}, a random feature model, and a linearized NTK model. All experiments are conducted on a two layer neural network with widths $m=100$ and $m=1000$. For each value of $n$, the weight decay parameter $\lambda$ is tuned on a holdout set of size $10^5$ and test accuracies are reported over a separate test set of size $10^5$. Errors bars reflect the mean and standard deviation over $10$ random seeds.

We note that while \Cref{algorithm:layerwise_training} easily converged to vanishing excess risk, even at width $m=100$, both the random features model and the neural tangent kernel model only managed to fit the quadratic term $\frac{1}{2} He_2(u \cdot x)$, as predicted by the theory in \citet{ghorbani2019linearized,ghorbani2020neural}.

The key to learning a function of the form $f^\star_u$ is to use the fact that the $\frac{1}{2}He_2(u \cdot x)$ component of $f_u^\star$ gives enough information to identify $u$. Afterwards, any random feature or kernel method can efficiently fit any sufficiently smooth univariate function $g: \R \to \R$ ontop of $u \cdot x$. Our analysis in \Cref{sec:sketch:samplecomplexity} shows that this is exactly the way that \Cref{algorithm:layerwise_training} learns $f_u^\star$ and this is reflected by the steep and sudden drop from trivial risk ($L^2 = 1$) to vanishing excess risk without plateauing at $L^2 = 0.5$ in \Cref{fig:experiments}.

\subsection{Transfer Learning}

\begin{figure}
	\centering
	\includegraphics[width=0.49\textwidth]{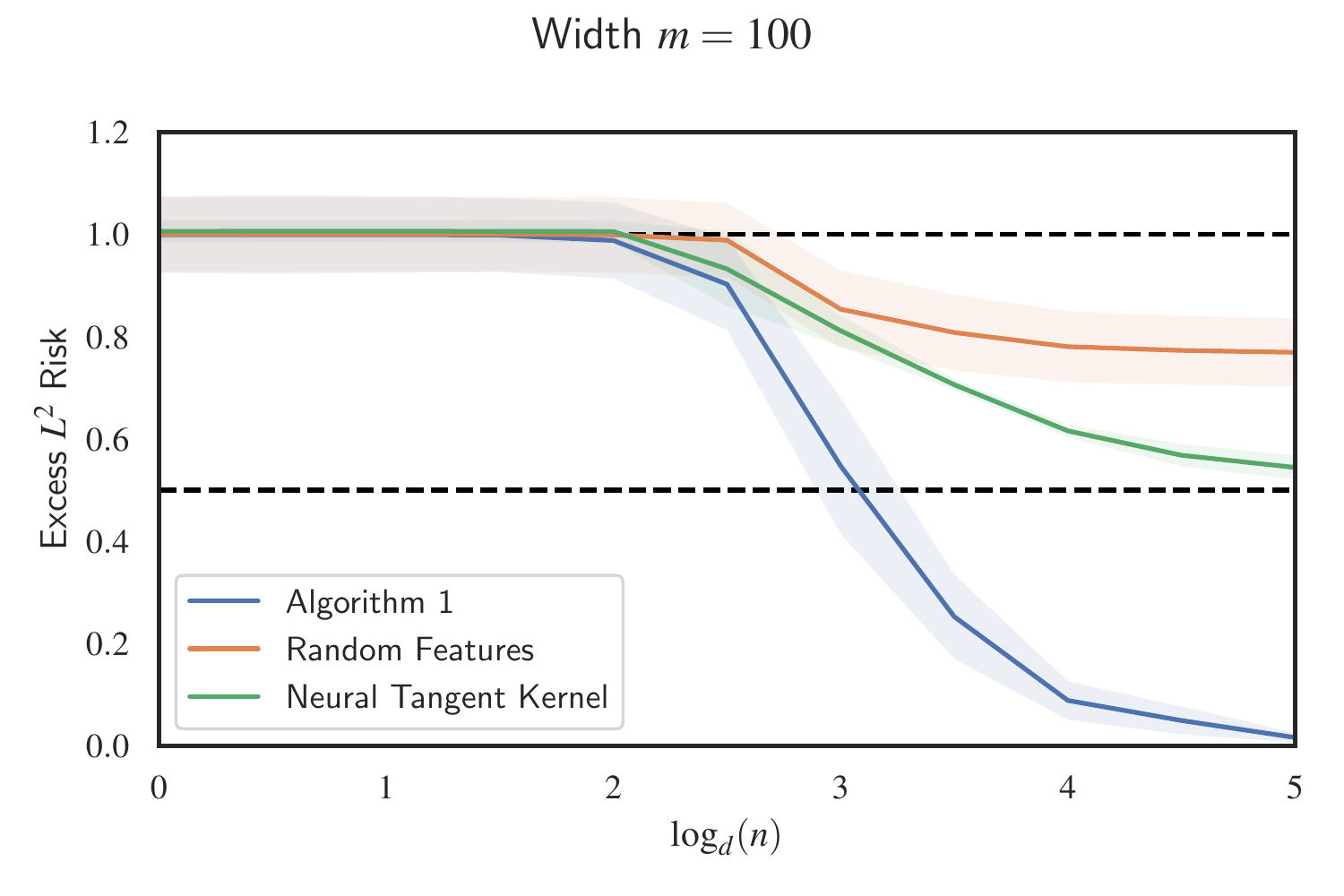}
	\includegraphics[width=0.49\textwidth]{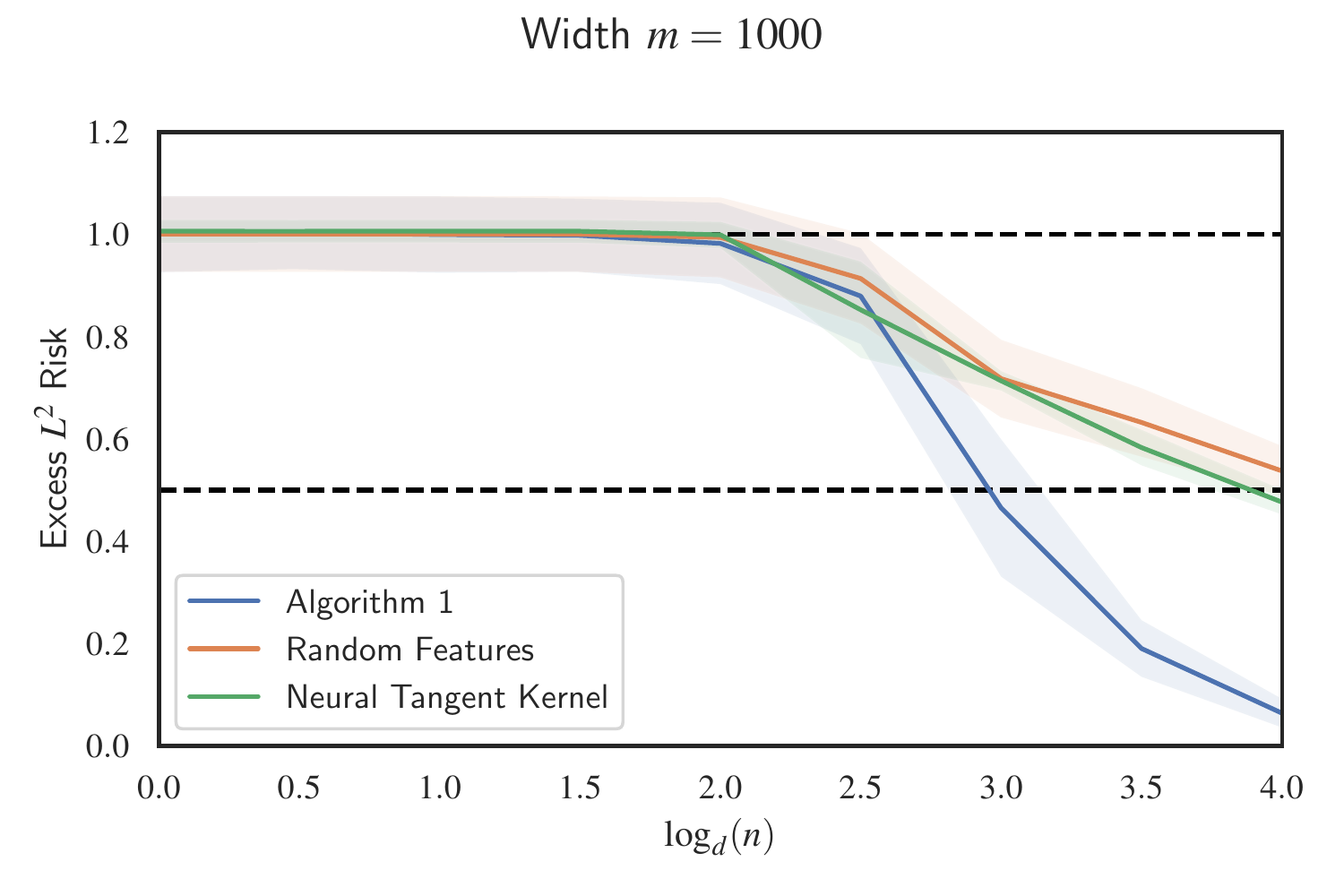}
	\caption{\textbf{Sample Complexity:} The $x$ axis plots $\log_d(n)$ and the $y$ axis plots the excess risk in $L^2(\mathcal{D})$ for each of three methods: \Cref{algorithm:layerwise_training}, random features, and the neural tangent kernel. The top dashed horizontal line is at $L^2 = 1$, which corresponds to outputting the zero predictor. The middle horizontal line is at $L^2 = \frac{1}{2}$ and corresponds to learning the optimal quadratic predictor $\frac{1}{2}He_2(x_1)$. Due to its improved sample efficiency, \Cref{algorithm:layerwise_training} easily achieves near zero excess risk despite the relatively high degree of $f^\star$, while random features and the neural tangent kernel are only able to learn the optimal quadratic predictor. See \Cref{section:experiments} for additional experimental details.}
	\label{fig:experiments}
\end{figure}

The proof of \Cref{thm:sample_complexity} involves showing that \Cref{algorithm:layerwise_training} learns features corresponding to $S^\star$ (see \Cref{sec:sketch:samplecomplexity}) and the proof of \Cref{thm:transferlearning} shows that this implies efficient transfer learning. We again verify this empirically. We consider the function:
\begin{align*}
	f^\star_{\text{target}}(x) = g_{\text{target}}(u \cdot x) \qq{where} g_{\text{target}} = \frac{He_p(x)}{\sqrt{p!}}.
\end{align*}
Note that this was exactly the hard example in \Cref{thm:CSQ_lower_bound} that was unlearnable without $n \gtrsim d^{\frac{p}{2}}$ samples by a correlational statistical query learner (and in particular, gradient-based learners).

We pretrain with $n$ samples on the $f^\star(x)$ from \Cref{sec:experiment:sample}, then train the output layer using $N$ samples from $f^\star_{\text{target}}$. As in \Cref{sec:experiment:sample}, we use a label noise strength of $\sigma^2 = 1$. We pick $p=3$ so that random feature methods or the neural tangent kernel will require at least $n \gtrsim d^3$ samples to learn $f^\star$.

We note that in \Cref{fig:transfer}, when $n=d^0,d^1$, fine tuning on $N$ target samples gives trivial risk until $N \gtrsim d^3$, which is to be expected of a kernel method with no prior information. However, for $n \ge d^2$ pretraining samples, we can fine tune on just $N = O(1)$ target samples to reach nontrivial loss and the loss decays rapidly as a function of $N$. This experiment therefore fully supports the conclusion of \Cref{thm:transferlearning}.

\begin{figure}[h!]
	\centering
	\includegraphics[width=0.49\textwidth]{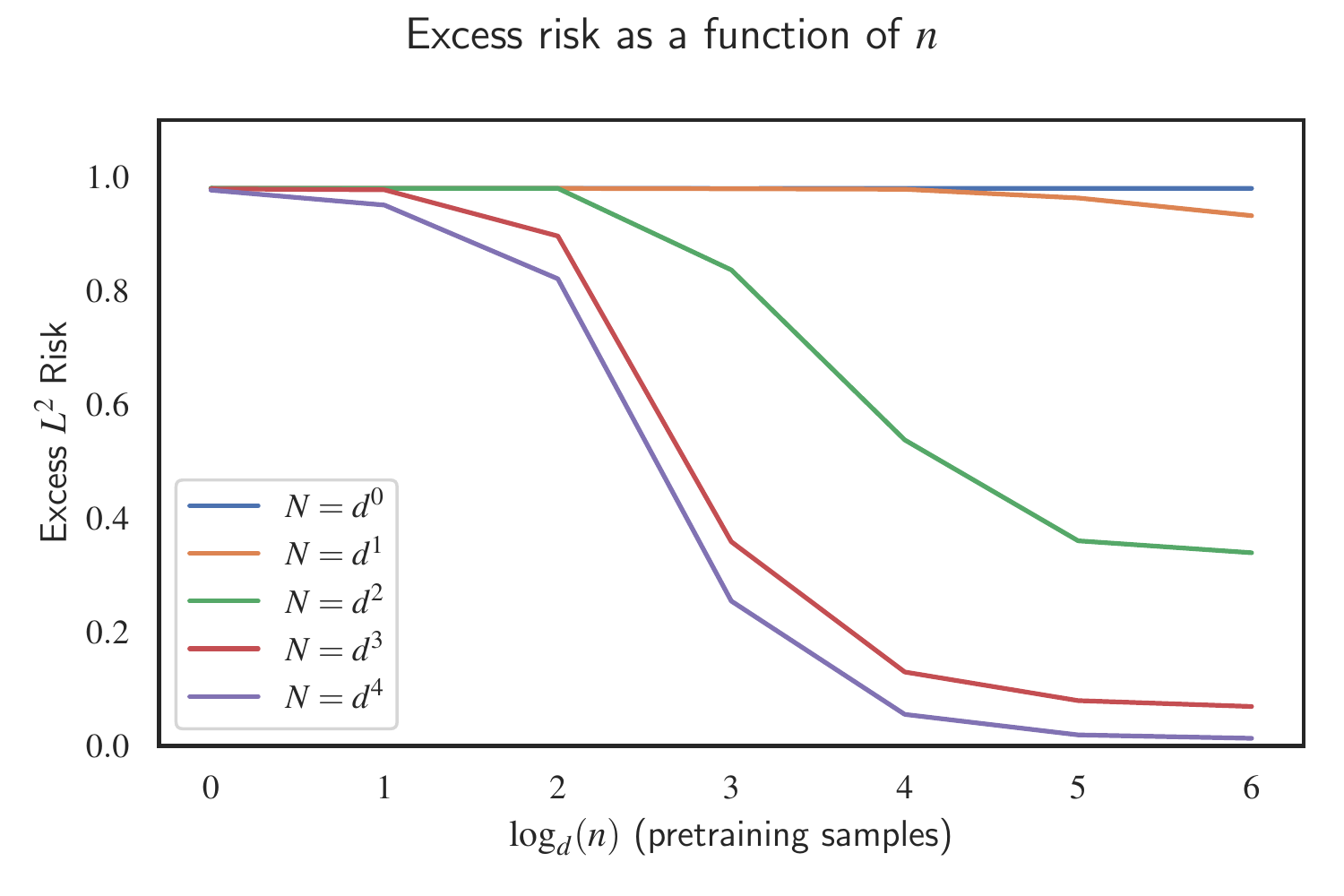}
	\includegraphics[width=0.49\textwidth]{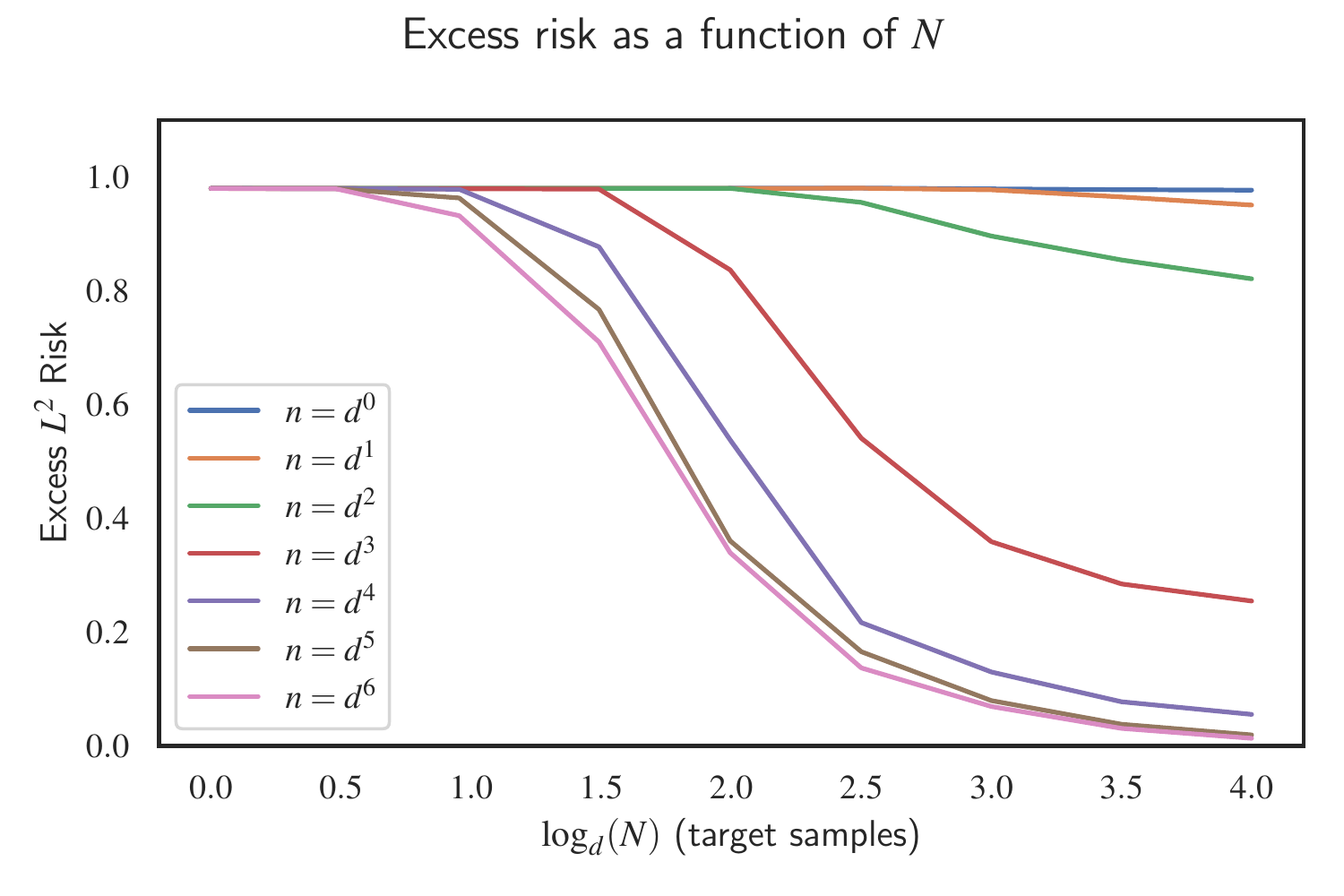}
	\caption{\textbf{Transfer Learning:} The $x$ axes plot $\log_d(n)$ and $\log_d(N)$ respectively. We note that with little pretraining ($\log_d(n) = 0,1$), \Cref{algorithm:layerwise_training} is unable to extract a robust representation that enables transfer. For $n \ge d^2$, we observe that finetuning the representation from \Cref{algorithm:layerwise_training} gives nontrivial loss even for $N = O(1)$, as predicted by \Cref{thm:transferlearning}. See \Cref{section:experiments} for additional experimental details.}
	\label{fig:transfer}
\end{figure}

\section{Discussion and Future Work}
In this work we provide a clear separation between gradient-based training and kernel methods. We show that there is a large family of degree $p$ polynomials which are efficiently learnable by gradient descent with $n \asymp d^2$ samples, in contrast to the lower bound of $d^p$ for random feature/NTK analysis. The main idea driving both our sample complexity result (\Cref{thm:sample_complexity}) and our transfer learning result (\Cref{thm:transferlearning}) is that gradient descent learns useful representations of the data.

One promising direction for future work is tightening the dimension dependence of our upper bound. In particular, our $n \asymp d^2$ sample complexity is driven by the difficult in learning from a degree $2$ Hermite polynomial. However, our lower bound for such functions (\Cref{thm:CSQ_lower_bound}) only rules out learning with $n \le d$ samples. In this situation the lower bound is tight as \citet{chen2020towards} show that sparse degree $2$ polynomials can be efficiently learned with $n \asymp d$ samples.

Another promising direction from future work is generalizing our result to the situation in which the hidden layer and the output layer are trained together. This introduces dependencies between the hidden and output layers which are difficult to control. However, such analysis may lead to a better understanding of learning order and inductive bias in deep learning.

\section{Acknowledgements}
AD acknowledges support from a NSF Graduate Research Fellowship. JDL and AD acknowledge support of the ARO under MURI Award W911NF-11-1-0304, the Sloan Research Fellowship, NSF CCF 2002272, NSF IIS 2107304, ONR Young Investigator Award, and NSF-CAREER under award \#2144994. MS is supported by the Packard Fellowship in Science and Engineering, a Sloan Fellowship in Mathematics, an NSF-CAREER under award \#1846369, DARPA Learning with Less Labels (LwLL) and FastNICS programs, and NSF-CIF awards \#1813877 and \#2008443.

\newpage
\bibliography{simonduref,myref,ref,Bibfiles,Bibfiles2,literature,beyondntk}
\newpage
\appendix
\section{Proofs}
We define $\iota = C_\iota \log(nmd)$ for a sufficiently large constant $C_\iota$. Throughout the appendix we will use $e^{-\iota}$ to track failure probabilities of various lemmas and theorems. 

\begin{definition}[High probability events]
	We say that an event $A$ happens \textit{with high probability} if it happens with probability at least $1 - \poly(n,m,d)e^{-\iota}$ where $\poly(n,m,d)$ does not depend on $C_\iota$.
\end{definition}
Note that high probability events are closed under taking union bounds over sets of size $\poly(n,m,d)$. We will assume throughout that $\iota \le c d$ for a sufficiently small absolute constant $c$.

The following lemma bounds $\norm{x_i}$ and is a direct corollary of \Cref{lem:chi_square}:
\begin{lemma}\label{lem:bound_x_norm}
	With high probability, $\norm{x_i}^2 \in \qty[\frac{d}{2},2d]$ for $i=1,\ldots,n$.
\end{lemma}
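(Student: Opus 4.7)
The plan is to recognize the statement as a routine chi-squared concentration bound applied uniformly over the $n$ training points, using the cited chi-square lemma as a black box and tracking the high-probability bookkeeping dictated by the definition $\iota = C_\iota \log(nmd)$.

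First I would observe that since $x_i \sim N(0, I_d)$ and the coordinates are independent standard Gaussians, the quantity $\norm{x_i}^2 = \sum_{k=1}^d x_{i,k}^2$ is distributed as $\chi^2_d$, with mean $d$. The already stated \Cref{lem:chi_square} is presumably a Laurent--Massart style bound of the form $\P(\abs{\chi^2_d - d} \ge t) \le 2\exp(-ct^2/d)$ in the regime $t \lesssim d$, or equivalently a bound that says $\chi^2_d \in [d - C\sqrt{d\iota}, d + C\sqrt{d\iota} + C\iota]$ except with probability at most $e^{-\iota}$. Either form is more than enough for our purposes.

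Next I would apply this bound with a deviation level chosen to land $\norm{x_i}^2$ inside the interval $[d/2, 2d]$. Concretely, taking $t = d/2$ gives $\P\!\left(\abs{\norm{x_i}^2 - d} \ge d/2\right) \le 2 e^{-c d}$ for an absolute constant $c > 0$. Since we are working in the regime $\iota \le c' d$ stated just before the lemma, this failure probability is at most $e^{-\iota}$ for $C_\iota$ large enough. Finally, I would union bound over the $n$ training points, picking up a factor of $n$ that is absorbed into the $\poly(n,m,d) e^{-\iota}$ slack allowed by the definition of \textit{with high probability}. The conclusion $\norm{x_i}^2 \in [d/2, 2d]$ simultaneously for all $i \in [n]$ then holds with high probability.

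There is no real obstacle here; the only thing to be careful about is matching the tail bound with the paper's notational convention, i.e.\ verifying that the chi-square deviation bound at the constant relative scale $1/2$ gives failure probability exponentially small in $d$, which is dominated by $e^{-\iota}$ under the standing assumption $\iota \le cd$. Everything else, including the union bound over $i \in [n]$, is bookkeeping against the definition of high probability events given just before the lemma.
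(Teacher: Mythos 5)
Your proof is correct and matches the paper's intended argument: the paper states the lemma as a direct corollary of the Laurent--Massart bound (its \Cref{lem:chi_square}), and your route — recognizing $\norm{x_i}^2 \sim \chi^2_d$, applying the tail bound at deviation $d/2$ to get failure probability $e^{-\Omega(d)} \le e^{-\iota}$ under the standing assumption $\iota \le cd$, then union bounding over $i \in [n]$ — is exactly that corollary spelled out.
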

All remaining proofs will be conditioned on this high probability event.

\subsection{Hermite Expansions}\label{appendix:hermite_expansion}

\subsubsection{Hermite expansion of $\sigma$}
Let $\sigma(x) := \relu(x) = \max(0,x)$. Then the Hermite expansion of $\sigma(x)$ is
\begin{align*}
	\sigma(x) = \frac{1}{\sqrt{2\pi}} + \frac{x}{2} + \frac{1}{\sqrt{2\pi}}\sum_{k \ge 1} \frac{(-1)^{k-1}}{k! 2^k (2k-1)} He_{2k}(x).
\end{align*}
Let $c_k$ denote the Hermite coefficients of $\sigma$, i.e. $\sigma(x) = \sum_{k \ge 0} \frac{c_k}{k!} He_k(x)$. Note that
\begin{align*}
	\sigma'(x) = \sum_{k \ge 0} \frac{c_{k+1}}{k!} He_k(x) = \frac{1}{2} + \frac{1}{\sqrt{2\pi}}\sum_{k \ge 0} \frac{(-1)^k}{k!2^k(2k+1)}He_{2k+1}(x).
\end{align*}
\subsubsection{Hermite Expansion of $f^\star$}
Let the Hermite expansion of $f^\star$ be
\begin{align*}
	f^\star(x) = \sum_{k=0}^p \frac{\langle C_k, He_k(x) \rangle}{k!}
\end{align*}
where $C_k \in (\R^d)^{\otimes k}$ is the symmetric $k$-tensor defined by
\begin{align*}
	C_k := \E_x[\nabla^k f^\star(x)].
\end{align*}
Note that
\begin{align*}
	\nabla f^\star(x) = \sum_{k = 0}^{p-1} \frac{C_{k+1}(He_{k}(x))}{k!} \in \R^d.
\end{align*}
\begin{lemma}[Parseval's Identity]\label{lem:parseval}
\begin{align*}
	1 = \E_x[f^\star(x)^2] = \sum_{k=0}^p \frac{\norm{C_k}_F^2}{k!}.
\end{align*}	
\end{lemma}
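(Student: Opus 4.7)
The statement is the standard Parseval identity for the tensor-valued Hermite expansion, so the plan is to simply invoke orthogonality of multivariate Hermite polynomials under the standard Gaussian measure and plug into the given expansion of $f^\star$. Nothing deeper than the $L^2(\mathcal{D})$ orthonormality of the Hermite basis is needed; the only mild subtlety is keeping track of symmetric-tensor contractions and the factorial normalizations.

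First, I would recall (or state as a lemma and defer to a standard reference) the key orthogonality relation for symmetric tensors $A \in \sym((\R^d)^{\otimes j})$ and $B \in \sym((\R^d)^{\otimes k})$:
\begin{align*}
\E_{x \sim \mathcal{D}} \bigl[\langle A, He_j(x) \rangle\, \langle B, He_k(x) \rangle\bigr] \;=\; k!\, \delta_{jk}\, \langle A, B \rangle,
\end{align*}
where $He_k(x) \in (\R^d)^{\otimes k}$ is the tensor-valued $k$th Hermite polynomial defined so that $C_k = \E_x[\nabla^k f^\star(x)]$ are precisely the coefficients appearing in $f^\star(x) = \sum_{k=0}^p \frac{\langle C_k, He_k(x)\rangle}{k!}$. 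This is exactly the multivariate version of the scalar identity $\E[He_j(z)He_k(z)] = k!\delta_{jk}$ for $z \sim N(0,1)$, extended via tensorization and the fact that contracting against symmetric tensors collapses the index pairings to a single $\langle A, B\rangle$ up to the factor $k!$.

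Second, I would substitute the Hermite expansion of $f^\star$ into $\E_x[f^\star(x)^2]$ and expand the square:
\begin{align*}
\E_x[f^\star(x)^2] \;=\; \sum_{j=0}^p \sum_{k=0}^p \frac{1}{j!\,k!}\, \E_x\bigl[\langle C_j, He_j(x)\rangle\, \langle C_k, He_k(x)\rangle\bigr].
\end{align*}
Applying the orthogonality relation above kills every off-diagonal term, leaving
\begin{align*}
\E_x[f^\star(x)^2] \;=\; \sum_{k=0}^p \frac{1}{(k!)^2}\, k!\, \langle C_k, C_k\rangle \;=\; \sum_{k=0}^p \frac{\|C_k\|_F^2}{k!}.
\end{align*}
The claim then follows from the normalization assumption $\E_{x \sim \mathcal{D}}[f^\star(x)^2] = 1$ stated in Section~\ref{secsetup}.

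There is essentially no obstacle here; the only thing to be careful about is the symmetric-tensor bookkeeping in the orthogonality relation (ensuring that symmetrizing $C_k$ and $He_k(x)$ gives the clean $k!\langle A, B\rangle$ rather than a more complicated sum over index permutations). If a fully self-contained derivation is desired, I would prove this by reducing to the one-dimensional case via the identity $\langle u^{\otimes k}, He_k(x)\rangle = \|u\|^k He_k(\langle u, x\rangle / \|u\|)$ for any $u \in \R^d$, combined with polarization to handle general symmetric tensors — but this would only be an appendix-level aside, not the main content of the proof.
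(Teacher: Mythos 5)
Your proposal is correct and matches the standard argument the paper implicitly relies on; the paper states \Cref{lem:parseval} without proof, treating it as the classical Parseval identity for the tensor-valued Hermite expansion. Your derivation via the orthogonality relation $\E_x[\langle A, He_j(x)\rangle\langle B, He_k(x)\rangle] = k!\,\delta_{jk}\langle A,B\rangle$ for symmetric tensors, followed by diagonal collapse and the normalization $\E_x[f^\star(x)^2]=1$, is exactly the intended reasoning.
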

Note that as an immediate consequence of \Cref{lem:parseval}, $\norm{C_k}_F^2 \le k!$. In addition, \Cref{assumption:rdim} guarantees that $C_k\qty(x^{\otimes k}) = C_k\qty(\qty(\Pi^\star x)^{\otimes k})$.

\subsubsection{Concentrating $\alpha,\beta$}
\begin{lemma}\label{lem:alpha_beta}
	Let $\alpha = \frac{1}{n} \sum_{i=1}^n y_i$ and $\beta = \frac{1}{n} \sum_{i=1}^n y_i x_i$. Then, with high probability,
	\begin{align*}
		\abs{\alpha-C_0} \lesssim \sqrt{\frac{\iota^p}{n}} \qand \norm{C_1 - \beta} \lesssim \sqrt{\frac{d \iota^{p+1}}{n}}.
	\end{align*}
\end{lemma}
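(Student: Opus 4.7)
The plan is first to identify the expectations $\E\alpha = C_0$ and $\E\beta = C_1$, and then establish concentration via Gaussian hypercontractivity. The mean of $\alpha$ is immediate: $\E y_i = \E f^\star(x_i) = C_0$ since the noise is zero-mean. For $\beta$, Gaussian integration by parts (Stein's lemma) gives $\E[y_i x_i] = \E[f^\star(x_i) x_i] = \E[\nabla f^\star(x_i)] = C_1$, with the noise piece $\E[\epsilon_i x_i] = 0$ contributing nothing.

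To concentrate $\alpha$, the key trick is to view $F(x_1,\ldots,x_n) := \frac{1}{n}\sum_i (f^\star(x_i) - C_0)$ as a \emph{single} polynomial of degree $p$ in the $nd$-dimensional standard Gaussian $(x_1,\ldots,x_n)$. By \Cref{lem:parseval} we have $\|f^\star\|_2 \le 1$, so $\Var F \le 1/n$. The standard polynomial-chaos tail bound (Gaussian hypercontractivity) then gives $\Pr[|F| > t/\sqrt{n}] \le 2\exp(-ct^{2/p})$; setting $t = \iota^{p/2}$ yields $|F| \lesssim \sqrt{\iota^p/n}$ with high probability. The noise contribution $|\frac{1}{n}\sum_i \epsilon_i| \lesssim \sqrt{\iota/n}$ follows from Hoeffding on $\pm\varsigma$ variables and is dominated for $p \ge 1$.

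For $\beta$ I apply the same idea coordinate-wise. For each $j \in [d]$, the function $G_j(x_1,\ldots,x_n) := \frac{1}{n}\sum_i (f^\star(x_i) x_{i,j} - C_{1,j})$ is a polynomial of degree $p+1$ in the concatenated Gaussian of variance $O(1/n)$; the constant comes from bounding $\E[(f^\star(x)x_j)^2] \le \|f^\star\|_4 \|x_j\|_4 = O(1)$ via Cauchy--Schwarz and hypercontractivity applied to the degree-$p$ factor. The polynomial-chaos tail bound then gives $|G_j| \lesssim \sqrt{\iota^{p+1}/n}$ with probability $\ge 1 - e^{-c\iota}$. The noise piece $\frac{1}{n}\sum_i \epsilon_i x_{i,j}$ is a degree-$1$ Gaussian polynomial and contributes at most $\sqrt{\iota/n}$. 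A union bound over the $d$ coordinates (which costs only a factor $d$ in the failure probability, absorbed into $\iota = C_\iota\log(nmd)$) produces
\begin{align*}
\|\beta - C_1\|^2 = \sum_{j=1}^d (\beta_j - C_{1,j})^2 \lesssim \frac{d\,\iota^{p+1}}{n}
\end{align*}
with high probability.

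There is no real obstacle here: the ``polynomial in $nd$ Gaussians'' viewpoint reduces everything to a single well-known polynomial-chaos tail bound and lets the powers of $\iota$ and $d$ fall out automatically, avoiding any Rosenthal- or Bernstein-type sum bound for sub-Weibull random variables. The only bookkeeping to watch is the degree count ($p$ for $\alpha$, $p+1$ for $\beta$, which is exactly what produces the $\iota^p$ versus $\iota^{p+1}$ gap in the final rates) and the fact that union-bounding over $d$ coordinates is harmless because $\log d \lesssim \iota$ by definition.
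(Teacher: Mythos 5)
Your treatment of $\alpha$ is exactly the paper's: view $F=\frac1n\sum_i f^\star(x_i)-C_0$ as a single degree-$p$ polynomial in the concatenated Gaussian $(x_1,\ldots,x_n)\in\R^{nd}$, compute $\E[F^2]\le 1/n$, and invoke the polynomial tail bound (\Cref{lem:poly_tail}). For $\beta$, however, you take a genuinely different route. The paper simply cites \Cref{lem:gradient_concentration} with $\sigma(x)=x$, whose proof is a truncation-plus-$\epsilon$-net argument designed to handle the supremum over $w\in S^{d-1}$ (degenerate here, since $\sigma'\equiv 1$) together with the supremum over $u\in S^{d-1}$ used to control a vector norm. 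You instead bound each coordinate $G_j$ directly as a degree-$(p+1)$ polynomial chaos in the $nd$-dimensional Gaussian, use hypercontractivity and Cauchy--Schwarz to get $\E[G_j^2]\lesssim 1/n$, apply \Cref{lem:poly_tail} with $\delta=\iota^{(p+1)/2}$, and union-bound over the $d$ coordinates; the cost of the union bound is negligible since $\log d\lesssim\iota$. Both routes produce the same $\sqrt{d\iota^{p+1}/n}$ rate. Your argument is more elementary and self-contained for this specific lemma---no truncation radius, no hyperplane-region net---and it explains directly where the $\iota^{p+1}$ versus $\iota^p$ gap comes from (the degree jump from $p$ to $p+1$). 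The paper's choice is economical only because \Cref{lem:gradient_concentration} is needed anyway for the $\relu$ case. One small imprecision: calling $\frac1n\sum_i\epsilon_i x_{i,j}$ a ``degree-$1$ Gaussian polynomial'' is not quite right since the $\epsilon_i$ are Rademacher-type, not Gaussian; but conditioning on $\epsilon$ it is a Gaussian of variance $\varsigma^2/n$ (or just sub-Gaussian additivity works), so the $\sqrt{\iota/n}$ bound you claim holds and is dominated. The claim is correct.
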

\begin{proof}
	Let $F(x_1,\ldots,x_n) = \frac{1}{n} \sum_{i=1}^n f^\star(x_i) - C_0$. Note that
	\begin{align*}
		\E_{x_1,\ldots,x_n}[F(x)^2] = \frac{1}{n} \Var(f^\star(x)) \le \frac{1}{n}.
	\end{align*}
	The bound on $\abs{\alpha - C_0}$ therefore immediately follows from \Cref{lem:poly_tail} applied to $F$. The bound on $\norm{\beta - C_1}$ is a special case of \Cref{lem:gradient_concentration} with $\sigma(x) = x$.
\end{proof}

\subsubsection{Hermite Expanding the Features}\label{sec:appendix:expand_feature}
Note that by the scale invariance of $\sigma(x) = \relu(x)$, \Cref{algorithm:layerwise_training} does not depend on $\norm{w_j}$ for $j=1,\ldots,m$. Therefore we can assume WLOG that $\norm{w_j} = 1$ for $j=1,\ldots,m$ and $w_j \sim \unif(S^{d-1})$. For the remainder of the appendix we will assume that $\norm{w_j} = 1$.

\begin{definition}
	We define $\widehat f^\star(x) := f^\star(x) - \alpha - \beta \cdot x$.
\end{definition}
The functions $g(w)$ and $g_n(w)$ capture the features that can be learned after one step of gradient descent:
\begin{definition}
	For $\norm{w} = 1$, we define
	\begin{align*}
		g(w) := \E_x[\widehat f^\star(x) x \sigma'(w \cdot x)] \qand g_n(w) := \frac{1}{n} \sum_{i=1}^n \qty(\widehat f^\star(x_i) + \epsilon_i) x_i \sigma'(w \cdot x_i).
	\end{align*}
\end{definition}
We note that $w_j^{(1)} = 2 \eta_1 a_j g_n(w_j)$ and $g(w) = \E_x [g_n(w)]$. In fact, \Cref{cor:full_gradient_concentration} shows that with probability at least $1-4ne^{-\iota}$,
\begin{align*}
	\sup_w \norm{g(w) - g_n(w)} \lesssim \sqrt{\frac{d\iota^{p+1}}{n}}.
\end{align*}

\begin{lemma} With high probability,
	\begin{align*}
		g(w) = \frac{Hw}{\sqrt{2\pi}} + O\qty(\sqrt{\frac{d\iota^{p+1}}{n}} + \sqrt{\frac{r\iota^2}{d^2}}).
	\end{align*}
\end{lemma}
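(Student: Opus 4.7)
The plan is to carry out the Hermite expansion sketched in \Cref{sec:sketch:samplecomplexity}, extract the leading $Hw/\sqrt{2\pi}$ term, and control the remainder by data and initialization concentration. Concretely, I would first expand $\sigma'(w\cdot x) = \sum_{k\ge 0}\frac{c_{k+1}}{k!}He_k(w\cdot x)$ and, via Stein's lemma, show that for any polynomial $h$ with Hermite coefficient tensors $\{D_j\}$,
\begin{align*}
\E_x\qty[h(x)\, x\, He_k(w\cdot x)] = D_{k+1}(w^{\otimes k}) + k\,w\,D_{k-1}(w^{\otimes k-1}).
\end{align*}
Specializing to $h=\widehat f^\star = f^\star - \alpha - \beta\cdot x$, whose Hermite coefficients are $(C_0-\alpha, C_1-\beta, C_2, C_3, \ldots)$, gives the finite sum
\begin{align*}
g(w) = \sum_{k=0}^{p-1} \frac{1}{k!}\qty[c_{k+1}\widehat C_{k+1}(w^{\otimes k}) + c_{k+2}\, w\, \widehat C_k(w^{\otimes k})],
\end{align*}
and the $k=1$, $\widehat C_2$ contribution is exactly $c_2 H w = Hw/\sqrt{2\pi}$.

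The remaining terms split into two groups. The $(\alpha,\beta)$-dependent pieces are the full $k=0$ term and the $c_3 w(\widehat C_1\cdot w)$ piece of $k=1$; each depends linearly on $C_0-\alpha$ or $C_1-\beta$, so \Cref{lem:alpha_beta} bounds their sum by $O\qty(|C_0-\alpha|+\|C_1-\beta\|) = O\qty(\sqrt{d\iota^{p+1}/n})$, delivering the first term in the claim.

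The genuinely higher-order terms ($k\ge 2$) involve only $C_k$ for $k\ge 2$, which by \Cref{assumption:rdim} live in $(S^\star)^{\otimes k}$. Setting $u := \Pi^\star w$, each such term becomes a multilinear form in $u$. A naive bound $\|C_{k+1}(u^{\otimes k},\cdot)\|\le \|C_{k+1}\|_F\|u\|^k$ together with $\|u\|^2\lesssim r/d$ comfortably handles $k\ge 3$, but for $k=2$ it gives only $O(r/d)$, which is off by a factor of $\sqrt{r}$ from the target $O(\sqrt{r\iota^2/d^2})$. To recover the tighter bound, the plan is to compute the second moments $\E_w\|C_3(u,u,\cdot)\|^2$ and $\E_w(u^\top H u)^2$ using Isserlis' theorem: writing $u = \Pi^\star z/\|z\|$ with $z\sim N(0,I_d)$ and conditioning on $\|z\|^2\in[d/2,2d]$, the Gaussian pairings restricted to $S^\star$ yield second moments $\lesssim r/d^2$, with the extra $r$ coming from trace contractions of the tensor. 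Hanson-Wright / Gaussian polynomial concentration then upgrades this to a high-probability pointwise bound $\|C_3(u,u,\cdot)\| + |C_2(u,u)|\lesssim \sqrt{r\iota}/d \le \sqrt{r\iota^2/d^2}$, which dominates the $k\ge 3$ contributions (those carry strictly more powers of $\|u\|$) and yields the second term in the claim.

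\textbf{Main obstacle.} The one delicate point is the $k=2$ refinement: a pointwise tensor-norm estimate loses a factor of $\sqrt{r}$ and would propagate to the main sample-complexity theorem. The fix is to exploit the fact that $\|u\|^2 = \|\Pi^\star w\|^2$ concentrates around its mean $r/d$ at the smaller scale $\sqrt{r}/d$, so the relevant quadratic forms in $u$ must be bounded via a second-moment plus concentration argument rather than via worst-case tensor norms applied after pulling out $\|u\|^2$.
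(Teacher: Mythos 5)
Your approach is essentially the paper's: the same Stein/Hermite expansion of $g(w)$, the same use of \Cref{lem:alpha_beta} for the $(\alpha,\beta)$-pieces, and the same second-moment-plus-concentration refinement for the higher-order tensor terms (which the paper packages as \Cref{cor:tensor_power_r} and \Cref{lem:sphere_power_frobenius_bound}). Two small inaccuracies are worth flagging. First, the odd Hermite coefficients of ReLU vanish ($c_3 = c_5 = \cdots = 0$), so both the $C_3(u,u,\cdot)$ piece and the $c_3\,w(\widehat C_1\cdot w)$ contribution at $k=1$ are identically zero; this is harmless, but the bookkeeping is cleaner once noted. Second, your claim that the naive bound $\lVert C_{k+1}\rVert_F\lVert u\rVert^k$ with $\lVert u\rVert^2\lesssim r/d$ ``comfortably handles $k\ge 3$'' is not right in general: for $k=3$ it gives $r^{3/2}/d^{3/2}$, which exceeds the target $\sqrt{r}\,\iota/d$ whenever $r\gtrsim\iota\sqrt d$, and such $r$ is permitted under $d\gtrsim\kappa r^{3/2}$. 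The refinement you correctly identify for $k=2$ — compute the second moment over $w$, which picks up only $r^{\lfloor k/2\rfloor}$ rather than $r^k$, and then upgrade to a pointwise bound by polynomial concentration — is in fact needed for every $k\ge 2$, and this is exactly what \Cref{lem:sphere_power_frobenius_bound} supplies uniformly in $k$.
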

\begin{proof}
	By Stein's lemma and the orthogonality of Hermite polynomials,
\begin{align*}
	g(x) &= \E_{x}[\widehat f^\star(x)x\sigma'(w \cdot x)] \\
	&= \E_x[\nabla \widehat f^\star(x) \sigma'(w \cdot x) + w f^\star(x) \sigma''(w \cdot x)] \\
	&= \sum_{k=0}^{p-1} \frac{c_{k+1}\E_x[\nabla_x^{k+1} \widehat f^\star(x)](w^{\otimes k})}{k!} + w \sum_{k=0}^p \frac{c_{k+2}\E_x[\nabla_x^{k} \widehat f^\star(x)](w^{\otimes k})}{k!}\\
	&= \qty(\frac{C_1 - \beta}{2} + \frac{w (C_0 - \alpha)}{\sqrt{2\pi}}) + \frac{C_2 w}{\sqrt{2\pi}} + \sum_{k \ge 2} \frac{c_{2k} C_{2k}(w^{\otimes 2k-1})}{(2k-1)!} + w \sum_{k \ge 1} \frac{c_{2k+2} C_{2k}(w^{\otimes 2k})}{(2k)!}.
\end{align*}
	Note that these sums are finite as $C_k = 0$ for $k > p$. Next, by \Cref{lem:sphere_power_frobenius_bound} we have the high probability bounds,
	\begin{align*}
		\norm{C_{k+1}(w^{\otimes k})}_F \lesssim \sqrt{\frac{r^{\lfloor \frac{k}{2} \rfloor}\iota^k}{d^k}} \lesssim \sqrt{\frac{r \iota^3 }{d^3}} &\qq{for} k \ge 3 \\
		\norm{w C_{k}(w^{\otimes k})}_F \lesssim \sqrt{\frac{r^{\lfloor \frac{k}{2} \rfloor}\iota^k}{d^k}} \lesssim \sqrt{\frac{r\iota^2}{d^2}} &\qq{for} k \ge 2.
	\end{align*}
	Applying these bounds term by term and using \Cref{lem:alpha_beta} to bound $\abs{C_0 - \alpha}$ and $\norm{C_1 - \beta}$ gives the desired result.
\end{proof}

\begin{corollary}\label{corollary:gn_error_bound} With high probability,
	\begin{align*}
		g_n(w) = \frac{Hw}{\sqrt{2\pi}} + O\qty(\sqrt{\frac{d\iota^{p+1}}{n}} + \sqrt{\frac{r\iota^2}{d^2}}).
	\end{align*}
\end{corollary}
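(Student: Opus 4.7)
The plan is to derive the corollary as an immediate consequence of the triangle inequality applied to the decomposition
\[
g_n(w) - \frac{Hw}{\sqrt{2\pi}} \;=\; \bigl[g_n(w) - g(w)\bigr] \;+\; \bigl[g(w) - \tfrac{Hw}{\sqrt{2\pi}}\bigr].
\]
The second summand is controlled by the preceding lemma, which established
$\|g(w) - Hw/\sqrt{2\pi}\| = O\bigl(\sqrt{d\iota^{p+1}/n} + \sqrt{r\iota^2/d^2}\bigr)$
by Stein's lemma, the Hermite expansions of $\sigma'$ and $\widehat f^\star$, and the tensor-contraction bounds of \Cref{lem:sphere_power_frobenius_bound} together with the $\alpha,\beta$-concentration from \Cref{lem:alpha_beta}. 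This bound is deterministic on the high-probability event that those lemmas invoke.

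For the first summand I would directly invoke \Cref{cor:full_gradient_concentration}, which the excerpt records as giving the uniform estimate $\sup_{\|w\|=1}\|g_n(w) - g(w)\| \lesssim \sqrt{d\iota^{p+1}/n}$ with probability at least $1 - 4ne^{-\iota}$. Taking a union bound over the high-probability events underlying the previous lemma and \Cref{cor:full_gradient_concentration} and applying the triangle inequality then yields exactly the stated error bound.

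Because both pieces are already in hand, there is no real obstacle at the level of the corollary itself; it is a one-line bookkeeping step. What is worth flagging is why \Cref{cor:full_gradient_concentration} is the nontrivial ingredient: it must be uniform over the sphere and must absorb both the polynomial-in-$x$ deviation of $\frac{1}{n}\sum_i \widehat f^\star(x_i)x_i\sigma'(w\cdot x_i)$ from its expectation and the label-noise contribution $\frac{1}{n}\sum_i \epsilon_i x_i \sigma'(w\cdot x_i)$. I would expect that lemma to be proven by a net argument on $S^{d-1}$ at scale $\poly(1/n)$, applying \Cref{lem:poly_tail}-style polynomial-tail concentration at each net point for the polynomial contribution and a sub-Gaussian bound for the noise contribution, and then extending to all $w$ using the boundedness of $\|x_i\|$ (from \Cref{lem:bound_x_norm}) and the fact that $\sigma'(w\cdot x_i)$ changes at only finitely many $w$ per sample, so the supremum is effectively taken over the union of $n$ hyperplane-induced cells. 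With \Cref{cor:full_gradient_concentration} in place, the corollary follows immediately as described.
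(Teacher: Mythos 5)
Your proposal is exactly the intended argument: the paper states the corollary without proof because it follows immediately from the preceding lemma on $g(w)$ together with \Cref{cor:full_gradient_concentration} via the triangle inequality and a union bound over the two high-probability events. Your supplementary remarks on how \Cref{cor:full_gradient_concentration} is itself established (net over $S^{d-1}$, hyperplane cells, polynomial-tail plus sub-Gaussian concentration) also match the paper's proofs of \Cref{lem:gradient_concentration,lem:gradient_noise_concentration}.
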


\begin{corollary}\label{corollary:gn_norm}
	With high probability,
	\begin{align*}
		\norm{g_n(w)} \lesssim \sqrt{\frac{\iota^2}{d}} + \sqrt{\frac{d\iota^{p+1}}{n}}.
	\end{align*}
\end{corollary}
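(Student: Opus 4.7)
The corollary is essentially a triangle inequality on the expansion established in \Cref{corollary:gn_error_bound}, together with a bound on $\|Hw\|$ for a random unit $w$. My plan is to carry this out in three short steps and then absorb the smaller error term into the stated bound.

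\textbf{Step 1 (start from the expansion).} By \Cref{corollary:gn_error_bound}, with high probability
\begin{align*}
    g_n(w) = \tfrac{Hw}{\sqrt{2\pi}} + O\!\qty(\sqrt{\tfrac{d\iota^{p+1}}{n}} + \sqrt{\tfrac{r\iota^2}{d^2}}),
\end{align*}
so by the triangle inequality
\begin{align*}
    \norm{g_n(w)} \;\le\; \tfrac{\norm{Hw}}{\sqrt{2\pi}} + O\!\qty(\sqrt{\tfrac{d\iota^{p+1}}{n}} + \sqrt{\tfrac{r\iota^2}{d^2}}).
\end{align*}

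\textbf{Step 2 (bound $\|Hw\|$).} Since $H = C_2$ and $w$ is a uniformly random unit vector (the features $w_j$ were normalized in \Cref{sec:appendix:expand_feature}), this is exactly the $k=1$ instance of the random-projection bound already invoked in the proof of \Cref{corollary:gn_error_bound}, namely \Cref{lem:sphere_power_frobenius_bound} applied to the symmetric $2$-tensor $C_2$. This yields, with high probability,
\begin{align*}
    \norm{Hw} \;=\; \norm{C_2 w} \;\lesssim\; \sqrt{\tfrac{\iota}{d}},
\end{align*}
using the Parseval consequence $\norm{C_2}_F^2 \le 2$ from \Cref{lem:parseval}. (Intuitively, $\E_w \|Hw\|^2 = \|H\|_F^2 / d \le 2/d$, and Lipschitz concentration on $S^{d-1}$ gives the high-probability version with the logarithmic factor $\iota$.)

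\textbf{Step 3 (combine and absorb).} Substituting into the triangle inequality gives
\begin{align*}
    \norm{g_n(w)} \;\lesssim\; \sqrt{\tfrac{\iota}{d}} + \sqrt{\tfrac{d\iota^{p+1}}{n}} + \sqrt{\tfrac{r\iota^2}{d^2}}.
\end{align*}
Finally, $\sqrt{\iota/d} \le \sqrt{\iota^2/d}$ since $\iota \ge 1$, and $\sqrt{r\iota^2/d^2} = \sqrt{r/d}\cdot \sqrt{\iota^2/d} \le \sqrt{\iota^2/d}$ because the standing assumption $d \ge \tilde\Omega(\kappa r^{3/2})$ implies $r \le d$. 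This collapses the three terms into the claimed bound
\begin{align*}
    \norm{g_n(w)} \;\lesssim\; \sqrt{\tfrac{\iota^2}{d}} + \sqrt{\tfrac{d\iota^{p+1}}{n}}.
\end{align*}

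\textbf{Main obstacle.} There is no real obstacle here: everything needed was already developed for \Cref{corollary:gn_error_bound}. The only point that requires minor care is making sure the $k=1$ case of \Cref{lem:sphere_power_frobenius_bound} (which was not explicitly invoked above, where only $k \ge 2$ was needed) is actually covered by that lemma; if it only states the bound for $k \ge 2$, one instead appeals directly to sphere concentration of the linear function $w \mapsto \|Hw\|$, with $\|H\|_F \le \sqrt{2}$ from Parseval. Either route gives the required $\tilde O(d^{-1/2})$ bound, which is the dominant contribution to $\|g_n(w)\|$.
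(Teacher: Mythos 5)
Your proof is correct and is essentially the only natural route; the paper states this corollary without a written proof, and your fill-in matches the intended argument (triangle inequality on \Cref{corollary:gn_error_bound} plus a high-probability bound on $\norm{Hw}$ via $\norm{H}_F \le \sqrt{2}$ and sphere concentration, e.g.\ the $k=1$ case of \Cref{lem:sphere_power_frobenius_bound}). One small simplification: you do not need the standing assumption $d \ge \tilde\Omega(\kappa r^{3/2})$ to conclude $r \le d$ — since $S^\star = \spn(u_1,\ldots,u_r) \subset \R^d$ with the $u_i$ linearly independent, $r \le d$ holds unconditionally, so the absorption $\sqrt{r\iota^2/d^2} \le \sqrt{\iota^2/d}$ is free.
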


Furthermore, it will become necessary to bound terms of the form $g_n(w) \cdot x_i$. Note that $g_n(w)$ and $x_i$ are dependent random variables. The following lemma handles this dependence.
\begin{lemma}\label{lem:gn_xi_concentration}
	Let $w \sim S^{d-1}$ and assume $n \ge d^2 \iota^p$. Then with high probability,
	\begin{align*}
		\max_{j \in [n]} \norm{g_n(w) \cdot x_j} \le \sqrt{\frac{\iota^3}{d}}.
	\end{align*}
\end{lemma}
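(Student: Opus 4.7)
The plan is to split $g_n(w)\cdot x_j$ into the single ``diagonal'' term coming from $i=j$ and an ``off-diagonal'' sum over $i\ne j$, and to handle them separately:
\begin{align*}
g_n(w)\cdot x_j \;=\;\underbrace{\tfrac{1}{n}\bigl(\widehat f^\star(x_j)+\epsilon_j\bigr)\|x_j\|^2\sigma'(w\cdot x_j)}_{T_{\mathrm{diag}}} \;+\; \underbrace{\tfrac{1}{n}\sum_{i\ne j}\bigl(\widehat f^\star(x_i)+\epsilon_i\bigr)(x_i\cdot x_j)\sigma'(w\cdot x_i)}_{T_{\mathrm{off}}}.
\end{align*}
For $T_{\mathrm{diag}}$, polynomial tail bounds yield $|\widehat f^\star(x_j)|\lesssim \iota^{p/2}$ with high probability, \Cref{lem:bound_x_norm} gives $\|x_j\|^2\le 2d$, and $|\sigma'|,\,|\epsilon_j|\lesssim 1$, so $|T_{\mathrm{diag}}|\lesssim d\iota^{p/2}/n$; under the hypothesis $n\ge d^2\iota^p$ this is $O(1/d)\ll\sqrt{\iota^3/d}$.

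The main contribution is $T_{\mathrm{off}}$, which is delicate because $g_n(w)$ and $x_j$ are dependent. The trick is to condition on $(x_j,w)$ (and on the preprocessing statistics $\alpha,\beta$, whose behaviour is controlled by \Cref{lem:alpha_beta}). Given these, the terms indexed by $i\ne j$ become i.i.d.\ in $x_i$ (and $\epsilon_i$), with conditional expectation
\[
\tfrac{n-1}{n}\,\E_y\!\bigl[\widehat f^\star(y)(y\cdot x_j)\sigma'(w\cdot y)\bigr] \;=\; \tfrac{n-1}{n}\,g(w)\cdot x_j,
\]
using that $\epsilon_i$ is mean zero. Since $w\sim\unif(S^{d-1})$ is drawn independently of $x_j\sim N(0,I_d)$, the quantity $g(w)\cdot x_j$ is, conditional on $w$, sub-Gaussian with variance $\|g(w)\|^2$. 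Combining with the bound $\|g(w)\|\lesssim\sqrt{\iota^2/d}+\sqrt{d\iota^{p+1}/n}=O(\sqrt{\iota^2/d})$ from the Hermite expansion carried out just before this lemma, together with a Gaussian tail bound, yields $|g(w)\cdot x_j|\lesssim\sqrt{\iota^3/d}$ with high probability.

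What remains is to bound the concentration error $T_{\mathrm{off}}-\tfrac{n-1}{n}g(w)\cdot x_j$. Conditional on $(x_j,w)$, each summand has the form (polynomial of degree $\le p+1$ in $y$)$\times \sigma'(w\cdot y)$, whose coefficient norm is controlled by $\|x_j\|\lesssim\sqrt{d}$. Applying the polynomial–times–ReLU-derivative concentration tool already invoked in the proof of \Cref{corollary:gn_error_bound} (i.e.\ the referenced gradient concentration lemma) with one extra degree gives a deviation of $\tilde O(\sqrt{d/n})$, which is $\ll\sqrt{\iota^3/d}$ under $n\ge d^2\iota^p$. A union bound over $j\in[n]$ costs only a factor of $n$ inside the $\poly(n,m,d)e^{-\iota}$ failure budget; summing the three pieces proves the claim.

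The main obstacle is the last step: one must verify that the generic polynomial concentration tool can be invoked \emph{after} conditioning on $x_j$, with the polynomial's coefficients themselves random (through $x_j$ and through $\alpha,\beta$). Because the bound depends on the polynomial only through its coefficient norm, and those norms are uniformly bounded on the high-probability events of \Cref{lem:bound_x_norm} and \Cref{lem:alpha_beta}, the conditioning is harmless.
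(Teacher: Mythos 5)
Your three-piece decomposition is algebraically the same as the paper's. Both proofs isolate (i) the clean Gaussian term $g(w)\cdot x_j$, (ii) an off-diagonal concentration error coming from $i\ne j$, and (iii) a diagonal $i=j$ term; you just group them differently (paper: $g_n\cdot x_j = g\cdot x_j + (g_n-g)\cdot x_j$; you: $g_n\cdot x_j = T_{\mathrm{diag}}+T_{\mathrm{off}}$). Your handling of (i) and (iii) matches the paper's.

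However, there is a genuine gap in how you bound the off-diagonal concentration error. You propose to condition on $x_j$ and apply the degree-$(p+1)$ scalar polynomial concentration to $y\mapsto\widehat f^\star(y)(y\cdot x_j)$, ``whose coefficient norm is controlled by $\|x_j\|\lesssim\sqrt{d}$,'' and you then assert a deviation of $\tilde O(\sqrt{d/n})$. That step does not close. With $x_j$ treated as a fixed coefficient vector, the concentration bound scales with the $L^2$ size of the polynomial, i.e.\ with $\sqrt{\E_y[\widehat f^\star(y)^2(y\cdot x_j)^2]}\sim\|x_j\|\sim\sqrt{d}$, so this route gives a deviation of order $\sqrt{d}\cdot\sqrt{d\iota^{p+2}/n}=\tilde O\big(d/\sqrt{n}\big)$, not $\tilde O(\sqrt{d/n})$. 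Under the hypothesis $n\ge d^2\iota^p$ that bound is of order $\tilde O(1)\gg\sqrt{\iota^3/d}$, so the lemma would not follow. The factor of $\sqrt{d}$ that you are paying by reading $x_j$ as a coefficient vector of norm $\sqrt d$ is exactly what must be avoided.

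The paper avoids it by not conditioning on $x_j$ in the concentration step: it concentrates the \emph{vector}
\begin{align*}
v(w) \;:=\; \frac{1}{n}\sum_{i\ne j}\bigl(\widehat f^\star(x_i)+\epsilon_i\bigr)x_i\sigma'(w\cdot x_i) - \frac{n-1}{n}g(w)
\end{align*}
uniformly in $w$ at rate $\sup_w\|v(w)\|\lesssim\sqrt{d\iota^{p+1}/n}$ via \Cref{cor:full_gradient_concentration}, and \emph{then} uses that, conditional on everything except $x_j$, $x_j\cdot v(w)$ is Gaussian with variance $\|v(w)\|^2$, so $|x_j\cdot v(w)|\lesssim\|v(w)\|\sqrt{\iota}\lesssim\sqrt{d\iota^{p+2}/n}$ with high probability. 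The point is that the inner product of an independent standard Gaussian with a fixed vector costs $\sqrt{\iota}$, not $\sqrt{d}$ (Cauchy–Schwarz). This replaces your would-be $\sqrt{d}$ factor by $\sqrt{\iota}$, which is precisely the saving needed to fall below $\sqrt{\iota^3/d}$. To repair your write-up, replace the conditional scalar-polynomial concentration step with this two-step argument (norm-of-vector concentration, then Gaussian inner-product tail).
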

\begin{proof}
	We can decompose
	\begin{align*}
		\abs{g_n(w) \cdot x_j} &\le \abs{g(w) \cdot x_j} + \abs{[g(w)-g_n(w)] \cdot x_j}.
	\end{align*}
	For the first term, note that $g(w)$ and $x_j$ are independent so $g(w) \cdot x_j \sim N(0,\norm{g(w)}^2)$ so with high probability,
	\begin{align*}
		 \abs{g(w) \cdot x_i} \le \norm{g(w)} \sqrt{2\iota} \lesssim \sqrt{\frac{\iota^3}{d}} + \sqrt{\frac{d\iota^{p+2}}{n}}.
	\end{align*}
	Next,
	\begin{align*}
		&[g(w) - g_n(w)] \cdot x_i \\
		&= x_j \cdot \qty[\frac{1}{n} \sum_{i \ne j} \qty[\widehat f^\star(x_i) x_i \sigma'(w \cdot x_i) - g(w)]] + \frac{1}{n} \qty[\widehat f^\star(x_j) \norm{x_j}^2 \sigma'(w \cdot x_j) - g(w)].
	\end{align*}
	Note that in the first term, the $x_j$ and the sum are independent. Therefore by \Cref{cor:full_gradient_concentration} the first term is bounded with high probability by $O\qty(\sqrt{\frac{d\iota^{p+2}}{n}})$. In addition, by \Cref{lem:poly_tail}, the second term is bounded by $O\qty(\frac{\iota^{p/2} d}{n})$ which completes the proof.
\end{proof}

\subsection{Random Feature Approximation}\label{sec:appendix:random_feature_approx}

\subsubsection{Univariate Random Feature Approximation}
This section shows that after we reinitialize the biases we can use random features to transform the activation $\sigma(x) = \relu(x)$ into $\sigma(x) = x^p$ which is more natural for learning polynomials.

\begin{lemma}\label{lem:1d_rf_uniform}
	Let $a \sim \unif(\qty{-1,1})$, and $b \sim \unif([-1,1])$. Then for any $k \ge 0$ there exists $v_k(a,b)$ such that for $\abs{x} \le 1$,
	\begin{align*}
		\E[v_k(a,b)\sigma(ax + b)] = x^k \qand \sup_{a,b} \abs{v_k(a,b)} \lesssim 1.
	\end{align*}
\end{lemma}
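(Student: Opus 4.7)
The plan is to realise $x^k$ on $[-1,1]$ as an integral against ReLU features, exploiting the classical fact that $\frac{d^2}{dx^2}|x-b| = 2\delta(x-b)$, so that a $|x-b|$-kernel with a smooth weight reconstructs any twice-differentiable target. First unpack the expectation:
\[
\E[v_k(a,b)\sigma(ax+b)] = \tfrac{1}{4}\int_{-1}^1\bigl[v_k(1,b)(x+b)_+ + v_k(-1,b)(b-x)_+\bigr]\,db.
\]
Substituting $b\mapsto -b$ in the first integrand (allowed because $\unif[-1,1]$ is symmetric) and using the identities $\sigma(z)+\sigma(-z)=|z|$ and $\sigma(z)-\sigma(-z)=z$, this rewrites as $\tfrac14\int_{-1}^1[A(b)|x-b|+B(b)(x-b)]\,db$, where $A(b):=\tfrac12[v_k(1,-b)+v_k(-1,b)]$ and $B(b):=\tfrac12[v_k(1,-b)-v_k(-1,b)]$. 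Conversely, any bounded choice of $A,B$ on $[-1,1]$ recovers a bounded $v_k$, so it suffices to pick $A,B$ producing $x^k$.

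For $k\ge 2$, take $A(b)=2k(k-1)b^{k-2}$ and set $F(x):=\int_{-1}^1 A(b)|x-b|\,db$. Leibniz's rule gives $F''(x)=2A(x)=4k(k-1)x^{k-2}$ on $(-1,1)$; since $(4x^k)''$ equals the same quantity, $F(x)-4x^k$ is affine, say $F(x)=4x^k-c_0-c_1 x$ for $|x|\le 1$. Direct evaluation of $F(0)$ and $F'(0)$ as one-dimensional integrals (with one vanishing by parity depending on whether $k$ is even or odd) yields $|c_0|,|c_1|\lesssim 1$. To cancel the residual affine term, choose $B$ as a degree-one polynomial in $b$ solving the $2\times 2$ system $\int B=4c_1$, $\int bB=-4c_0$; the Vandermonde-type matrix is nonsingular with bounded inverse, so $\sup_{|b|\le 1}|B(b)|\lesssim 1$.

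Finally, read off $v_k(1,b):=A(-b)+B(-b)$ and $v_k(-1,b):=A(b)-B(b)$. Each is a polynomial in $b$ of degree at most $\max(k-2,1)$ with coefficients that are absolute constants (for fixed $k$), so $\sup_{a,b}|v_k(a,b)|\lesssim 1$ as required. The corner cases $k=0,1$ fit into the same framework with $A\equiv 0$ and $B$ chosen so that $\tfrac14\int B(b)(x-b)\,db$ is the constant $1$ or the identity $x$ respectively.

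The only real obstacle is the bookkeeping in the second paragraph --- explicitly bounding $c_0,c_1$ --- but this reduces to a short parity argument and a closed-form integral on $[-1,1]$. Everything else is elementary manipulation of ReLU identities, reflecting the general principle that on a bounded interval the span of ReLU features indexed by $(a,b)\in\{\pm 1\}\times[-1,1]$ contains all polynomials of bounded degree.
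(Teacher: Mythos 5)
Your proof is correct and takes essentially the same approach as the paper: both produce $x^k$ by integrating the second derivative of $x^k$ against a piecewise-linear ReLU kernel and then correcting the residual affine term with low-degree features. The only cosmetic difference is that you symmetrize into an $\lvert x-b\rvert$ kernel and solve a $2\times 2$ (in fact diagonal) linear system for the affine correction, whereas the paper works directly with the one-sided $(b-x)_+$ kernel via $\int_x^1 f''(b)(b-x)\,db$ and reuses the explicit $v_0(a,b)=6b$, $v_1(a,b)=2a$ to cancel the affine residue.
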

\begin{proof}
	First, for $k=0$ we can take $v_0(a,b) := 6b$. Then,
	\begin{align*}
		\E[v_0(a,b)\sigma(ax+b)]
		&= \frac{3}{2} \int_{-1}^1 b \qty[\sigma(x+b)+\sigma(-x+b)] db \\
		&= \frac{3}{2} \qty[\int_{-x}^1 b(x+b) db + \int_x^1 b(-x+b) db] \\
		&= 1
	\end{align*}
	and $\sup_{a,b} \abs{v_0(a,b)} = 6$. Next, for $k=1$ we can take $v_1(a,b) := 2a$. Then,
	\begin{align*}
		\E[v_1(a,b)\sigma(ax+b)]
		&= \frac{1}{2} \int_{-1}^1 \qty[\sigma(x+b)-\sigma(-x+b)] db \\
		&= \frac{1}{2} \qty[\int_{-x}^1 (x+b) db - \int_x^1 (-x+b) db] \\
		&= x
	\end{align*}
	and we have $\sup_{a,b} \abs{v_1(a,b)} = 2$. Next, note that by integration by parts we have for any function $f$,
	\begin{align*}
		\E[2(1-a)f''(b)\sigma(ax+b)]
		&= \int_{x}^1 f''(b)(-x+b) db \\
		&= f'(1)(-x+1) - \int_x^1 f'(b) \\
		&= f(x) + f'(1)(-x+1) - f(1) \\
		&= f(x) + [f'(1) - f(1)] - f'(1) x.
	\end{align*}
	Therefore for $k \ge 2$ if $f(x) = x^k$ and
	\begin{align*}
		v_k(a,b) := 2(1-a)f''(b) - [f'(1) - f(1)]v_0(a,b) + f'(1) v_1(a,b)
	\end{align*}
	we have
	\begin{align*}
		\E[v_k(a,b)\sigma(ax+b) = x^k \qand \sup_{a,b} \abs{v_k(a,b)} \lesssim 1.
	\end{align*}
\end{proof}
\begin{corollary}\label{corollary:v_construction}
	Let $a \sim \unif(\qty{-1,1})$, and $b \sim N(0,1)$. Then for any $k \ge 0$ there exists $v_k(a,b)$ such that for $\abs{x} \le 1$,
	\begin{align*}
		\E[v_k(a,b)\sigma(ax + b)] = x^k \qand \sup_{a,b} \abs{v_k(a,b)} \lesssim 1.
	\end{align*}
\end{corollary}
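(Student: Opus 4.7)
The plan is to reduce directly to \Cref{lem:1d_rf_uniform} via a change of measure (importance sampling), rather than redoing the integration-by-parts argument against the Gaussian density. Let $\tilde v_k(a,b)$ denote the bounded functions constructed in \Cref{lem:1d_rf_uniform} for the case where the bias is uniform on $[-1,1]$, and let $\phi(b) = \frac{1}{\sqrt{2\pi}} e^{-b^2/2}$ be the standard Gaussian density. The natural candidate is
\begin{align*}
v_k(a,b) := \tilde v_k(a,b) \cdot \frac{\1_{|b| \le 1}}{2\phi(b)}.
\end{align*}

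Next I would verify the identity $\E[v_k(a,b)\sigma(ax+b)] = x^k$ by a direct computation: since $b \sim N(0,1)$ has density $\phi(b)$, the Gaussian density cancels against $\phi(b)^{-1}$ in $v_k$, and the indicator $\1_{|b|\le 1}$ restricts integration to $[-1,1]$, leaving exactly the expectation computed in \Cref{lem:1d_rf_uniform} (up to the normalizing factor $\tfrac12$ that accounts for the Lebesgue-to-uniform density ratio). For boundedness, the key observation is that $\phi(b)$ is bounded below on $[-1,1]$ by $\phi(1) = \frac{e^{-1/2}}{\sqrt{2\pi}}$, so the multiplier $\frac{\1_{|b|\le 1}}{2\phi(b)}$ is bounded by an absolute constant; combined with the uniform bound $|\tilde v_k(a,b)| \lesssim 1$ from \Cref{lem:1d_rf_uniform}, this yields $\sup_{a,b}|v_k(a,b)| \lesssim 1$ as required.

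I do not anticipate any real obstacle here, since the statement is essentially an importance-sampling reweighting of the previous lemma. The only minor point worth flagging is that the constructed $v_k$ is discontinuous due to the indicator $\1_{|b|\le 1}$, but this is irrelevant for the corollary: only the $L^\infty$ bound and the expectation identity are claimed. If a smooth construction were desired, one could instead redo the integration-by-parts argument of \Cref{lem:1d_rf_uniform} against $\phi(b)$, using the Hermite-like identity $\phi'(b) = -b\,\phi(b)$ to generate $\sigma(ax+b)$ from derivatives of smooth test functions; however, this is unnecessarily delicate and the importance-sampling route above suffices for the downstream random-feature approximation argument.
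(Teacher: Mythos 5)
Your importance-sampling reweighting $v_k(a,b) = \tilde v_k(a,b)\,\frac{\1_{|b|\le 1}}{2\phi(b)}$ is exactly the construction the paper uses, and your verification of both the expectation identity (via cancellation of the Gaussian density) and the $L^\infty$ bound (via $\phi(b) \ge \phi(1)$ on $[-1,1]$) matches the paper's argument step for step. The proposal is correct and essentially identical to the paper's proof.
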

\begin{proof}
	Let $\overline v_k$ be the function constructed in \Cref{lem:1d_rf_uniform} and let
	\begin{align*}
		v_k(a,b) = \1_{\abs{b} \le 1} \frac{\overline v_k(a,b)}{2\mu(b)}
	\end{align*}
	where $\mu(b) := \frac{e^{-\frac{x^2}{2}}}{\sqrt{2\pi}}$ denotes the density of $b$. Then,
	\begin{align*}
		\E_{a,b}[v_k(a,b)\sigma(ax+b)]
		&= \E_a \qty[\int_b v_k(a,b) \sigma(ax+b) \mu(b) db] \\
		&= \E_{a,b\sim \unif([-1,1])} [\overline v_k(a,b) \sigma(ax+b)] \\
		&= x^k
	\end{align*}
	and
	\begin{align*}
		\sup_{a,b} \abs{v_k(a,b)}
		&= \sup_{a,b \in [-1,1]} \qty[\frac{\overline v_k(a,b)}{2\mu(b)}] \lesssim 1.
	\end{align*}
\end{proof}

\subsubsection{Multivariable Random Feature Approximation}

\begin{definition} For $\norm{w} = 1$, we define
	\begin{align*}
		r(w) := g_n(w) - \frac{Hw}{\sqrt{2\pi}}.
	\end{align*}
\end{definition}
Recall that \Cref{corollary:gn_error_bound} shows that with high probability, $\norm{r(w)} \lesssim \tilde O\qty(\sqrt{\frac{d}{n}} + \sqrt{\frac{r}{d^2}})$.

\begin{lemma} With high probability over the data $\{x_i\}_{i \in [n]}$, we have for $j \le 4p$,
\begin{align*}
	\E_w\qty[\norm{\Pi^\star r(w)}^j]^{1/j} \lesssim \sqrt{\frac{d\iota^{p+1}}{n}} + \sqrt{\frac{r^2}{d^3}}.
\end{align*}
\end{lemma}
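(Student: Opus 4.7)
The plan is to decompose $r(w) = [g_n(w) - g(w)] + [g(w) - Hw/\sqrt{2\pi}]$ and bound the $L^j(w)$ norm of $\Pi^\star$ applied to each piece. The concentration error is controlled uniformly in $w$ by the bound underlying \Cref{cor:full_gradient_concentration}, giving $\sup_w \|g_n(w) - g(w)\| \lesssim \sqrt{d\iota^{p+1}/n}$ on a high-probability event over the data, which accounts for the first term of the claim.

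For the residual, I invoke the Hermite expansion derived just above,
\begin{align*}
g(w) - \frac{Hw}{\sqrt{2\pi}} = \frac{C_1 - \beta}{2} + \frac{w(C_0 - \alpha)}{\sqrt{2\pi}} + \sum_{j \ge 2} \frac{c_{2j}\, C_{2j}(w^{\otimes 2j-1})}{(2j-1)!} + w\sum_{j \ge 1} \frac{c_{2j+2}\, C_{2j}(w^{\otimes 2j})}{(2j)!}.
\end{align*}
Since each $C_k$ is supported in $(S^\star)^{\otimes k}$, $\Pi^\star$ acts trivially on the ``Part A'' sum (the terms $C_{2j}(w^{\otimes 2j-1})$) and turns the remaining $w$'s into $\Pi^\star w$'s. \Cref{lem:alpha_beta} disposes of the first two terms with a bound absorbed into $\sqrt{d\iota^{p+1}/n}$.

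The dominant remaining contribution is ``Part B'' at $j=1$, namely $(\Pi^\star w)(w^\top Hw)$. Here I use a direct spherical Wick calculation (or equivalently the rescaling $\Pi^\star w = \sqrt{r/d}\,\tilde w$ with $\tilde w \sim \unif(S^{r-1})$): this gives $\E_w[(w^\top Hw)^2] \lesssim (\|H\|_F^2 + (\mathrm{tr}\,H)^2)/d^2 \lesssim r/d^2$ via $\|H\|_F \lesssim 1$ and $|\mathrm{tr}\,H| \le \sqrt{r}\|H\|_F$, combined with $\E_w\|\Pi^\star w\|^2 = r/d$, to yield $\|(\Pi^\star w)(w^\top Hw)\|_{L^2} \lesssim \sqrt{r^2/d^3}$ (a sixth-moment Wick computation absorbs Cauchy--Schwarz slack). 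This exactly matches the target and represents a $\sqrt{r/d}$ saving over the naive pointwise bound $|w^\top Hw| \lesssim r/d$. All remaining Part A and Part B summands are strictly smaller: for instance, $\|C_4(w^{\otimes 3})\|_{L^2}^2 \lesssim r/d^3$ by Wick using $\|C_4\|_F^2 \le 4!$, and each additional $w$ costs a factor of $1/\sqrt{d}$. Parseval $\sum_k \|C_k\|_F^2/k! = 1$ together with the Hermite decay $|c_{2j}| \lesssim 2^{-j}/(2j-1)$ makes the $j$-series geometrically convergent. Finally, to upgrade from $L^2$ to $L^j$ for $j \le 4p$, each summand is a polynomial in $w$ of degree $\le 2p+1$, so spherical hypercontractivity (equivalently, the explicit Wick formula for higher even moments) gives $\|\cdot\|_{L^j} \le (4p)^{O(p)}\|\cdot\|_{L^2}$, which is $O(1)$ and absorbed into $\lesssim$.

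The main obstacle will be the moment identity for $(\Pi^\star w)(w^\top Hw)$: showing that averaging recovers the factor $\sqrt{r/d}$ over the pointwise estimate, which is where the subspace structure $\mathrm{rank}(H) = r$ together with $|\mathrm{tr}\,H|\le \sqrt{r}\|H\|_F$ is actually used. Once that calculation is in place, the remainder reduces to mechanical Wick/Parseval bookkeeping over the Hermite tail combined with a routine hypercontractive $L^2 \to L^j$ transfer.
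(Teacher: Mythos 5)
Your decomposition, Hermite expansion, and term-by-term bookkeeping match the paper's proof. The paper handles the Part B residual by \textbf{Cauchy--Schwarz} to split $\|\Pi^\star w\|$ from $|C_{2k}(w^{\otimes 2k})|$ and then invokes the general tensor estimate (\Cref{cor:tensor_power_r}) plus Gaussian hypercontractivity (\Cref{lem:gaussian_hypercontractivity}); you unfold the $k=1$ instance of that estimate into an explicit Wick calculation, $\E[(w^\top H w)^2] \lesssim (\|H\|_F^2 + (\mathrm{tr}\,H)^2)/d^2$, with the rank condition entering through $|\mathrm{tr}\,H| \le \sqrt{r}\|H\|_F$ --- this is precisely what the paper's inequality $\|T(I^{\otimes l})\|_F \le r^{l/2}\|T\|_F$ specializes to when $T = H$, $l=1$, so the two are the same argument at bottom. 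Your $L^2 \to L^j$ transfer via hypercontractivity with constant $(4p)^{O(p)} = O(1)$ mirrors the paper's use of \Cref{lem:gaussian_hypercontractivity} directly at $L^j$. One small caution: $\Pi^\star w$ for $w \sim \unif(S^{d-1})$ is \emph{not} literally $\sqrt{r/d}\,\tilde w$ with $\tilde w \sim \unif(S^{r-1})$ (its norm is Beta-distributed, not constant), but since you only use moment bounds $\E\|\Pi^\star w\|^{2j} \lesssim (r/d)^j$, the conclusion is unaffected; I'd just phrase it as a moment bound rather than a distributional identity.
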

\begin{proof}
	We can decompose $r(w) = \qty[g_n(w) - g(w)] + \qty[g(w) - \frac{Hw}{\sqrt{2\pi}}]$ and note that
	\begin{align*}
		\E_w\qty[\norm{\Pi^\star r(w)}^j]^{1/j}
		&\le \E_w\qty[\norm{\Pi^\star \qty[g_n(w) - g(w)]}^j]^{1/j} + \E_w\qty[\norm{\Pi^\star \qty[g(w) - \frac{Hw}{\sqrt{2\pi}} ]}^j]^{1/j} \\
		&\le \E_w\qty[\norm{\Pi^\star \qty[g(w) - \frac{Hw}{\sqrt{2\pi}} ]}^j]^{1/j} + O\qty(\sqrt{\frac{d\iota^{p+1}}{n}}).
	\end{align*}
	Recall that
	\begin{align*}
		&g(w) - \frac{Hw}{\sqrt{2\pi}} \\
		&= \qty(\frac{C_1 - \beta}{2} + \frac{w (C_0 - \alpha)}{\sqrt{2\pi}}) + \sum_{k \ge 2} \frac{c_{2k} C_{2k}(w^{\otimes 2k-1})}{(2k-1)!} + w \sum_{k \ge 1} \frac{c_{2k+2} C_{2k}(w^{\otimes 2k})}{(2k)!}.
	\end{align*}
	Therefore,
	\begin{align*}
		\Pi^\star \qty[g(w) - \frac{Hw}{\sqrt{2\pi}}] = \sum_{k \ge 2} \frac{c_{2k} C_{2k}(w^{\otimes 2k-1})}{(2k-1)!} + \Pi^\star w \sum_{k \ge 1} \frac{c_{2k+2} C_{2k}(w^{\otimes 2k})}{(2k)!} + O\qty(\frac{d\iota^{p+1}}{n}).
	\end{align*}
	We can bound the $j$th moment term by term. We have by \Cref{cor:tensor_power_r} and \Cref{lem:gaussian_hypercontractivity} that for $k \ge 2$,
	\begin{align*}
		\qty(\E_w \norm{C_{2k}(w^{\otimes 2k-1})}^j)^{1/j} \lesssim \sqrt{\frac{r^{\lfloor\frac{2k-1}{2}\rfloor}}{d^{2k-1}}} \lesssim \sqrt{\frac{r}{d^3}}
	\end{align*}
	and for $k \ge 1$,
	\begin{align*}
	\qty(\E_w \qty[\norm{\Pi^\star w}\abs{C_{2k}(w^{\otimes 2k})}]^j)^{1/j}
	&\lesssim \qty(\E_w \norm{\Pi^\star w}^{2j})^{\frac{1}{2j}} \qty(\E_w \abs{C_{2k}(w^{\otimes 2k})}^{2j})^{\frac{1}{2j}} \\
	&\lesssim \sqrt{\frac{r}{d}} \sqrt{\frac{r^k}{d^{2k}}} \\
	&\lesssim \sqrt{\frac{r^2}{d^3}}.
	\end{align*}
\end{proof}

We can now show that the random features $g_n(w)$ are sufficiently expressive to allow us to efficiently represent any polynomial of degree $p$ restricted to the principal subspace $S^\star$.
\begin{lemma}\label{lem:g_min_sval} For any $k \le p$, there exists an absolute constant $C$ such that if $n \ge Cd^2 r \kappa^2 \iota^{p+1}$ and $d \ge C \kappa r^{3/2}$, 
	\begin{align*}
		\mat\qty(\E\qty[\qty(\Pi^\star g_n(w))^{\otimes 2k}]) \succsim (rd\kappa^2)^{-k} \Pi_{\sym^k(S^\star)}
	\end{align*}
	where $\Pi_{\sym^k(S^\star)}$ denotes the orthogonal projection onto symmetric $k$ tensors restricted to $S^\star$.
\end{lemma}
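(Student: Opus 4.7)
The strategy is to reduce via change of variables, then expand multinomially and control the residuals with moment bounds. Because $H$ is symmetric with $\spn(H)=S^\star$ and $\norm{H^\dagger}=\kappa\sqrt r$, we have $HH^\dagger=\Pi^\star$ and $(H^2)^{\otimes k}$ has all eigenvalues on $\sym^k(S^\star)$ bounded below by $(\kappa^2 r)^{-k}$. Conjugating both sides by $H^{\otimes k}$ therefore shows it suffices to prove the cleaner bound
\[
\mat\!\bigl(\E\qty[(H^\dagger\Pi^\star g_n(w))^{\otimes 2k}]\bigr)\ \succsim\ d^{-k}\,\Pi_{\sym^k(S^\star)},
\]
since multiplying both sides back by $H^{\otimes k}$ recovers $d^{-k}(H^2)^{\otimes k}\Pi\succsim(rd\kappa^2)^{-k}\Pi$.

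Set $\sqrt{2\pi}\,H^\dagger\Pi^\star g_n(w)=u+\tilde v$ with $u:=\Pi^\star w$ and $\tilde v:=\sqrt{2\pi}\,H^\dagger\Pi^\star r(w)$. Combining $\norm{H^\dagger}\le\kappa\sqrt r$ with the moment bound on $\norm{\Pi^\star r(w)}$ from the preceding lemma yields
\[
\E\bigl[\norm{\tilde v}^{2m}\bigr]^{1/m}\ \lesssim\ \kappa^2 r\bigl(d\iota^{p+1}/n+r^2/d^3\bigr)\ \lesssim\ 1/(Cd),
\]
where the final inequality uses both hypotheses so that the two summands collapse to $1/(Cd)$ and $1/(C^2 d)$ respectively. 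For any unit $T\in\sym^k(S^\star)$, the pointwise inequality $(a+b)^2\ge a^2/2-b^2$ separates
\[
\E\!\bigl[\langle T,(u+\tilde v)^{\otimes k}\rangle^2\bigr]\ \ge\ \tfrac12\E[T(u,\ldots,u)^2]\ -\ \E[(\mathrm{cross})^2].
\]
The main term satisfies $T(\Pi^\star w,\ldots,\Pi^\star w)=T(w,\ldots,w)$ by the support of $T$ in $S^\star$, and the standard sphere moment expansion gives $\E[T(w,\ldots,w)^2]\gtrsim_k\norm{T}_F^2/d^k$ from the $k!$ across-matchings of $[2k]$ (each contributing $\norm{T}_F^2$ by symmetry of $T$) and the nonnegative within-matching trace contributions.

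For a cross term with $a$ copies of $u$ and $c=k-a\ge 1$ copies of $\tilde v$, I matricize $T$ as $T_{\mathrm{mat}}:(\R^r)^{\otimes a}\to(\R^r)^{\otimes c}$ and apply Cauchy-Schwarz pointwise and then in expectation,
\[
\E\bigl[T(u,\ldots,u,\tilde v,\ldots,\tilde v)^2\bigr]\ \le\ \sqrt{\E\,\norm{T_{\mathrm{mat}}(u^{\otimes a})}^4}\cdot\sqrt{\E\,\norm{\tilde v}^{4c}}.
\]
The crucial estimate is $\E\,\norm{T_{\mathrm{mat}}(u^{\otimes a})}^2=\sum_\alpha\E[T_\alpha(w,\ldots,w)^2]\lesssim\norm{T}_F^2/d^a$, obtained by applying the support identity $T_\alpha(\Pi^\star w,\ldots)=T_\alpha(w,\ldots)$ to each symmetric slice $T_\alpha$, using the sphere-moment upper bound per slice, and summing $\sum_\alpha\norm{T_\alpha}_F^2=\norm{T}_F^2$; Gaussian hypercontractivity on the sphere upgrades this to an $L^4$ bound of the same order since $\norm{T_{\mathrm{mat}}(u^{\otimes a})}^2$ is a polynomial in $w$ of degree $2a=O(p)$. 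Combined with $\E[\norm{\tilde v}^{4c}]^{1/2}\lesssim(Cd)^{-c}$, each cross term is at most $\lesssim\norm{T}_F^2/(C^c d^k)$, so the $O_k(1)$ cross terms sum to at most $\norm{T}_F^2/(Cd^k)$, strictly dominated by the main term for $C$ large enough depending on $p$.

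The main obstacle I anticipate is obtaining the matricization bound at the correct $d^{-a}$ rate rather than a naive $(r/d)^a$. A crude pointwise Cauchy-Schwarz $\abs{T(u,\ldots,u,\tilde v,\ldots,\tilde v)}\le\norm{T}_F\norm{u}^a\norm{\tilde v}^c$ combined with $\E\norm{\Pi^\star w}^{2a}\asymp(r/d)^a$ would introduce a fatal factor of $r^a$, causing the residual to exceed the main term by roughly $r^{k-1}\kappa^{2(k-1)}$ and necessitating the much stronger hypothesis $n\gtrsim d^2 r^p\kappa^{2p}$. Removing this $r^a$ requires exploiting that each slice $T_\alpha$ is itself supported on $S^\star$ so that the support identity holds slice-by-slice, and then summing the slice-wise sphere-moment bounds before taking $\sqrt{\cdot}$, which is precisely what closes the gap and makes the stated hypotheses sufficient.
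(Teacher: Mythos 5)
Your conjugation by $H^\dagger$ is a clean and valid reduction, and the overall strategy (binomial expansion around $u = \Pi^\star w$, Young's inequality to protect the main term, moment bounds for the residual) matches the paper's in spirit. However, the step you yourself flag as crucial is where the gap is, and the fix you propose does not close it.

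You claim $\E\norm{T_{\mathrm{mat}}(u^{\otimes a})}^2 = \sum_\alpha \E\bigl[T_\alpha(w,\ldots,w)^2\bigr] \lesssim \norm{T}_F^2 / d^a$ by invoking a ``sphere-moment upper bound per slice.'' But for a symmetric $a$-tensor $T_\alpha$ supported on $S^\star$ (with $\dim S^\star = r$), the sphere moment expansion (the paper's \Cref{lem:gaussian_power_tensor_expansion} normalized by chi moments) gives
\begin{align*}
\E_{\bw\sim S^{d-1}}\bigl[\langle T_\alpha, \bw^{\otimes a}\rangle^2\bigr]
\;\asymp\; \frac{1}{d^a}\sum_{2l\le a} (a-2l)!\,\binom{a}{2l}^2 ((2l-1)!!)^2\,\norm{T_\alpha(I^{\otimes l})}_F^2,
\end{align*}
and $\norm{T_\alpha(I^{\otimes l})}_F \le r^{l/2}\norm{T_\alpha}_F$ (a partial trace over an $r$-dimensional subspace) is tight. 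So the correct per-slice bound is $\lesssim r^{\lfloor a/2\rfloor}\norm{T_\alpha}_F^2 / d^a$, not $\norm{T_\alpha}_F^2/d^a$; taking $T_\alpha$ proportional to $\sym(\Pi^{\star\otimes(a/2)})$ saturates it. The slicing removes the naive $(r/d)^a$ factor coming from $\E\norm{\Pi^\star w}^{2a}$, but it does not remove the trace contributions $T_\alpha(I^{\otimes l})$, which carry their own powers of $r$. With the corrected bound, the $c=1$ cross term is $\lesssim r^{\lfloor (k-1)/2\rfloor}\norm{T}_F^2/(Cd^k)$, which is not dominated by your main-term lower bound $\norm{T}_F^2/d^k$ for an absolute constant $C$ and growing $r$; the hypothesis $d \gtrsim \kappa r^{3/2}$ does not rescue this.

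The reason the paper avoids this is that it never compares the cross terms to the crude lower bound $\norm{T}_F^2/d^k$; instead it compares them directly to the actual main term $\E\langle T, u^{\otimes k}\rangle^2$ via \Cref{lem:sphere_power_C2_growth}, which gives $\E\norm{T(\bw^{\otimes k-c})}_F^2 \lesssim d^c\,\E\langle T,\bw^{\otimes k}\rangle^2$ \emph{without} extra $r$ factors, because the trace contributions $\norm{T(I^{\otimes l})}_F^2$ appear on both sides and cancel coefficient-by-coefficient. If you replace your ``crucial estimate'' by this ratio bound, your cross term becomes $\lesssim C^{-c}\,\E\langle T,u^{\otimes k}\rangle^2$, the argument closes, and only at the very end do you insert the lower bound on the main term. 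That one substitution makes your proof essentially the paper's proof in conjugated coordinates.
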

\begin{proof}
	Note that because every vector in $\spn \mat\qty(\E\qty[\qty(\Pi^\star g_n(w))^{\otimes 2k}])$ is a vectorized symmetric $k$ tensor, it suffices to show that
	\begin{align*}
		\E_{w}[(\Pi^\star g_n(w))^{\otimes 2k}](T,T) \gtrsim (rd\kappa^2)^{-k}
	\end{align*}
	for all symmetric $k$ tensor $T$ with $\norm{T}_F^2 = 1$. Recall that $g_n(w) = \frac{H w}{\sqrt{2\pi}} + r(w)$. Therefore by the binomial theorem,
	\begin{align*}
		\left\langle T, (\Pi^\star g_n(w))^{\otimes k} \right\rangle = \left\langle T, \qty(\frac{Hw}{\sqrt{2\pi}})^{\otimes k} \right\rangle + \delta(w)
	\end{align*}
	where $\abs{\delta(w)} \lesssim \sum_{i=1}^k \norm{T\qty((Hw)^{\otimes k-i})}_F \norm{\Pi^\star r(w)}^i.$ Therefore by Young's inequality,
	\begin{align*}
		\E\qty[\left\langle T, (\Pi^\star g_n(w))^{\otimes k} \right\rangle^2] \ge \E\qty[\frac{1}{2}\left\langle T, \qty(\frac{H w}{\sqrt{2\pi}})^{\otimes k} \right\rangle^2] - \E[\delta(w)^2].
	\end{align*}
	Next by Cauchy-Schwarz,
	\begin{align*}
		\E[\delta(w)^2]
		&\lesssim \sum_{i=1}^k \sqrt{\E_{w}\qty[\norm{T\qty((H w)^{\otimes k-i})}_F^4]\E_{w}\qty[\norm{\Pi^\star r(w)}^{4i}]} \\
		&\lesssim \sum_{i=1}^k \E_{w}\qty[\norm{T\qty((H w)^{\otimes k-i})}_F^2] \sqrt{\E_{w}\qty[\norm{\Pi^\star r(w)}^{4i}]}.
	\end{align*}
	Let $\hat T$ be the symmetric $k$ tensor defined by $\hat T(v_1,\ldots,v_k) = T(H v_1,\ldots,Hv_k)$. Then by \Cref{lem:sphere_power_C2_growth},
	\begin{align*}
		\E\qty[\norm{T((H w)^{\otimes k-i})}_F^2]
		&\le \frac{1}{\lambda_{min}(H)^{2i}} \E\qty[\norm{\hat T(w^{\otimes k-i})}^2] \\
		&\lesssim \qty(\frac{d}{\lambda_{min}(H)^2})^{i} \E\qty[\left\langle T, \qty(\frac{Hw}{\sqrt{2\pi}})^{\otimes k} \right\rangle^2] \\
		&= (rd\kappa^2)^{i}\E\qty[\left\langle T, \qty(\frac{Hw}{\sqrt{2\pi}})^{\otimes k} \right\rangle^2].
		\end{align*}
	Therefore,
	\begin{align*}
		\E_{w}[\delta(w)^2] \lesssim \E\qty[\left\langle T, \qty(\frac{Hw}{\sqrt{2\pi}})^{\otimes k} \right\rangle^2] \sum_{i=1}^k \qty((r d \kappa^2)\qty(\frac{d\iota^{p+1}}{n} + \frac{r^2}{d^3}))^i.
	\end{align*}
	Because we assumed $n \ge Cd^2 r \kappa^2 \iota^{p+1}$ and $d \ge C \kappa r^{3/2}$ for a sufficiently large constant $C$, we have
	\begin{align*}
		\E_{w}[\delta(w)^2] \lesssim \frac{1}{4} \E\qty[\left\langle T, \qty(\frac{Hw}{\sqrt{2\pi}})^{\otimes k} \right\rangle^2].
	\end{align*}
	Combining everything gives
	\begin{align*}
		\E\qty[\left\langle T, (\Pi^\star g(w))^{\otimes k} \right\rangle^2]
		&\ge \E\qty[\frac{1}{2}\left\langle T, \qty(\frac{H w}{\sqrt{2\pi}})^{\otimes k} \right\rangle^2] - \E[\delta(w)^2] \\
		&\ge \frac{1}{4}\E\qty[\left\langle T, \qty(\frac{H w}{\sqrt{2\pi}})^{\otimes k} \right\rangle^2] \\
		&\gtrsim d^{-k} \norm{\hat T}_F^2 \\
		&\ge d^{-k} \lambda_{min}(H)^{2k} \\
		&= (rd\kappa^2)^{-k}.
	\end{align*}
\end{proof}

\begin{corollary}\label{corollary:z_construction}
	Assume $n \ge Cd^2 r \kappa^2 \iota^{p+1}$ and $d \ge C \kappa r^{3/2}$ for a sufficiently large constant $C$. Then for any $k \le p$ and any symmetric $k$ tensor $T$ supported on $S^\star$, there exists $z_T(w)$ such that
	\begin{align*}
		\E_{w}[z_T(w)(g_n(w) \cdot x)^p] = \langle T,x^{\otimes k} \rangle
	\end{align*}
	and we have the bounds
	\begin{align*}
		\E_{w}[z_T(w)^2] \lesssim (rd\kappa^2)^k \norm{T}_F^2 \qand \abs{z_T(w)} \lesssim (rd\kappa^2)^k \norm{T}_F \norm{g_n(w)}^k.
	\end{align*}
\end{corollary}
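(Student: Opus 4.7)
The plan is to construct $z_T$ via a Riesz representation argument built on the spectral lower bound of Lemma~\ref{lem:g_min_sval}. First, I would introduce the linear map $R:\sym^k(S^\star)\to L^2_w$ defined by $R(S)(w) := \langle S, (\Pi^\star g_n(w))^{\otimes k}\rangle$. Its Gram operator $R^*R$, matricized, is precisely $\mat(\E_w[(\Pi^\star g_n(w))^{\otimes 2k}])$ restricted to $\sym^k(S^\star)$, which by Lemma~\ref{lem:g_min_sval} satisfies $R^*R \succsim (rd\kappa^2)^{-k}\,\Pi_{\sym^k(S^\star)}$. Hence $R^*R$ is invertible on $\sym^k(S^\star)$ with $\|(R^*R)^{-1}\| \lesssim (rd\kappa^2)^k$.

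Given $T\in\sym^k(S^\star)$, I would then set $\tilde T := (R^*R)^{-1}T$, so that $\|\tilde T\|_F \lesssim (rd\kappa^2)^k\|T\|_F$, and define $z_T(w) := R(\tilde T)(w) = \langle \tilde T, (\Pi^\star g_n(w))^{\otimes k}\rangle$. By the adjoint relation, for every $S\in\sym^k(S^\star)$ we have $\E_w[z_T(w)\,R(S)(w)] = \langle R^*R\tilde T, S\rangle = \langle T, S\rangle$. Specializing $S = (\Pi^\star x)^{\otimes k}$ yields the degree-matched identity $\E_w[z_T(w)\,(\Pi^\star g_n(w)\cdot\Pi^\star x)^k] = \langle T, x^{\otimes k}\rangle$, using that $T$ is supported on $S^\star$. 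To reach the stated contraction against $(g_n(w)\cdot x)^p$, I would decompose $g_n(w)\cdot x = \Pi^\star g_n(w)\cdot\Pi^\star x + (I-\Pi^\star) g_n(w)\cdot(I-\Pi^\star) x$, expand the $p$-th power binomially, and argue that (i) every term carrying a positive power of $(I-\Pi^\star) g_n(w)$ integrates to zero against $z_T(w)$ --- since $z_T$ depends only on $\Pi^\star g_n(w)$ and, at leading order, the $\Pi^\star$ and $(I-\Pi^\star)$ parts of $g_n(w)$ decouple thanks to the residual bound of Corollary~\ref{corollary:gn_error_bound} --- while (ii) the remaining pure $\Pi^\star$ contribution $(\Pi^\star g_n(w)\cdot\Pi^\star x)^p$, when paired with $z_T(w)$, collapses to the degree-$k$ polynomial $\langle T,x^{\otimes k}\rangle$ because $\tilde T$ lives in $\sym^k(S^\star)$ and $R^*R$ annihilates the higher symmetric degrees carried by the extra $p-k$ copies of $\Pi^\star g_n(w)\cdot\Pi^\star x$.

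The two stated norm bounds then follow routinely from the construction. For the second moment, $\E_w[z_T(w)^2] = \langle \tilde T, R^*R\tilde T\rangle = \langle \tilde T, T\rangle \le \|\tilde T\|_F\|T\|_F \lesssim (rd\kappa^2)^k\|T\|_F^2$. For the pointwise bound, Cauchy--Schwarz on the tensor inner product gives $|z_T(w)| = |\langle \tilde T, (\Pi^\star g_n(w))^{\otimes k}\rangle| \le \|\tilde T\|_F\,\|\Pi^\star g_n(w)\|^k \lesssim (rd\kappa^2)^k\|T\|_F\,\|g_n(w)\|^k$, using $\|v^{\otimes k}\|_F = \|v\|^k$ and the fact that the projection $\Pi^\star$ is a contraction. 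The hard part will be step (ii): reconciling the degree-$p$ power appearing on the left with the degree-$k$ tensor on the right without allowing $z_T$ to see $x$. My approach is to first pin down the degree-matched identity --- where the Riesz machinery applies verbatim --- and then lift it to the $p$-th power by exploiting the symmetric tensor embedding $\sym^k(S^\star)\hookrightarrow\sym^p(S^\star)$ together with the moment bounds on $\unif(S^{d-1})$, carefully absorbing the discrepancy into higher-order residuals controlled by Corollary~\ref{corollary:gn_error_bound}.
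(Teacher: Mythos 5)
Your core construction is exactly the paper's. The paper sets $z_T(w) := \vec(T)^T \mat(\E[g_n(w)^{\otimes 2k}])^\dagger \vec(g_n(w)^{\otimes k})$, which is your Riesz representative $\langle (R^*R)^{-1}T,\, (\Pi^\star g_n(w))^{\otimes k}\rangle$ written in coordinates, and both norm bounds are obtained the same way you obtain them (the second moment via $\langle \tilde T, T\rangle$ and Cauchy--Schwarz, the pointwise bound via $\|v^{\otimes k}\|_F = \|v\|^k$). The identity the paper actually proves is the degree-matched one, $\E_w[z_T(w)(g_n(w)\cdot x)^k] = \langle T, x^{\otimes k}\rangle$: the exponent $p$ in the corollary's display is a typo for $k$. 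You can see this not only from the downstream use in \Cref{lem:perfect_features_single_tensor}, where the univariate weight $v_k(a,b)$ of \Cref{corollary:v_construction} produces $(g_n(w)\cdot x)^k$ and $z_T$ is then contracted against that, but also by a homogeneity count: $\E_w[z(w)(g_n(w)\cdot x)^p]$ is a homogeneous degree-$p$ polynomial in $x$, so it cannot equal the degree-$k$ polynomial $\langle T, x^{\otimes k}\rangle$ for $k<p$ and $T\neq 0$, no matter what $z$ is. So the part of your argument you regard as settled is the entire proof.

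The part you flag as ``the hard part'' is therefore unnecessary, and it is worth recording that neither mechanism you propose for it would work. The corollary asserts an exact identity, so nothing can be absorbed into the residuals of \Cref{corollary:gn_error_bound}: (i) the cross terms carrying powers of $(I-\Pi^\star)g_n(w)$ do not integrate to zero against $z_T(w)$ --- the two components of $g_n(w)$ are not independent and there is no exact parity cancellation --- they are merely small; and (ii) $\E_w[z_T(w)(\Pi^\star g_n(w)\cdot \Pi^\star x)^p]$ is itself homogeneous of degree $p$ in $x$, and $R^*R$, being an operator on $\sym^k(S^\star)$, has no mechanism for ``annihilating'' the extra $p-k$ contractions. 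One genuine merit of your write-up is that you carry $\Pi^\star$ consistently through the Gram operator and the representative, which makes the appeal to \Cref{lem:g_min_sval} (stated for $\Pi^\star g_n$) cleaner than the paper's own notation.
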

\begin{proof}
	Let
	\begin{align*}
		z_T(w) := \vec(T)^T \mat(\E[g_n(w)^{\otimes 2k}])^\dagger \vec(g_n(w)^{\otimes k}).
	\end{align*}
	Note that $\vec(T) \in \spn(\mat(\E[g(w)^{\otimes 2k}]))$ by \Cref{lem:g_min_sval}. Therefore,
	\begin{align*}
		&\E_{w}[z_T(w)(g_n(w) \cdot x)^k \\
		&= \E_{w}\qty[\vec(T)^T \mat(\E[g_n(w)^{\otimes 2k}])^\dagger \vec(g(w)^{\otimes k}) \vec(g_n(w)^{\otimes k})^T \vec(x^{\otimes k})] \\
		&= \langle T, x^{\otimes k} \rangle.
	\end{align*}
	For the bounds on $z$ we have
	\begin{align*}
		&\E_{w} [z_T(w)^2] \\
		&= \E_{w} [\vec(T)^T \mat(\E[g_n(w)^{\otimes 2k}])^\dagger \vec(g(w)^{\otimes k})\vec(g_n(w)^{\otimes k})^T \mat(\E[g(w)^{\otimes 2k}])^\dagger \vec(T)] \\
		&= \Vec(T)^T \mat(\E[g_n(w)^{\otimes 2k}])^\dagger \vec(T) \\
		&\lesssim (rd\kappa^2)^k \norm{T}_F^2
	\end{align*}
	and
	\begin{align*}
		\abs{z_T(w)}
		&= \abs{\vec(T)^T \mat(\E[g_n(w)^{\otimes 2k}])^\dagger \vec(g_n(w)^{\otimes k})} \\
		&\lesssim (rd\kappa^2)^k \norm{T}_F \norm{g_n(w)}^k.
	\end{align*}
\end{proof}

\begin{lemma}\label{lem:perfect_features_single_tensor}
	Assume $n \ge Cd^2 r \kappa^2 \iota^{p+1}$ and $d \ge C \kappa r^{3/2}$ for a sufficiently large constant $C$. Let $\eta_1 = \sqrt{\frac{d}{C^2 \iota^3}}$, let $k \le p$ and let $T$ be a $k$ tensor. Then with high probability, there exists $h_T(a,w,b)$ such that if
	\begin{align*}
		f_{h_T}(x) := \E_{a,w,b}[h_T(a,w,b)\sigma(w^{(1)} \cdot x + b)]
	\end{align*}
	we have
	\begin{align*}
		\frac{1}{n} \sum_{i=1}^n (f_h(x_i) - \langle T, x_i^{\otimes p} \rangle)^2 &\lesssim \frac{1}{n}
	\end{align*}
	and the moment bounds
	\begin{align*}
		\E_{w,a,b}[h_T(a,w,b)^2] &\lesssim r^k \kappa^{2k} \iota^{3k} \norm{T}_F^2 \\
		\sup_w \abs{h_T(a,w,b)}] &\lesssim r^k \kappa^{2k} \iota^{6k} \norm{T}_F. 
	\end{align*}
\end{lemma}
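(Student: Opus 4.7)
The plan is to combine \Cref{corollary:v_construction} (which realizes the one-dimensional monomial $u^k$ as a population random-feature integral against $\sigma$) with \Cref{corollary:z_construction} (which realizes $\langle T, x^{\otimes k}\rangle$ as an expectation of $z_T(w)(g_n(w)\cdot x)^k$ over $w$). Concretely, I would define
\begin{align*}
h_T(a,w,b) \;:=\; (2\eta_1)^{-k}\, z_T(w)\, v_k(a,b).
\end{align*}
Since the first step of \Cref{algorithm:layerwise_training} with $\lambda_1 = \eta_1^{-1}$ produces $w^{(1)} = 2\eta_1\, a\, g_n(w)$ (after the norm normalization of \Cref{sec:appendix:expand_feature}), setting $u(w,x) := 2\eta_1\, g_n(w)\cdot x$ and using independence of $(a,b)$ from $w$ at initialization,
\begin{align*}
f_{h_T}(x) \;=\; (2\eta_1)^{-k}\, \E_w\!\left[z_T(w)\, \E_{a,b}\!\left[v_k(a,b)\,\sigma(a\, u(w,x) + b)\right]\right].
\end{align*}
On the event $|u(w,x)| \le 1$, \Cref{corollary:v_construction} makes the inner expectation equal to $u(w,x)^k = (2\eta_1)^k (g_n(w)\cdot x)^k$; the scaling factors cancel and \Cref{corollary:z_construction} then yields exactly $\langle T, x^{\otimes k}\rangle$.

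The key quantitative step is controlling the bad event $|u(w,x_i)| > 1$. With the chosen $\eta_1 = \sqrt{d/(C^2 \iota^3)}$ and \Cref{lem:gn_xi_concentration}, we have $|u(w, x_i)| \le 2/C \le 1$ for all $i \in [n]$ simultaneously on an event of $w$-probability at least $1 - e^{-\iota}$. On the complementary bad event I would bound the integrand crudely via $|v_k| \lesssim 1$, $\sigma(au+b) \le |u| + |b|$, and $|z_T(w)| \lesssim (rd\kappa^2)^k \norm{T}_F \norm{g_n(w)}^k$ from \Cref{corollary:z_construction}, and then apply Cauchy--Schwarz together with the high-probability bound $\norm{g_n(w)} \lesssim \sqrt{\iota^2/d}$ of \Cref{corollary:gn_norm}, Gaussian tail bounds on $b$, and hypercontractive moments via \Cref{lem:gaussian_hypercontractivity}. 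Because $e^{-\iota} \le 1/\poly(n,m,d)$, the exponentially small probability absorbs all polynomial moment growth, so the per-sample error is $O(1/n)$ and a union bound over $i \in [n]$ gives $\frac{1}{n}\sum_i (f_{h_T}(x_i) - \langle T, x_i^{\otimes k}\rangle)^2 \lesssim 1/n$.

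For the moment bounds I use independence of $v_k(a,b)$ from $z_T(w)$ to compute
\begin{align*}
\E[h_T(a,w,b)^2] \;=\; (2\eta_1)^{-2k}\, \E_w[z_T(w)^2]\, \E_{a,b}[v_k(a,b)^2] \;\lesssim\; (\iota^3/d)^k (rd\kappa^2)^k \norm{T}_F^2 \;=\; r^k \kappa^{2k} \iota^{3k} \norm{T}_F^2,
\end{align*}
invoking the variance bound of \Cref{corollary:z_construction}, boundedness of $v_k$, and the choice of $\eta_1$. For the sup bound, combining $|v_k| \lesssim 1$, $(2\eta_1)^{-k} \lesssim (\iota^3/d)^{k/2}$, the pointwise bound $|z_T(w)| \lesssim (rd\kappa^2)^k \norm{T}_F \norm{g_n(w)}^k$, and $\norm{g_n(w)} \lesssim \sqrt{\iota^2/d}$ (valid with high probability under the hypothesis on $n$) yields the claimed polynomial-in-$\iota$ bound on $\sup |h_T|$.

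The main obstacle is the truncation argument on the bad event: one must control the tail contribution simultaneously over the randomness of $w$ and uniformly across the $n$ training points to match the $1/n$ empirical-error target. This is handled by choosing $\iota = C_\iota \log(nmd)$ large enough that $e^{-\iota}$ dominates the polynomial moment factors, and then union-bounding over $i \in [n]$; once this is in place, the identity on the good event together with the moment computations gives all three claimed bounds.
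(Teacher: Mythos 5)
Your construction of $h_T$ matches the paper's up to one crucial omission: the paper defines
\begin{align*}
h_T(a,w,b) := \frac{v_k(a,b)\,z_T(w)}{(2\eta_1)^k}\,\1_{\eta_1\|g_n(w)\|\le 1}\prod_{i=1}^n \1_{|g_n(w)\cdot x_i|\le 1},
\end{align*}
i.e.\ the random-feature weight is \emph{truncated} so that it vanishes whenever $g_n(w)$ is atypically large, whereas your $h_T = (2\eta_1)^{-k} z_T(w) v_k(a,b)$ has no truncation. This is not cosmetic. The third claim of the lemma is a \emph{supremum} over $w$, and the pointwise bound $|z_T(w)|\lesssim (rd\kappa^2)^k\|T\|_F\|g_n(w)\|^k$ from \Cref{corollary:z_construction} turns into the stated $\iota^{O(k)}$ bound only when $\|g_n(w)\|\lesssim \sqrt{\iota^2/d}$. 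That estimate for $\|g_n(w)\|$ (from \Cref{corollary:gn_norm}) holds only with high probability over $w$; it is not uniform, and there is a positive-measure set of $w$ on which $\|g_n(w)\|$ can be as large as $\mathrm{poly}(n,d)$. So without the indicator the supremum genuinely blows up, and this sup bound cannot be rescued by moment or tail arguments because it is a worst-case, not average-case, statement. (The sup bound is used downstream in \Cref{lem:perfect_features_finite_m} via Bernstein's inequality, so it is load-bearing.)

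Once the truncation is included, your remaining steps are essentially the paper's. The $\E[h_T^2]$ bound via $\E_w[z_T(w)^2]\lesssim (rd\kappa^2)^k\|T\|_F^2$ does not need the indicator (a second moment suffices), so that part of your argument is fine as written. For the per-sample error, your plan of using Cauchy--Schwarz plus the exponentially small bad-event probability $e^{-\iota}$ to absorb polynomial moment factors is the same mechanism the paper invokes (it bounds the error contribution by $\mathrm{poly}(n,d)\,e^{-\iota}\le 1/n$), and with the truncated $h_T$ this becomes even cleaner because on the support of $h_T$ you automatically have $\eta_1\|g_n(w)\|\le 1$, so no separate moment-control step is needed for the $w$ tail; you only need \Cref{lem:gn_xi_concentration} to control the data-dependent bad event uniformly over $i\in[n]$. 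In short: add the truncation indicators to $h_T$, and the rest of your proof goes through and coincides with the paper's.
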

\begin{proof}
	We define
	\begin{align*}
		h_T(a,w,b) := \frac{v_k(a,b) z_T(w)}{(2\eta)^k} \1_{\eta_1\norm{g_n(w)} \le 1} \prod_{i=1}^n \1_{\abs{g_n(w) \cdot x_i} \le 1}.
	\end{align*}
	where $v_k(a,b)$ and $z_T(w)$ are constructed in \Cref{corollary:v_construction} and \Cref{corollary:z_construction} respectively. Recall that $w^{(1)} = 2 \eta_1 a g_n(w)$. Then for $x \in \qty{x_1,\ldots,x_n}$,
	\begin{align*}
		&f_{h_T}(x) \\
		&= \frac{1}{(2\eta_1)^k}\E_{a,w,b}\qty[v_k(a,b) z_T(w) \sigma(2 \eta_1 a g_n(w) \cdot x + b)] \\
		&= \E_{w}\qty[z_T(w) \qty[(g_n(w) \cdot x)^k + O\qty(\abs{g_n(w) \cdot x}^k \qty[1 - \1_{\eta_1\norm{g_n(w)} \le 1}\prod_{i=1}^n \1_{\abs{g_n(w) \cdot x_i} \le 1}])]] \\
		&= \langle T,x^{\otimes k} \rangle + \poly(d)\qty[\P_{w}[\eta_1\norm{g_n(w)} \ge 1] + \sum_{i=1}^n \P_{w}[\abs{2 \eta_1 g_n(w) \cdot x_i} \ge 1]] \\
		&= \langle T,x^{\otimes k} \rangle + \poly(n,d)e^{-\iota} \\
		&= \langle T,x^{\otimes k} \rangle + O\qty(\frac{1}{n})
	\end{align*}
	where the second to last line followed from \Cref{lem:gn_xi_concentration}. The first part of the lemma now follows from a union bound over $x_1,\ldots,x_n$. For the bounds on $h$, we have
	\begin{align*}
		&\E_{a,w,b}[h(a,w,b)^2] \\
		&= \frac{1}{(2\eta_1)^{2k}}\E_{a,w,b}\qty[v_k(a,b)^2 z_T(w)^2] \\
		&\lesssim \eta_1^{-2k} (rd\kappa^2)^k \norm{T}_F^2 \\
		&= r^k \kappa^{2k} \iota^{3k} \norm{T}_F^2.
	\end{align*}
	and
	\begin{align*}
		\sup_w \abs{h(a,w,b)}
		&= \sup_w \abs{\frac{v_k(a,b) z_T(w)}{(2\eta)^k}\1_{\eta_1 \norm{g_n(w)} \le 1}} \\
		&\lesssim \eta_1^{-k} (rd\kappa^2)^k \norm{g_n(w)}^k \norm{T}_F \\
		&= \eta_1^{-2k} (rd\kappa^2)^k \norm{T}_F \\
		&\lesssim r^k \kappa^{2k} \iota^{6k} \norm{T}_F.
	\end{align*}
\end{proof}

\begin{corollary}\label{corollary:perfect_features}
	Assume $n \ge Cd^2 r \kappa^2 \iota^{p+1}$ and $d \ge C \kappa r^{3/2}$ for a sufficiently large constant $C$ and let $\eta_1 = \sqrt{\frac{d}{\iota^3}}$. Then with high probability, there exists $h(a,w,b)$ such that if
	\begin{align*}
		f_{h}(x) := \E_{a,w,b}[h(a,w,b)\sigma(w^{(1)} \cdot x + b)]
	\end{align*}
	we have
	\begin{align*}
		\frac{1}{n} \sum_{i=1}^n (f_h(x_i) - f^\star(x_i))^2 &\lesssim \frac{1}{n}
	\end{align*}
	and the moment bounds
	\begin{align*}
		\E_{w,a,b}[h_T(a,w,b)^2] &\lesssim r^p \kappa^{2p} \iota^{3p} \\
		\sup_w \abs{h_T(a,w,b)} &\lesssim r^p \kappa^{2p} \iota^{6p}.
	\end{align*}
\end{corollary}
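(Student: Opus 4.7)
The plan is to decompose $f^\star$ via its Hermite expansion into $O(1)$ polynomial monomials in tensors supported on $S^\star$, apply \Cref{lem:perfect_features_single_tensor} to each, and aggregate. Starting from
\[ f^\star(x) = \sum_{k=0}^p \frac{\langle C_k, He_k(x)\rangle}{k!}, \]
I would expand each Hermite tensor into monomials and identity contractions to obtain $\langle C_k, He_k(x)\rangle = \sum_{j} \alpha_{k,j}\,\langle T_{k,j}, x^{\otimes j}\rangle$, where $j$ ranges over $\{k, k-2, k-4, \ldots\}$, each $T_{k,j}$ is obtained from $C_k$ by contracting $(k-j)/2$ disjoint pairs of its indices with the identity, and $\alpha_{k,j}$ are $O(1)$ combinatorial constants. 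By \Cref{assumption:rdim} every index of $C_k$ lies in $S^\star$, and identity contractions preserve this support when carried out in an orthonormal basis aligned with $S^\star$; hence each $T_{k,j}$ is supported on $S^\star$. \Cref{lem:parseval} gives $\|C_k\|_F^2 \le k!$, so $\|T_{k,j}\|_F = O(1)$, and the total number of monomials is $O(p^2) = O(1)$.

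For each pair $(k,j)$ I invoke \Cref{lem:perfect_features_single_tensor} with tensor $T_{k,j}$ and degree $j \le p$, yielding $h_{k,j}(a,w,b)$ such that $\frac{1}{n}\sum_i \bigl(f_{h_{k,j}}(x_i) - \langle T_{k,j}, x_i^{\otimes j}\rangle\bigr)^2 \lesssim 1/n$, together with the moment bounds $\E[h_{k,j}^2] \lesssim r^j \kappa^{2j} \iota^{3j}$ and $\sup_w |h_{k,j}(a,w,b)| \lesssim r^j \kappa^{2j} \iota^{6j}$. I then set
\[ h(a,w,b) := \sum_{k=0}^p \sum_{j} \frac{\alpha_{k,j}}{k!}\, h_{k,j}(a,w,b), \]
so that by linearity of expectation $f_h = \sum_{k,j} \frac{\alpha_{k,j}}{k!} f_{h_{k,j}}$.

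Finally, since $f_h - f^\star$ is a sum of $O(1)$ terms each with empirical $L^2$ error at most $O(1/n)$, Cauchy--Schwarz gives $\frac{1}{n}\sum_i (f_h(x_i) - f^\star(x_i))^2 \lesssim 1/n$. The bounds on $\E[h^2]$ and $\sup_w |h|$ then follow term-by-term, the dominant contribution coming from $j = p$, yielding $\E[h^2] \lesssim r^p \kappa^{2p} \iota^{3p}$ and $\sup_w |h| \lesssim r^p \kappa^{2p} \iota^{6p}$ after absorbing $O(1)$ combinatorial factors into the $\lesssim$. The only non-routine step is the first: verifying that the identity contractions arising in the Hermite expansion inherit support on $S^\star$ from $C_k$ with controlled Frobenius norm, which amounts to an elementary basis-change argument. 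Everything else is a direct aggregation of \Cref{lem:perfect_features_single_tensor}.
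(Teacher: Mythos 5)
Your approach is essentially the paper's: the paper packages your ``identity contraction'' step into \Cref{lem:f_taylor_decomposition}, which gives $f^\star(x)=\sum_{k\le p}\langle T_k,x^{\otimes k}\rangle$ with $\norm{T_k}_F\lesssim r^{\frac{p-k}{4}}$, and then invokes \Cref{lem:perfect_features_single_tensor} term by term exactly as you do. The one inaccuracy in your write-up is the claim $\norm{T_{k,j}}_F=O(1)$. Contracting $l=(k-j)/2$ pairs of indices of $C_k$ with the identity restricted to $S^\star$ can \emph{inflate} the Frobenius norm by a factor of up to $\norm{\Pi^{\star\otimes l}}_F=r^{l/2}$; for instance with $C_2=r^{-1/2}\sum_{i\le r}e_ie_i^\top$ one gets $\norm{C_2}_F=1$ but $C_2(I)=\sqrt r$. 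The correct bound is $\norm{T_{k,j}}_F\lesssim r^{(k-j)/4}$ (this is precisely what the paper's \Cref{lem:f_taylor_decomposition} records after pre-summing over $k$). The error happens to be harmless: plugging the corrected norm into \Cref{lem:perfect_features_single_tensor} gives $\E[h_{k,j}^2]\lesssim r^{(k+j)/2}\kappa^{2j}\iota^{3j}$, which is still maximized at $k=j=p$ and yields the same final bound $r^p\kappa^{2p}\iota^{3p}$, and likewise for $\sup_w\abs{h}$. So your conclusion stands, but the intermediate justification should be repaired.
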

\begin{proof}
	We know from \Cref{lem:f_taylor_decomposition} that
	\begin{align*}
		f^\star(x) = \sum_{k \le p} \langle T_k, x^{\otimes k} \rangle
	\end{align*}
	with $\norm{T_k}_F \lesssim r^\frac{p-k}{4}$. Let
	\begin{align*}
		h(a,w,b) := \sum_{k \le p} h_{T_k}(a,w,b).
	\end{align*}
	Then $\frac{1}{n} \sum_{i=1}^n (f_h(x_i) - f^\star(x_i))^2 \lesssim \frac{1}{n}$ is immediate from \Cref{lem:perfect_features_single_tensor} and
	\begin{align*}
		\E_{a,w,b}[h(a,w,b)^2] \lesssim \sum_{k \le p} \E_{a,w,b}[h_{T_k}(a,w,b)^2] \lesssim \sum_{k \le p} r^k \kappa^{2k} \iota^{3k} r^\frac{p-k}{2} \lesssim r^p \kappa^{2p} \iota^{3p}.
	\end{align*}
	and
	\begin{align*}
		\sup_{a,w,b} \abs{h(a,w,b)} \le \sum_{k \le p} \sup_{a,w,b}\abs{h_{T_k}(a,w,b)} \lesssim \sum_{k \le p} r^k \kappa^{2k} \iota^{6k} r^\frac{p-k}{2} \lesssim r^p \kappa^{2p} \iota^{6p}.
	\end{align*}
	complete the proof.
\end{proof}

\begin{lemma}\label{lem:perfect_features_finite_m}
	Assume $n \ge Cd^2 r \kappa^2 \iota^{p+1}$, $d \ge C \kappa r^{3/2}$, and $m \ge r^p \kappa^{2p} \iota^{6p+1}$ for a sufficiently large constant $C$ and let $\eta_1 = \sqrt{\frac{d}{\iota^3}}$. Then with high probability, there exists $a^\star \in \mathbb{R}^m$ such that if $\theta^\star = (a^\star,W^{(1)},b^{(1)})$,
	\begin{align*}
		\frac{1}{n} \sum_{i=1}^n (f_{\theta^\star}(x_i) - f^\star(x_i))^2 \lesssim \frac{1}{n} + \frac{r^p \kappa^{2p}\iota^{6p+1}}{m} \qand \norm{a^\star}^2 \lesssim \frac{r^p \kappa^{2p} \iota^{6p}}{m}.
	\end{align*}
\end{lemma}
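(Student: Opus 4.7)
The natural plan is to realize the target function $f_{\theta^\star}$ as an $m$-sample Monte Carlo estimate of the expectation that defines $f_h$ in \Cref{corollary:perfect_features}. Concretely, set
\begin{align*}
a^\star_j \;:=\; \frac{1}{m}\, h\!\bigl(a_j,\, w_j^{(0)},\, b_j^{(1)}\bigr), \qquad j=1,\dots,m,
\end{align*}
so that $f_{\theta^\star}(x) = \frac{1}{m}\sum_{j=1}^m h(a_j,w_j^{(0)},b_j^{(1)})\,\sigma\bigl(w_j^{(1)}\!\cdot x + b_j^{(1)}\bigr)$ is precisely the empirical mean of $m$ i.i.d.\ copies of the random variable whose expectation is $f_h(x)$. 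Note that the indicator factors built into $h_T$ in \Cref{lem:perfect_features_single_tensor} guarantee that on the support of $h$ we have $|w_j^{(1)}\!\cdot x_i| = |2\eta_1 a_j g_n(w_j)\!\cdot x_i|\le 1$ and, combined with the high-probability bound $|b_j^{(1)}|\lesssim \sqrt{\iota}$, this yields a deterministic bound $|\sigma(w_j^{(1)}\!\cdot x_i + b_j^{(1)})|\lesssim \sqrt{\iota}$ whenever the summand is nonzero.

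With that in hand, the first step is a pointwise concentration argument: for each fixed data point $x_i$, apply Bernstein's inequality to the $m$ i.i.d.\ bounded random variables $h(a_j,w_j^{(0)},b_j^{(1)})\sigma(w_j^{(1)}\!\cdot x_i+b_j^{(1)})$, plugging in the sup-norm bound $\|h\|_\infty \lesssim r^p\kappa^{2p}\iota^{6p}$ and the variance bound obtained from $\E[h^2]\lesssim r^p\kappa^{2p}\iota^{3p}$, to conclude that with high probability
\begin{align*}
\bigl|f_{\theta^\star}(x_i) - f_h(x_i)\bigr|^2 \;\lesssim\; \frac{r^p\kappa^{2p}\iota^{6p+1}}{m}.
\end{align*}
A union bound over $i=1,\dots,n$ promotes this to a uniform statement on the training set, and a triangle inequality with the training-set approximation bound $\frac{1}{n}\sum_i (f_h(x_i)-f^\star(x_i))^2 \lesssim \frac{1}{n}$ from \Cref{corollary:perfect_features} yields the first claim.

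The second step bounds $\|a^\star\|^2 = \frac{1}{m^2}\sum_j h(a_j,w_j^{(0)},b_j^{(1)})^2$. Again by Bernstein applied to the i.i.d.\ nonnegative variables $h_j^2$ (with second-moment controlled by $\|h\|_\infty^2\,\E[h^2]$ and sup-norm $\|h\|_\infty^2$), the empirical mean concentrates around $\E[h^2]\lesssim r^p\kappa^{2p}\iota^{3p}$, and after dividing by $m$ this is absorbed into the claimed bound $\|a^\star\|^2 \lesssim r^p\kappa^{2p}\iota^{6p}/m$ (with slack to spare in the $\iota$ exponent).

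The only slightly delicate point is that the random variables being averaged are not quite ``independent of everything'': the indicators inside $h_T$ depend on the training data $\{x_i\}$ through $g_n$. This is handled by conditioning on the high-probability event (over the data) on which \Cref{corollary:perfect_features} and \Cref{lem:gn_xi_concentration} hold; once we condition, the randomness in $(a_j,w_j^{(0)},b_j^{(1)})$ is independent across $j$, and the Monte Carlo concentration proceeds cleanly. I expect this bookkeeping—keeping track of which random sources are frozen at which stage of the argument, and making sure the indicators in $h_T$ keep the summands truly bounded on the conditioning event—to be the main fiddly part; the concentration step itself is routine.
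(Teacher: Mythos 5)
Your proposal is correct and follows essentially the same route as the paper: define $a^\star_j = \tfrac{1}{m}h(a_j,w_j^{(0)},b_j^{(1)})$, view $f_{\theta^\star}$ as a Monte Carlo estimate of $f_h$, argue boundedness of the summands via the indicators baked into $h$ plus the high-probability bound on $|b_j|$, then apply Bernstein conditionally on the data and union-bound over $i\in[n]$; the $\norm{a^\star}^2$ bound is the same Bernstein-plus-Popoviciu argument.

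One bookkeeping item you gloss over that the paper addresses explicitly: because of the symmetric initialization, $w_j^{(0)}=w_{m-j}^{(0)}$ and $a_j=-a_{m-j}$, so the $m$ summands are \emph{not} fully i.i.d.\ across $j=1,\dots,m$. The paper therefore forms the pairwise sums $Z_j(x)=a_j^\star\sigma(w_j^{(1)}\!\cdot x+b_j)+a_{m-j}^\star\sigma(w_{m-j}^{(1)}\!\cdot x+b_{m-j})$ for $j\le m/2$, which \emph{are} i.i.d., and concentrates those; in the $\norm{a^\star}^2$ step it likewise pairs $z_j=(a^\star_j)^2+(a^\star_{m-j})^2$. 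Your argument should be phrased over these $m/2$ independent pairs rather than $m$ independent neurons—this only changes constants and does not affect the claimed rates. A second minor point is that the paper chooses to truncate explicitly (defining $\overline Z_j$ with the indicator $\1_{|w_j^{(1)}\cdot x|\le 1}$ and separately bounding the bias $\E\overline Z_j-\E Z_j$) rather than to argue as you do that the indicators inside $h_T$ already render the summands bounded on the training set; these are equivalent ways of achieving boundedness for Bernstein, and your version is, if anything, slightly cleaner once one checks that the indicator in $h_T$ is the one that controls $|w^{(1)}\cdot x_i|$ (which, given $\eta_1=\sqrt{d/(C^2\iota^3)}$ and Lemma~\ref{lem:gn_xi_concentration}, it is).
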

\begin{proof}
	Let $a^\star_j := \frac{1}{m}h(a_j,w_j,b_j)$ where $h$ is the function constructed in \Cref{corollary:perfect_features}. Then,
	\begin{align*}
		\E_{i \in [n]}[(f_{\theta^\star}(x_i) - f^\star(x_i))^2]
		&\lesssim \E_{i \in [n]}[(f_{\theta^\star}(x_i) - f_h(x_i))^2] + \E_{i \in [n]}[(f_h(x_i) - f^\star(x_i))^2] \\
		&= \E_{i \in [n]}[(f_{\theta^\star}(x_i) - f_h(x_i))^2] + \frac{1}{n}.
	\end{align*}
	For $j \le m/2$, let
	\begin{align*}
		Z_{j}(x) := a_j^\star \sigma(w^{(1)}_j \cdot x + b_j) + a^\star_{m-j} \sigma(w^{(1)}_{m-j} \cdot x + b_{m-j}).
	\end{align*}
	Note that
	\begin{align*}
		f_{\theta^\star}(x) - f_h(x) = \sum_{j \le \frac{m}{2}} \qty(Z_j(x) - \E[Z_j(x)])
	\end{align*}
	and the $Z_j(x)$ are all i.i.d.. Let
	\begin{align*}
		\overline Z_j(x) := Z_j(x) \1_{\abs{w_j^{(1)} \cdot x} \le 1}	\1_{\abs{w_{m-j}^{(1)} \cdot x} \le 1}.
	\end{align*}
	Then with probability $1-\poly(n,m,d)e^{-\iota}$ we have that $Z_j(x_i) = \overline Z_j(x_i)$ for $i=1,\ldots,n$. Therefore,
	\begin{align*}
		f_{\theta^\star}(x) - f_h(x) = \sum_{j \le \frac{m}{2}} \qty(\overline Z_j(x) - \E \overline Z_j(x)) + \frac{m}{2} \qty[\E\overline Z_j(x) - \E Z_j(x)].
	\end{align*}
	For the first term, by Bernstein's inequality we have with probability at least $1-2e^{-\iota}$,
	\begin{align*}
		\sum_{j\le \frac{m}{2}} \overline Z_j(x) - \E[\overline Z_j(x)]
		&\lesssim \sqrt{\frac{\iota \E_{a,w,b}[h(a,w,b)^2]}{m}} + \frac{\iota r^p \kappa^{2p} \iota^{3p}}{m} \lesssim \sqrt{\frac{r^p \kappa^{2p} \iota^{6p+1}}{m}}.
	\end{align*}
	The second term is bounded as in the proof of \Cref{lem:perfect_features_single_tensor} by $\poly(n,d)e^{-\iota} \le \frac{1}{m}$ because $\P[w^{(1)} \cdot x > 1] \le e^{-\iota}$ from the choice of $\eta_1$. Therefore for any fixed $x$, with high probability we have
	\begin{align*}
		f_{\theta^\star}(x) = f^\star(x) + O\qty(\frac{1}{n} + \sqrt{\frac{r^p \kappa^{2p} \iota^{6p+1}}{m}})
	\end{align*}
	and the first part of the lemma follows from a union bound.
	
	We will now turn to the bound on $\norm{a^\star}^2$. Let $z_i = (a^\star_i)^2 + (a^\star_{m-i})^2$. Note that $\{z_i\}_{i \le m/2}$ are positive, i.i.d., and bounded by $O(m^{-2}r^{2p}\kappa^{4p}\iota^{12p})$. In addition, they have expectation $O(m^{-2} r^p\kappa^{2p}\iota^{3p})$. Therefore by Popoviciu's inequality they have variance bounded by
	\begin{align*}
		O\qty(\qty[m^{-1} r^p \kappa^{2p}\iota^{3p}]\qty[m^{-2} r^{2p} \kappa^{4p}\iota^{12p}]) = O\qty(m^{-3} r^{3p} \kappa^{6p} \iota^{15p}).
	\end{align*}
	Therefore by Bernstein's inequality we have that with high probability,
	\begin{align*}
		\norm{a^\star}^2
		&= \E[\norm{a^\star}^2] + O\qty(\frac{1}{m}\sqrt{\frac{r^{3p} \kappa^{6p} \iota^{15p}}{m}} + \frac{r^{2p}\kappa^{3p}\iota^{6p}}{m^2}) \lesssim \frac{r^p \kappa^{2p}\iota^{6p}}{m}.
	\end{align*}
\end{proof}

\subsection{Proof of \Cref{thm:sample_complexity}}\label{sec:thm1proof}
We will define
\begin{align*}
	\tilde{\mathcal{L}}(\theta)(\theta) := \frac{1}{n} \sum_{i=1}^n (f_\theta(x_i)-f^\star(x_i))^2.
\end{align*}
to be the empirical $L^2$ losses with respect to the true labels (recall $y_i = f^\star(x_i) + \epsilon_i$, $\epsilon_i \sim \{-\sigma,\sigma\}$).

\begin{lemma}\label{lem:thetastar_noisy_loss}
	Assume $n \ge Cd^2 r \kappa^2 \iota^{p+1}$ and $d \ge C \kappa r^{3/2}$ for a sufficiently large constant $C$ and let $\eta_1 = \sqrt{\frac{d}{\iota^3}}$. Let $a^\star$ be the vector constructed in the proof of \Cref{lem:perfect_features_finite_m} and let $\theta = (a^\star,W^{(1)},b^{(1)})$. Then with high probability,
	\begin{align*}
		\mathcal{L}(\theta) - \varsigma^2 \lesssim \frac{r^p \kappa^{2p} \iota^{6p+1}}{m} + \sqrt{\frac{\iota}{n}}.
	\end{align*}
\end{lemma}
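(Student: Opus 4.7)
The plan is to exploit the $\pm\varsigma$ structure of the label noise (so $\epsilon_i^2 \equiv \varsigma^2$ exactly) to reduce $\mathcal{L}(\theta) - \varsigma^2$ to the noiseless empirical loss $\tilde{\mathcal{L}}(\theta)$ plus a cross term. Expanding $(f_\theta(x_i) - y_i)^2$ with $y_i = f^\star(x_i) + \epsilon_i$ and using the identity $\frac{1}{n}\sum_i \epsilon_i^2 = \varsigma^2$ gives
\begin{align*}
\mathcal{L}(\theta) - \varsigma^2 \;=\; \tilde{\mathcal{L}}(\theta) \;-\; \frac{2}{n}\sum_{i=1}^n \epsilon_i\bigl(f_\theta(x_i)-f^\star(x_i)\bigr).
\end{align*}
Lemma \ref{lem:perfect_features_finite_m} directly controls the first term, yielding $\tilde{\mathcal{L}}(\theta) \lesssim 1/n + r^p\kappa^{2p}\iota^{6p+1}/m$, which is already absorbed into the right-hand side of the claim. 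The task thus reduces to bounding the cross term by $\tilde O(\sqrt{\iota/n})$.

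For the cross term, a naive Cauchy--Schwarz gives only $|\text{cross}| \le 2\varsigma\sqrt{\tilde{\mathcal{L}}(\theta)}$, which loses a square root and produces a spurious $\sqrt{r^p\kappa^{2p}\iota^{6p+1}/m}$ term that is larger than the desired $r^p\kappa^{2p}\iota^{6p+1}/m$ under the hypothesis $m \gtrsim r^p\kappa^{2p}\iota^{6p+1}$. The sharper route is Hoeffding for the Rademacher-type sum $\sum_i \epsilon_i v_i$ with $v_i := f_\theta(x_i) - f^\star(x_i)$: for a fixed predictor independent of $\epsilon$, Hoeffding yields $|\tfrac{1}{n}\sum_i \epsilon_i v_i| \lesssim \varsigma\sqrt{\iota\cdot \tilde{\mathcal{L}}(\theta)/n}$, which is at most $\sqrt{\iota/n}$ since $\tilde{\mathcal{L}}(\theta) = O(1)$ by the previous lemma.

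The main obstacle is that $\theta = (a^\star, W^{(1)}, b^{(1)})$ is \emph{not} independent of $\epsilon$: both $W^{(1)} = -\eta_1\nabla_W \mathcal{L}(\theta^{(0)})$ and the constructed head $a^\star$ depend on the noisy labels through $g_n(\cdot)$. To circumvent this I would uniformize the Hoeffding bound over a norm-constrained predictor class. The estimate $\|a^\star\|^2 \lesssim r^p\kappa^{2p}\iota^{6p}/m$ from Lemma \ref{lem:perfect_features_finite_m} confines our candidate to the linear class $\mathcal{F}_B := \{x \mapsto a^\top\sigma(W^{(1)}x + b^{(1)}) : \|a\| \le B\}$ with $B \lesssim \sqrt{r^p\kappa^{2p}\iota^{6p}/m}$, whose Rademacher complexity, conditional on the fixed features $(W^{(1)}, b^{(1)}, \{x_i\})$, scales as $B \cdot \|\Phi\|_F/n \lesssim \sqrt{\iota/n}$ up to polylogarithmic factors. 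A standard covering/$\varepsilon$-net union bound then converts the per-predictor Hoeffding estimate into a uniform one that applies to the data-dependent $\theta^\star$. Assembling the two pieces produces the stated bound $\mathcal{L}(\theta) - \varsigma^2 \lesssim r^p\kappa^{2p}\iota^{6p+1}/m + \sqrt{\iota/n}$.
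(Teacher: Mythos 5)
Your decomposition is exactly the one the paper uses: with $\delta_i := f_{\theta}(x_i) - f^\star(x_i)$, write $\mathcal{L}(\theta) = \frac{1}{n}\|\delta\|^2 - \frac{2}{n}\langle\delta,\epsilon\rangle + \frac{1}{n}\|\epsilon\|^2$, control $\tilde{\mathcal{L}}(\theta) = \frac{1}{n}\|\delta\|^2$ by \Cref{lem:perfect_features_finite_m}, and treat the remaining two terms separately. Your observation that $\epsilon_i^2\equiv \varsigma^2$ so $\frac{1}{n}\|\epsilon\|^2 = \varsigma^2$ exactly is a small clean-up over the paper, which unnecessarily invokes Hoeffding to bound that term.

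You correctly flag a subtlety that the paper's own proof silently skips: the cross term $\frac{1}{n}\langle\delta,\epsilon\rangle$ is \emph{not} a Hoeffding sum, because $\delta_i$ is built from $\theta = (a^\star, W^{(1)}, b^{(1)})$ and both $W^{(1)} = -\eta_1\nabla_W\mathcal{L}(\theta^{(0)})$ and $a^\star$ (through $z_T$, which uses $g_n$) are functions of the noisy labels $y_i = f^\star(x_i)+\epsilon_i$. The paper simply applies Hoeffding as though $\delta$ and $\epsilon$ were independent; this is a genuine gap, and spotting it is to your credit.

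However, your proposed repair does not close the gap. You condition on the features $(W^{(1)}, b^{(1)}, \{x_i\})$ and then uniformize over the ball $\{\|a\|\le B\}$ using the linear-class Rademacher bound. But conditional on $W^{(1)}$, the noise vector $\epsilon$ is no longer a vector of independent Rademachers, precisely because $W^{(1)}$ is a (nontrivial) function of $\epsilon$; so the Rademacher-complexity estimate you invoke is not valid in that conditional law. A net over the full admissible set of $W^{(1)}$ would restore independence at the cost of a union bound of size roughly $e^{md\log n}$, which inflates the term to $\sqrt{md\iota/n}$ rather than $\sqrt{\iota/n}$. Genuinely closing the gap would seem to require one of: a fresh data split for the first gradient step (as \Cref{thm:transferlearning} effectively uses), a quantitative stability/decoupling argument exploiting that each $\epsilon_j$ influences $\theta$ only at scale $O(\eta_1/n)$, or a Mendelson-style localized analysis. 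The paper does none of these in the proof of this lemma, so its argument has exactly the gap you describe; your fix is directionally sensible but as written leaves the $W^{(1)}$ dependence unaddressed.
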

\begin{proof}
	Let $\delta_i = f_\theta(x_i) - f^\star(x_i)$. Then,
	\begin{align*}
		\frac{1}{n}\norm{\delta + \epsilon}_2^2 = \frac{1}{n}\qty(\norm{\delta}^2 + 2\langle \delta,\epsilon \rangle + \norm{\epsilon}_2^2).
	\end{align*}
	First, by Hoeffding's inequality, we have with high probability,
	\begin{align*}
		\frac{\norm{\epsilon}^2}{n} \le \varsigma^2 + \frac{C\varsigma^2\sqrt{\iota}}{\sqrt{n}} = \varsigma^2 + O\qty(\sqrt{\frac{\iota}{n}}).
	\end{align*}
	Similarly, by Hoeffding's inequality we have with high probability, $\frac{1}{n}\langle \delta, \epsilon \rangle \le \varsigma \sqrt{\frac{2\iota\tilde{\mathcal{L}}(\theta)}{n}} = O\qty(\sqrt{\frac{\iota}{n}})$.
\end{proof}

\noindent We are now ready to directly prove \Cref{thm:sample_complexity}. 
\begin{proof}[Proof of \Cref{thm:sample_complexity}]
Note that we can assume that there is an absolute constant $C$ such that $n \ge Cd^2 r \kappa^2 \iota^{p+1}$, and $m \ge r^p \kappa^{2p} \iota^{6p+1}$. Otherwise, we can simply take $\lambda \to \infty$ and return the zero predictor.

From \Cref{lem:thetastar_noisy_loss} we know that with high probability, there exists $a^\star$ such that if $\theta = (a^\star,W^{(1)},b^{(1)})$,
\begin{align*}
	\mathcal{L}(\theta) - \zeta^2 \lesssim \frac{r^p \kappa^{2p} \iota^{6p+1}}{m} + \sqrt{\frac{\iota}{n}}.
\end{align*}
and $\norm{a^\star}_2^2 \lesssim \frac{r^{p}\kappa^{2p}\iota^{6p+1}}{m}.$ Therefore by equality of norm constrained linear regression and ridge regression, there exists $\lambda > 0$ such that if
\begin{align*}
	a^{(\infty)} = \min_a \mathcal{L}\qty((a,W^{(1)},b^{(1)})) + \lambda \frac{\norm{a}^2}{2},
\end{align*}
\begin{align*}
	\mathcal{L}\qty((a^{(\infty)},W^{(1)},b^{(1)})) \le \mathcal{L}\qty((a^\star,W^{(1)},b^{(1)})) \qand \norm{a_\infty} \le \norm{a^\star}.
\end{align*}
Note that we can approximate $a^{(\infty)}$ by $a^{(T)}$ to within arbitrary accuracy within $T = \tilde\Theta(\eta^{-1}\lambda^{-1})$ steps. Let
\begin{align*}
		\mathcal{F} = \qty{f_\theta ~:~ \norm{a}_2 \le \norm{a^\star}, \norm{w_j} \le 1}.
\end{align*}
Then with high probability, $f_{(a^{(T)},W^{(1)},b^{(1)})} \in \mathcal{F}$. In addition, from \Cref{lem:rademacher_2layernn},
\begin{align*}
	\sup_{f \in \mathcal{F}} \abs{\frac{1}{n}\sum_{i=1}^n \abs{f(x_i)-y_i} - \E_{x,y} \abs{f(x)-y}} 
	&\lesssim \sqrt{\frac{\norm{a^\star}^2 md}{n}} + \sqrt{\frac{\iota}{n}} \\
	&\lesssim \sqrt{\frac{d r^p\kappa^{2p}\iota^{6p}}{n}}.
\end{align*}
Therefore,
\begin{align*}
	&\E_{x,y}\abs{f_{\theta^{(T)}}(x) - y} - \E_{x,y}\abs{f^\star(x) - y} \\
	&\lesssim \sqrt{\frac{dr^p \kappa^{2p} \iota^{6p}}{n}} + \sqrt{\frac{r^p \kappa^{2p} \iota^{6p+1}}{m}} + \qty(\frac{\iota}{n})^{1/4}.
\end{align*}
which completes the proof.

\end{proof}

\section{Transfer Learning}
\begin{proof}[Proof of \Cref{thm:transferlearning}] The proof of \Cref{thm:transferlearning} is virtually identical to that of \Cref{thm:sample_complexity}. We can use \Cref{lem:perfect_features_finite_m} to construct $a^\star$ such that if $\theta^\star = (a^\star,W^{(1)},b^{(1)})$ then with high probability,
	\begin{align*}
		L(\theta^\star) - \varsigma^2 \lesssim \frac{r^p\kappa^{2p}\iota^{6p+1}}{m} + \sqrt{\frac{\iota}{N}} \qand \norm{a^\star}^2 \lesssim \frac{r^p\kappa^{2p}\iota^{6p}}{m}.
	\end{align*}
	In addition, there exists $\lambda$ such that if $T \ge \Theta(\eta^{-1}\lambda^{-1})$,
	\begin{align*}
		L(\theta^{(T)}) \le L(\theta^\star) \qand \norm*{a^{(T)}} \le \norm{a^\star}.
	\end{align*}
	Now let $\mathcal{F} = \qty{f_{(a,W,b)} ~:~ \norm{a}_2 \le \norm{a^\star}}$. Then by \Cref{lem:rademacher_linear} we have with high probability,
	\begin{align*}
		\E_{x,y}\abs{g_{a^{(T)}}(x) - y} - \varsigma
		&\le \tilde O\qty(\sqrt{\frac{r^p\kappa^{2p}}{N}} + \sqrt{\frac{r^p\kappa^{2p}}{m}} + \frac{1}{N^{1/4}}) \\
		&= \tilde O\qty(\sqrt{\frac{r^p\kappa^{2p}}{\min(m,N)}} + \frac{1}{N^{1/4}})
	\end{align*}
	which completes the proof.
\end{proof}

\section{Concentration Lemmas}
\begin{lemma}[Corollary of Lemma 1 in \citep{LaurentMassart2000}]\label{lem:chi_square}
	Let $X \sim \chi^2(d)$. Then, for any $t \ge 0$,
	\begin{align*}
		\P[X \ge d + 2\sqrt{dt} + 2t] &\le \exp(-t) \\
		\P[X \le d - 2\sqrt{dt}] &\le \exp(-t).
	\end{align*}
\end{lemma}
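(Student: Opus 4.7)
The plan is to prove this via the standard Chernoff (moment generating function) method, which is exactly how Laurent and Massart obtain it. Since $X \sim \chi^2(d)$ can be written as $X = \sum_{i=1}^d Z_i^2$ for i.i.d.\ $Z_i \sim N(0,1)$, its MGF is available in closed form: for $\lambda < 1/2$, $\mathbb{E}[e^{\lambda X}] = (1-2\lambda)^{-d/2}$. I would apply Markov's inequality to $e^{\lambda X}$ to get, for any $\lambda \in (0,1/2)$ and any threshold $u$,
\begin{equation*}
\P[X \ge u] \le \exp\!\bigl(-\lambda u - \tfrac{d}{2}\log(1-2\lambda)\bigr),
\end{equation*}
and analogously for the lower tail with $\lambda < 0$ against $\P[X \le u]$.

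For the upper tail I would set $u = d + 2\sqrt{dt} + 2t$ and choose $\lambda = \lambda^\star$ so that the exponent equals $-t$. A convenient choice is $\lambda^\star = \tfrac{1}{2}\bigl(1 - \tfrac{d}{d+2\sqrt{dt}+2t}\bigr)$, which is the standard minimizer. Plugging in and using the elementary inequality $\log(1+s) \le s - \tfrac{s^2}{2(1+s)}$ (or $\log(1+s) \le s$ applied carefully to $-\log(1-2\lambda^\star)$) yields
\begin{equation*}
-\lambda^\star u - \tfrac{d}{2}\log(1-2\lambda^\star) \le -t,
\end{equation*}
giving the first inequality. The bookkeeping here is routine but requires writing $u/d = 1 + 2\sqrt{t/d} + 2t/d$ and noting that the resulting expression is monotone enough to collapse to $-t$.

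For the lower tail I would mirror the argument with $\lambda < 0$, taking $u = d - 2\sqrt{dt}$ (restricting to $t \le d/4$; the bound is vacuous otherwise since the event has probability at most $\P[X \le 0] = 0$ already for $t > d/4$ where $d - 2\sqrt{dt} \le 0$). A symmetric choice $\lambda^\star = -\tfrac{1}{2}\bigl(\tfrac{d}{d-2\sqrt{dt}}-1\bigr)$ combined with $-\log(1-s) \le s + s^2$ on the relevant range produces the desired $\exp(-t)$ bound.

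The only mildly delicate step is the lower-tail algebra: the naive bound $\log(1-2\lambda) \ge -2\lambda - 2\lambda^2$ needs to be tight enough to cancel the $+2t$ correction, which is why the Laurent--Massart lower tail has only the $2\sqrt{dt}$ term rather than the full $d - 2\sqrt{dt} - 2t$. I would just quote the inequality $\log(1-v) \ge -v - v^2$ for $v \in [0,1/2]$ and verify the resulting exponent is at most $-t$. Since the statement is a direct corollary of Lemma~1 in \citet{LaurentMassart2000}, I expect the entire proof to be at most half a page, and no step presents a real obstacle beyond the bookkeeping above.
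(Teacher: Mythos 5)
The paper does not prove this lemma; it simply cites it as a corollary of Lemma 1 in Laurent and Massart (2000), obtained by specializing their weighted chi-square bound to unit weights. Your Chernoff/MGF sketch is exactly the argument behind that cited result, and the key steps you flag (optimizing $\lambda^\star = \tfrac{1}{2}(1 - d/u)$ for the upper tail and using $-\log(1-v) \le v + v^2$ with the bound $s - \log(1+s) \ge s^2/(2(1+s))$) do close out correctly, so the proposal is sound and matches the intended source.
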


\begin{corollary}\label{cor:beta_upper_lower_bound}
	Let $w \sim N(0,I_d)$. Then for some constant $C$,
	\begin{align*}
		\P\qty[\frac{\norm{\Pi^\star w}^2}{\norm{w}^2} \in \qty[\frac{r}{Cd},\frac{Cr}{d}]] \gtrsim 1.
	\end{align*}
\end{corollary}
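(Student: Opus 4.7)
The plan is to exploit the rotational invariance of the standard Gaussian distribution to reduce the claim to standard one-dimensional concentration/anti-concentration for $\chi^2$ random variables. Since $\Pi^\star$ is the orthogonal projection onto the $r$-dimensional subspace $S^\star$, the decomposition $w = \Pi^\star w + (I-\Pi^\star)w$ consists of two independent Gaussian vectors, so
\[
\|\Pi^\star w\|^2 \sim \chi^2(r), \qquad \|(I-\Pi^\star)w\|^2 \sim \chi^2(d-r),
\]
and these are independent. Writing $\|w\|^2 = \|\Pi^\star w\|^2 + \|(I-\Pi^\star)w\|^2$, it suffices to show that each of the events $\{\|\Pi^\star w\|^2 \in [r/C_1, C_1 r]\}$ and $\{\|w\|^2 \in [d/C_2, C_2 d]\}$ holds with probability bounded below by a constant; combining them through the independence structure then yields the claim with $C = C_1 C_2$.

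For the denominator, I would apply Lemma \ref{lem:chi_square} to $\|w\|^2 \sim \chi^2(d)$ with $t = d/16$, which places $\|w\|^2 \in [d/2, 2d]$ with probability $1 - 2e^{-d/16}$, arbitrarily close to $1$ in our regime $d \gg r$. For the upper tail of the numerator, Markov's inequality gives $\P[\|\Pi^\star w\|^2 \ge Cr] \le 1/C$ for any $C > 0$, so choosing $C$ large enough makes this probability an arbitrarily small constant.

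The main obstacle is the lower-tail bound $\P[\|\Pi^\star w\|^2 \le r/C]$ uniformly in $r$: for small $r$ (say $r = 1$), Lemma \ref{lem:chi_square} and Chebyshev's inequality become vacuous. I would handle this with a standard MGF/Chernoff calculation: for $X \sim \chi^2(r)$ and $\lambda > 0$,
\[
\P[X \le t] \le e^{\lambda t}\,\E[e^{-\lambda X}] = e^{\lambda t}(1+2\lambda)^{-r/2},
\]
and optimizing at $\lambda = (r/t - 1)/2$ with $t = r/C$ gives the bound $\left(e^{1-1/C}/C\right)^{r/2}$, which for any fixed $C > e$ is strictly less than $1$ and in fact decays geometrically in $r$. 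Choosing $C_1$ sufficiently large (a purely absolute constant) makes this probability at most, say, $1/10$ for every $r \ge 1$.

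Combining: with probability at least a constant, the four events (upper and lower bounds on $\|\Pi^\star w\|^2$ and on $\|w\|^2$) all occur simultaneously, and on their intersection $\|\Pi^\star w\|^2/\|w\|^2 \in [r/(Cd), Cr/d]$ for an absolute constant $C := C_1 C_2$, proving the corollary. The only subtle step is the uniform-in-$r$ anti-concentration of $\chi^2(r)$ at $0$, which the MGF argument handles cleanly.
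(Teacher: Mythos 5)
Your proof is essentially correct and fills in a proof the paper omits (the corollary is stated without argument). You rightly identify the only real subtlety: Lemma~\ref{lem:chi_square} alone cannot anti-concentrate $\|\Pi^\star w\|^2 \sim \chi^2(r)$ away from zero when $r$ is small, since its lower-tail bound is vacuous for $t \gtrsim r$. Your Chernoff/MGF bound $\P[X \le r/C] \le (e^{1-1/C}/C)^{r/2}$ with $X\sim\chi^2(r)$ is the right fix and is computed correctly; together with Markov for the upper tail and Lemma~\ref{lem:chi_square} for $\|w\|^2$, the conclusion follows.

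One small imprecision in the final combination step: as written, the events $\{\|\Pi^\star w\|^2 \in [r/C_1, C_1 r]\}$ and $\{\|w\|^2 \in [d/2, 2d]\}$ are \emph{not} independent, since $\|w\|^2 = \|\Pi^\star w\|^2 + \|(I-\Pi^\star)w\|^2$ depends on $\|\Pi^\star w\|^2$. You should either (a) replace the "independence structure" remark with a plain union bound, which works fine since each failure event has probability bounded away from $1$; or (b) instead control $\|(I-\Pi^\star)w\|^2 \sim \chi^2(d-r)$ directly, which genuinely is independent of $\|\Pi^\star w\|^2$, and then deduce the range of $\|w\|^2$ as the sum. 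Also note that your bound on the denominator requires $d$ exceeding an absolute constant (e.g.\ $d \gtrsim 48$ so that $2e^{-d/16}$ is small), which is harmless given the paper's standing assumptions on $d$, but worth flagging.
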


\begin{lemma}[Corollary 5.35 in \citet{vershynin2018high}]\label{lem:gaussian_operator_norm}
	Let $X \in \R^{n \times d}$ with $X_{ij} \sim N(0,1)$. Then with probability at least $1-2e^{-\iota}$,
	\begin{align*}
		\norm{X}_2 \le \sqrt{n} + \sqrt{d} + \sqrt{2\iota}.
	\end{align*}
\end{lemma}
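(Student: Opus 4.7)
The statement is Corollary 5.35 of \citet{vershynin2018high}, applied with deviation parameter $t = \sqrt{2\iota}$, so my plan is to sketch how I would reproduce that standard proof. The argument rests on two classical ingredients: a bound on the expectation $\E\norm{X}_2 \le \sqrt{n}+\sqrt{d}$, and Gaussian Lipschitz concentration of the operator norm around its mean.

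For the expectation bound, I would use the Gaussian comparison approach. Writing $\norm{X}_2 = \sup_{u\in S^{n-1},\, v\in S^{d-1}} u^T X v$ realizes the operator norm as the supremum of a centered Gaussian process $G_{u,v} := u^T X v$ whose increments satisfy $\E(G_{u,v}-G_{u',v'})^2 \le \norm{u-u'}^2 + \norm{v-v'}^2$. This is pointwise dominated by the increment structure of the companion process $H_{u,v} := \langle g, u\rangle + \langle h, v\rangle$ with independent $g \sim N(0, I_n)$ and $h \sim N(0, I_d)$, whose pseudometric equals $\norm{u-u'}^2 + \norm{v-v'}^2$. Sudakov--Fernique (or Gordon's inequality) then yields $\E\sup G_{u,v} \le \E\sup H_{u,v} = \E\norm{g} + \E\norm{h} \le \sqrt{n} + \sqrt{d}$, using $\E\norm{N(0,I_k)} \le \sqrt{k}$ via Jensen.

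For the concentration step, I would view $X$ as an element of $\R^{nd}$ equipped with the Euclidean norm, which coincides with the Frobenius norm $\norm{X}_F$. The map $X \mapsto \norm{X}_2$ is $1$-Lipschitz in this metric since $|\norm{X}_2 - \norm{Y}_2| \le \norm{X-Y}_2 \le \norm{X-Y}_F$. Applying the Gaussian concentration inequality for Lipschitz functions of standard Gaussian vectors gives $\P[\norm{X}_2 \ge \E\norm{X}_2 + t] \le e^{-t^2/2}$. Combining this with the expectation bound and substituting $t = \sqrt{2\iota}$ produces the claimed inequality; the factor of two in the failure probability simply accommodates the matching lower-tail deviation (or, equivalently, a harmless slack in the constant).

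There is no real obstacle here. If one wished to avoid Gordon's inequality, an $\epsilon$-net argument over the two unit spheres (say at scale $1/4$) combined with a union bound against the subgaussian marginal $u^T X v \sim N(0,1)$ would give the same bound up to an absolute constant on $\sqrt{\iota}$; the Gaussian comparison route is preferable only because it recovers the clean constants $\sqrt{n}+\sqrt{d}+\sqrt{2\iota}$ exactly as stated.
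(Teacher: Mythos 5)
Your argument is correct and reproduces the standard two-step proof (Gaussian comparison via Sudakov--Fernique for the expectation bound, plus $1$-Lipschitz Gaussian concentration in the Frobenius metric for the tail) that underlies Corollary 5.35 in Vershynin; the paper itself offers no proof and simply cites that result, so there is nothing in the paper to diverge from. One small caveat: the factor of $2$ in the failure probability is not needed for the one-sided upper-tail estimate you actually use---$e^{-t^2/2}$ alone suffices at $t=\sqrt{2\iota}$---so your remark about accommodating the lower tail is harmless slack rather than a necessity, exactly as you note.
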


\subsection{Polynomial Concentration}

\begin{lemma}\label{lem:poly_tail}
	Let $g$ be a polynomial of degree $p$. Then there exists an absolute constant $C_p$ depending only on $p$ such that for any $\delta$,
	\begin{align*}
		\P[\abs{g(x) - \E[g(x)]} \ge \delta\sqrt{\E[g(x)^2]}] \le 2\exp(-C_p \min(\delta^2,\delta^{2/p})).
	\end{align*}
\end{lemma}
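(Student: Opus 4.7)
The plan is to combine Gaussian hypercontractivity with a moment-method Chernoff argument. First, I would reduce to the centered unit-variance case: replacing $g$ by $g - \E[g]$ can only tighten the statement since $\Var(g)\le \E[g^2]$, and rescaling lets us assume $\E[g]=0$ and $\|g\|_2^2=\E[g^2]=1$. The claim then becomes
\begin{equation*}
\P\bigl[\,|g(x)|\ge \delta\,\bigr]\;\le\;2\exp\!\bigl(-C_p\min(\delta^2,\delta^{2/p})\bigr).
\end{equation*}

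Next I would invoke the Nelson--Gross Gaussian hypercontractivity inequality: any polynomial $g$ of degree at most $p$ in a standard Gaussian vector satisfies $\|g\|_q\le (q-1)^{p/2}\|g\|_2$ for every $q\ge 2$. Under our normalization this reads $\|g\|_q\le (q-1)^{p/2}$. A direct application of Markov's inequality to $|g|^q$ then yields, for every $q\ge 2$,
\begin{equation*}
\P\bigl[\,|g|\ge \delta\,\bigr]\;\le\;\delta^{-q}\,(q-1)^{pq/2}.
\end{equation*}

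The bound is now optimized in $q$ in two regimes. For $\delta\ge e$, I would choose $q-1=(\delta/e)^{2/p}\ge 1$, so that $(q-1)^{p/2}=\delta/e$ and hence the right-hand side collapses to $e^{-q}=e^{-1}\exp\!\bigl(-e^{-2/p}\delta^{2/p}\bigr)$. Since $\delta\ge e$ implies $\delta^{2/p}\le \delta^2$ (so $\min(\delta^2,\delta^{2/p})=\delta^{2/p}$ in this regime), this gives the desired bound with $C_p=e^{-2/p}$. For $\delta\le e$, the minimum equals $\delta^2\le e^2$, and the inequality becomes trivial: any $C_p\le (\ln 2)/e^2$ makes $2\exp(-C_p\delta^2)\ge 1$, while the left-hand side is a probability. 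Taking $C_p$ to be the minimum of the two constants above covers all $\delta$.

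The only technical ingredient beyond elementary calculus is the hypercontractivity estimate, which is standard (equivalently obtainable via the Gaussian chaos / Borell inequality, Hermite expansion with the Ornstein--Uhlenbeck semigroup, or Latala's moment bound). No structural property of $g$ beyond its degree enters the argument; all dimension and coefficient dependence has already been absorbed into $\E[g^2]$. Hence the main obstacle is really just bookkeeping: picking the single constant $C_p$ so that both the small-$\delta$ (trivial) and large-$\delta$ (optimized moment) regimes are captured by one clean expression.
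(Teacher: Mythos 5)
Your proof is correct and takes a genuinely different route from the paper. The paper establishes the lemma by first observing, via Parseval (Lemma~\ref{lem:parseval}), that the Hermite coefficient tensors $C_k=\E[\nabla^k g(x)]$ satisfy $\|C_k\|_{HS}\le\sqrt{k!}\,\sqrt{\E[g(x)^2]}$, and then directly invoking Theorem~1.2 of the polynomial concentration paper of G\"otze et al., which delivers exactly the two-regime tail $\exp(-C_p\min_{1\le s\le p}\delta^{2/s})$. Your route is the classical one: reduce to the centered, unit-variance case, apply Nelson's Gaussian hypercontractivity $\|g\|_q\le(q-1)^{p/2}\|g\|_2$, use Markov on $|g|^q$, and optimize over $q$. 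The two arguments prove the same thing; the external theorem the paper cites can give sharper constants when the individual derivative norms $\|\nabla^s g\|_{HS}$ have very different sizes (it keeps a separate budget for each order $s$), whereas hypercontractivity collapses everything into the degree $p$. For this lemma that refinement is irrelevant, and your version is arguably more self-contained. One small imprecision: for $\delta\le e$ you write that the minimum \emph{equals} $\delta^2$; in fact for $1<\delta\le e$ it equals $\delta^{2/p}$. What you actually need, and what is true, is that $\min(\delta^2,\delta^{2/p})\le e^2$ for all $\delta\le e$ (both candidates are at most $e^2$), which is what makes the small-$\delta$ constant $(\ln 2)/e^2$ render the bound trivially valid; so the conclusion stands once that sentence is adjusted.
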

\begin{proof}
	Note that by \Cref{lem:parseval},
	\begin{align*}
		\norm{\E[\nabla^k g(x)]}_{HS} \le \sqrt{k!}.
	\end{align*}
	Therefore by Theorem 1.2 of \citep{gotzePolynomial2021}, there exists an absolute constant $C_p$ such that
	\begin{align*}
		\P[\abs{g(x) - \E[g(x)]} \ge \delta \sqrt{\E[g(x)^2]}] \le 2\exp(-C_p\min_{1 \le s \le p} \delta^{2/s}) = 2\exp(-C_p \min(\delta^2,\delta^{2/p})).
	\end{align*}
\end{proof}

\begin{lemma}\label{lem:hyperplane_regions_net}
	Let $\sigma(x) \in \{x,\relu(x)\}$. There exists an absolute constant $C$ such that for any $x_1,\ldots,x_n \in \mathbb{R}^d$, there exists $\mathcal{N}^x_\epsilon,\pi$ with $\abs{\mathcal{N}^x_\epsilon} \le e^{ Cd \log(n/\epsilon)}$ such that for every $w \in S^{d-1}$, $\pi(w) \in \mathcal{N}^x_\epsilon$ ,$\sigma'(w \cdot x_i) = \sigma'(\pi(w) \cdot x_i)$ for $i=1,\ldots,n$ and $\norm{w - \pi(x)} \le \epsilon$.
\end{lemma}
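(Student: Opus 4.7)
The plan is to control two independent sources of multiplicity: (i) the combinatorial complexity of the sign patterns that a unit vector $w$ induces on $\{x_1,\ldots,x_n\}$ through the linear functionals $w\mapsto w\cdot x_i$, and (ii) the metric complexity of covering each sign-pattern cell by an $\epsilon$-net. The linear case $\sigma(x)=x$ is trivial since $\sigma'\equiv 1$ and any standard $\epsilon$-net on $S^{d-1}$ of size $(3/\epsilon)^d$ works, so I focus on the ReLU case where $\sigma'(x)=\mathbf{1}_{x\ge 0}$.

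First I would invoke the Sauer-Shelah / linear-arrangement bound: the number of distinct sign patterns $(\sgn(w\cdot x_1),\ldots,\sgn(w\cdot x_n))$ achieved by $w\in S^{d-1}$ is at most $2\sum_{k=0}^{d-1}\binom{n}{k}\le 2(en/d)^d\le (C_0 n)^d$ for an absolute constant $C_0$. Call the corresponding open cells of $S^{d-1}$ (intersections of the $2^n$ possible sign-pattern halfspaces with the sphere, restricted to nonempty ones) $R_1,\ldots,R_M$ with $M\le (C_0 n)^d$. Boundary points, where some $w\cdot x_i=0$, form a measure-zero set; I would assign each such point to any one of the adjacent cells (equivalently, break ties by a fixed lexicographic perturbation) so that every $w\in S^{d-1}$ belongs to a unique $R_{j(w)}$ whose sign pattern matches that of $w$.

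Next, within each cell $R_j$ I would construct an $\epsilon$-net $\mathcal{N}_j\subset R_j$ of cardinality at most $(3/\epsilon)^d$ by the standard volumetric argument applied to the subset $R_j$ of the sphere (any subset of $S^{d-1}$ admits an $\epsilon$-net of this size). I then set $\mathcal{N}^x_\epsilon := \bigcup_{j=1}^M \mathcal{N}_j$ and define $\pi(w)$ to be the closest point to $w$ in $\mathcal{N}_{j(w)}$. By construction $\pi(w)$ and $w$ lie in the same cell $R_{j(w)}$, hence $\sgn(w\cdot x_i)=\sgn(\pi(w)\cdot x_i)$ and therefore $\sigma'(w\cdot x_i)=\sigma'(\pi(w)\cdot x_i)$ for all $i$, while $\|w-\pi(w)\|\le \epsilon$. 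The total size is
\[
\bigl|\mathcal{N}^x_\epsilon\bigr|\le M\cdot (3/\epsilon)^d\le (C_0 n)^d (3/\epsilon)^d = \bigl(3C_0 n/\epsilon\bigr)^d\le e^{Cd\log(n/\epsilon)}
\]
for a suitable absolute constant $C$.

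The only real subtlety, and the step I would be most careful about, is the measure-zero boundary of the arrangement: a point $w$ with some $w\cdot x_i=0$ has ambiguous sign pattern, and one must ensure the assigned cell is consistent so that the equality $\sigma'(w\cdot x_i)=\sigma'(\pi(w)\cdot x_i)$ holds under whichever convention is used for $\sigma'(0)$. Fixing a single convention (say $\sigma'(0)=0$) and assigning boundary $w$ to a cell whose sign pattern agrees with this convention at the zero coordinates resolves this cleanly. Everything else is bookkeeping: the counting bound comes from Sauer-Shelah and the covering bound from the standard volumetric $\epsilon$-net on the sphere.
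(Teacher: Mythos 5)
Your proposal is correct and follows essentially the same route as the paper: both count the cells induced on $S^{d-1}$ by the hyperplane arrangement $\{w \cdot x_i = 0\}_{i\le n}$ (getting $O(n^d)$ cells) and then cover each cell with an $\epsilon$-net of size $(3/\epsilon)^d$, yielding a union of size $e^{Cd\log(n/\epsilon)}$. The extra care you take with the linear case and with the measure-zero boundary of the arrangement is a small refinement the paper leaves implicit, but it does not change the structure of the argument.
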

\begin{proof}
	Note that the planes $w \cdot x_1 = 0, \ldots, w \cdot x_n = 0$ divides the sphere $S^{d-1}$ into at most $\sum_{i = 0}^d \binom{n}{i} \lesssim n^d$ convex regions. For each region there exists an $\epsilon$ net of size $\qty(\frac{3}{\epsilon})^d$. Therefore we can take the union of these nets over each region which has size at most $\qty(\frac{3n}{\epsilon})^d = e^{C d \log(n/\epsilon)}$.
\end{proof}

\begin{lemma}\label{lem:gradient_concentration}
	Let $f(x)$ be a polynomial of degree $p$ and let $\sigma(x) \in \{x,\relu(x)\}$. Then there exists an absolute constant $C_p$ depending only on $p$ such that for any $\iota > 0$, with probability at least $1-2ne^{-\iota}$, we have
	\begin{align*}
		\sup_{w \in S^{d-1}} \norm{\frac{1}{n} \sum_{i=1}^n f(x_i) x_i \sigma'(w \cdot x_i) - \E\qty[f(x)x\sigma'(w \cdot x)]} \le C_p \sqrt{\E[g(x)^2]} \sqrt{\frac{d\iota^{p+1}}{n}}.
	\end{align*}
\end{lemma}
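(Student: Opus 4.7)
The plan is to reduce the supremum over $w\in S^{d-1}$ to a finite union bound plus a perturbation argument, with polynomial-tail concentration doing the work at each net point. Handling the two cases $\sigma(x)=x$ and $\sigma=\relu$ requires different amounts of care: the linear case is essentially a direct application of Lemma \ref{lem:poly_tail}, while the ReLU case requires Lemma \ref{lem:hyperplane_regions_net} to tame the discontinuity of $\sigma'$.

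\emph{Linear case.} Here $\sigma'\equiv 1$ so $w$ drops out entirely and $\frac{1}{n}\sum_i f(x_i)x_{i,k}$ is a polynomial of degree $p+1$ in the Gaussian vector $(x_1,\dots,x_n)\in\R^{nd}$ with variance $\tfrac{1}{n}\mathrm{Var}(f(x)x_k)\lesssim \tfrac{C_p}{n}\E[f^2]$ (using Gaussian hypercontractivity to pass from $\E[f^2]$ to $\E[f^2 x_k^2]$). Applying Lemma \ref{lem:poly_tail} coordinate-wise and union bounding over $k\in[d]$ absorbs the $\log d$ into $\iota$ and yields the claimed $\sqrt{d\iota^{p+1}/n}$ bound directly.

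\emph{ReLU case.} I would invoke Lemma \ref{lem:hyperplane_regions_net} to build an $\epsilon$-net $\mathcal N^x_\epsilon$ of size $e^{Cd\log(n/\epsilon)}$ such that $\sigma'(w\cdot x_i)=\sigma'(\pi(w)\cdot x_i)$ for every training point. Fix $w_0\in\mathcal N^x_\epsilon$ and a coordinate $k$. First, applying Lemma \ref{lem:poly_tail} to the degree-$(p+1)$ polynomial $f(x)x_k$ and union-bounding over $i\in[n]$ (this is where the $2ne^{-\iota}$ failure probability enters) gives the good event on which $|f(x_i)x_{i,k}|\le M:=C_p\iota^{(p+1)/2}\sqrt{\E[f^2]}$ for all $i$. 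On this event, the summands $Z_i:=f(x_i)x_{i,k}\sigma'(w_0\cdot x_i)$ are bounded by $M$ and have variance $\E[Z_i^2]\lesssim C_p\E[f^2]$, so Bernstein's inequality combined with a union bound over $\mathcal N^x_\epsilon$ and $k\in[d]$ produces the required deviation bound at every net point. For $w\notin\mathcal N^x_\epsilon$ the empirical sum is identical to that at $\pi(w)$ by construction, so only the population piece $G(w)-G(\pi(w))$ needs bounding: by Cauchy--Schwarz this is at most $\sqrt{\E[(fx_k)^2]}\sqrt{\P[\sigma'(w\cdot x)\ne\sigma'(\pi(w)\cdot x)]}$, and the flip event is contained in the thin slab $\{|w\cdot x|\le\epsilon\|x\|\}$ of Gaussian measure $\tilde O(\epsilon\sqrt d)$, which becomes negligible for $\epsilon$ polynomially small in $n,d$.

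\emph{Main obstacle.} The delicate step is book-keeping the competing powers of $d$, $\iota$, and $\log(n/\epsilon)$: the polynomial tail contributes $\iota^{(p+1)/2}$ rather than a sub-Gaussian $\sqrt\iota$, the net cardinality contributes $e^{Cd\log(n/\epsilon)}$, and the slab bias scales like $\epsilon\sqrt d$. The only viable tuning is $\epsilon=1/\poly(n,d)$ together with the paper's standing hypothesis $\iota\le cd$; then all of the subleading terms (the Bernstein bias $M\iota'/n$, the off-net bias, and the concentration slack from the truncation event) collapse into the headline $C_p\sqrt{\E[f^2]}\sqrt{d\iota^{p+1}/n}$ bound. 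I anticipate the proof in the appendix actually pushes the $d\log(n/\epsilon)$ into the definition of $\iota$ (as is done throughout the paper via $\iota=C_\iota\log(nmd)$), which cleanly hides these logarithmic factors inside $\iota^{p+1}$.
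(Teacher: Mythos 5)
Your linear case is sound: $\sigma'\equiv 1$, no union over $w$, and a coordinate-wise application of Lemma \ref{lem:poly_tail} with a union bound over $k\in[d]$ gives per-coordinate deviation $\lesssim\iota^{(p+1)/2}/\sqrt n$, hence Euclidean norm $\lesssim\sqrt{d\iota^{p+1}/n}$.

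The ReLU case, however, has a genuine factor-$\sqrt d$ gap that comes from the coordinate-wise reduction. After fixing $w_0\in\mathcal N^x_\epsilon$ and a coordinate $k$, your union bound must pay for $\abs{\mathcal N^x_\epsilon}\cdot d\approx e^{Cd\log(n/\epsilon)}$ events, so Bernstein (or a sub-Gaussian tail with parameter $R\asymp\iota^{p/2}$) at level $z\asymp d\log(n/\epsilon)+\iota\asymp d\iota$ gives a per-coordinate deviation of order at least $\sqrt{d\iota/n}$. Summing $d$ such coordinate bounds in quadrature then produces a Euclidean-norm bound of order $\sqrt d\cdot\sqrt{d\iota/n}=d\sqrt{\iota/n}$ (or $Rd\sqrt{\iota/n}$ with the sub-Gaussian version), whereas the target is $\sqrt{d\iota^{p+1}/n}=R\sqrt{d\iota/n}$. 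The ratio is $\sqrt{d}/\iota^{p/2}=\sqrt{d/\iota^p}$, a genuine polynomial loss when $\iota=\polylog(nmd)\ll d^{1/p}$. The coordinate-wise route cannot be rescued by a better one-dimensional tail inequality, since the $\sqrt d$ loss is structural: you incur the full $e^{\Theta(d\log n)}$ net cost once per coordinate and then again when assembling $d$ coordinates into a vector.

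The paper avoids this by bounding $\norm{\hat Y(w)-\E\hat Y(w)}_2$ directly via a dual $\tfrac14$-net of $S^{d-1}$: for each $u\in\mathcal N_{1/4}$ and $w\in\mathcal N^x_\epsilon$ it controls the scalar $u\cdot\bigl(\hat Y(w)-\E\hat Y(w)\bigr)$, where $f(x_i)(u\cdot x_i)\sigma'(w\cdot x_i)\mathbf 1_{\abs{f(x_i)}\le R}$ is $R$-sub-Gaussian because $u\cdot x_i\sim N(0,1)$. The union bound then costs only $\abs{\mathcal N_{1/4}}\cdot\abs{\mathcal N^x_\epsilon}\approx e^{Cd\log(n/\epsilon)}$ — the extra $e^{Cd}$ from $\mathcal N_{1/4}$ is negligible — and the standard inequality $\norm v\le 2\max_{u\in\mathcal N_{1/4}}u\cdot v$ recovers the Euclidean norm with no additional $\sqrt d$. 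Replacing your per-coordinate union over $k\in[d]$ with the $\tfrac14$-net over the dual variable $u$, and truncating only $\abs{f(x_i)}\le R\asymp\iota^{p/2}$ (leaving the Gaussian factor $u\cdot x_i$ to supply the sub-Gaussian tail), closes the gap and matches the stated bound.
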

\begin{proof}
	Note that we may assume $\iota \ge \log(2n)$ otherwise there is nothing to prove. Let $C$ be a sufficiently large absolute constant. We fix a truncation radius $R := (C \iota)^{p/2}$ and WLOG assume that $\E[f(x)^2] = 1$. Let
	\begin{align*}
		Y(w) :=\frac{1}{n}\sum_{i=1}^n f(x_i)x_i\sigma'(w \cdot x_i)
		\qand
		\widetilde{Y}(w) :=\frac{1}{n}\sum_{i=1}^n f(x_i) x_i \sigma'(w \cdot x_i) \1_{\{\abs{f(x_i)}\le R\}}.
	\end{align*}
	First, note that by \Cref{lem:poly_tail}, with probability at least $1-2e^{-2\iota}$, we have $\abs{f(x)} \le R$. Therefore by a union bound we have with probability at last $1-2ne^{-2\iota}$ we have $\abs{f(x_i)} \le R$ for $i=1,\ldots,n$. Conditioned on this event, $Y(w) = \tilde Y(w)$ uniformly over all $w \in S^{d-1}$.
	Next, we will bound $\sup_w \norm{\E_x[Y(w)] - \E_x[\widetilde Y(w)]}$:
	\begin{align*}
		\sup_w \norm{\E_x[Y(w)] - \E_x[\widetilde Y(w)]}
		&= \sup_w \norm{\E_x\qty[g(x) x \sigma'(w \cdot x) \1_{\{\abs{g(x_i)} > R\}}]} \\
		&\le \E_x\qty[\abs{f(x)} \norm{x}\1_{\{\abs{g(x_i)} > R\}}] \\
		&\le \E\qty[g(x)^2]^{1/2} \E\qty[\norm{x}^4]^{1/4} \P[\abs{g(x_i)} > R]^{1/4} \\
		&\le 2\sqrt{2d} \exp(\iota/2) \\
		&\lesssim \sqrt{\frac{d}{n}}.
	\end{align*}
	Finally, we concentrate $\sup_w \norm{\tilde Y(w) - \E_x[\tilde Y(w)]}$. Let $\epsilon = \sqrt{\frac{d}{n}}$, let $\mathcal{N}_{1/4}$ be a minimal $1/4$-net of $S^{d-1}$ with $|\mathcal{N}_{1/4}| \le e^{Cd}$ and let $\mathcal{N}^x_\epsilon$ be the net defined in \Cref{lem:hyperplane_regions_net} with $\abs{\mathcal{N}^x_\epsilon} \le e^{C d \log(n/\epsilon)}$ and let $\pi(w)$ be the projection function defined in \Cref{lem:hyperplane_regions_net}. Then because $\tilde Y(w) = \tilde Y(\pi(w))$,
	\begin{align*}
		&\sup_w \norm{\tilde Y(w) - \E_x[\tilde Y(w)]} \\
		&\le \sup_{w \in \mathcal{N}^x_\epsilon} \norm{\tilde Y(w) - \E_x[\tilde Y(w)]} + \sup_w \norm{\E_x[\tilde Y(w)] - \E_x[\tilde Y(\pi(w))]} \\
		&\le \sup_{w \in \mathcal{N}^x_\epsilon} \norm{\tilde Y(w) - \E_x[\tilde Y(w)]} + \sup_w \norm{\E_x[Y(w)] - \E_x[Y(\pi(w))]} + O\qty(\sqrt{\frac{d}{n}}).
	\end{align*}
	Next, because $w \to \E_x[Y(w)]$ is $O(1)$ Lipschitz (see \Cref{sec:appendix:expand_feature}), we can bound this by
	\begin{align*}
		\sup_w \norm{\tilde Y(w) - \E_x[\tilde Y(w)]} \le \sup_{w \in \mathcal{N}^x_\epsilon} \norm{\tilde Y(w) - \E_x[\tilde Y(w)]} + O\qty(\epsilon + \sqrt{\frac{d}{n}}).
	\end{align*}
	Therefore it remains to bound $\sup_{w \in \mathcal{N}^x_\epsilon} \norm{\tilde Y(w) - \E_x[\tilde Y(w)]}$. First, for fixed $w$ we have
	\begin{align*}
		\norm{\tilde Y(w) - \E_x[\tilde Y(w)]}
		= \sup_{u \in S^{d-1}} u \cdot \qty[\tilde Y(w) - \E_x[\tilde Y(w)]]
		\le 2\sup_{u \in \mathcal{N}_{1/4}} u \cdot \qty[\tilde Y(w) - \E_x[\tilde Y(w)]].
	\end{align*}
	Let $Z_i(w) := g(x_i) (u \cdot x_i)\sigma'(w \cdot x_i)\1_{\{\abs{g(x)} < R\}}$ so that
	\begin{align*}
		u \cdot \qty[\tilde Y(w) - \E_x[\tilde Y(w)]] = \frac{1}{n} \sum_{i=1}^n Z_i(w) - \E_x[Z_i(w)].
	\end{align*}
	Then note that for fixed $w$, $Z_i(w)$ is $R$-sub Gaussian so for each $u \in \mathcal{N}_{1/4}$, with probability $1-2e^{-z}$ we have
	\begin{align*}
		u \cdot \qty[\tilde Y(w) - \E_x[\tilde Y(w)]] \le R\sqrt{\frac{2z}{n}}.
	\end{align*}
	so by a union bound we have with probability $1-2e^{Cd\log(n/\epsilon)}e^{-z}$,
	\begin{align*}
		2\sup_{u \in \mathcal{N}_{1/4},w \in \mathcal{N}_\epsilon^x} u \cdot \qty[\tilde Y(w) - \E_x[\tilde Y(w)]] \le 2R\sqrt{\frac{2z}{n}}.
	\end{align*}
	so setting $z = C d \log(n/\epsilon) + \iota$ we have with probability $1-2e^{\iota}$,
	\begin{align*}
		2\sup_{u \in \mathcal{N}_{1/4},w \in \mathcal{N}_\epsilon^x} u \cdot \qty[\tilde Y(w) - \E_x[\tilde Y(w)]] \lesssim R\sqrt{\frac{d\log(n/\epsilon) + \iota}{n}}.	
	\end{align*}
	Using $\epsilon = \sqrt{\frac{d}{n}}$ and putting everything together gives with probability $1-2ne^{-\iota}$,
	\begin{align*}
		\sup_w \norm{Y(w) - \E[Y(w)]} \lesssim \sqrt{\frac{(d \log n + \iota)\iota^p}{n}} \lesssim \sqrt{\frac{d\iota^{p+1}}{n}}.
	\end{align*}
\end{proof}

\begin{lemma}\label{lem:gradient_noise_concentration}
	Let $\epsilon_i \sim \{-\varsigma,\varsigma\}$. Then with high probability,
	\begin{align*}
		\sup_w \norm{\frac{1}{n} \sum_{i=1}^n \epsilon_i x_i \sigma'(w \cdot x_i)} \lesssim \varsigma \sqrt{\frac{d\iota}{n}}.
	\end{align*}
\end{lemma}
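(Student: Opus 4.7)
The plan is to discretize $w$ using the piecewise-constant structure of the indicator $\sigma'(w\cdot x_i)$ on the dataset, apply Hoeffding's inequality over the noise $\epsilon_i$ with the $x_i$ frozen, and then union-bound over both a sphere net and the hyperplane-cell net. This closely parallels the proof of Lemma \ref{lem:gradient_concentration}, but is considerably cleaner because the $\epsilon_i$ are already uniformly bounded by $\varsigma$, so no polynomial tail truncation is needed; the $x_i$ play the role of deterministic features and all the randomness is in the Rademacher-type signs.

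Concretely, I would first condition on the high-probability events $\|x_i\|^2 \le 2d$ for all $i$ (Lemma \ref{lem:bound_x_norm}) and the operator-norm bound $\|X\|^2 \lesssim n+d+\iota$ on the data matrix $X\in\mathbb{R}^{n\times d}$ whose rows are the $x_i$ (Lemma \ref{lem:gaussian_operator_norm}). Write
\begin{equation*}
    Y(w) \;:=\; \frac{1}{n}\sum_{i=1}^n \epsilon_i \, x_i \, \sigma'(w\cdot x_i).
\end{equation*}
By Lemma \ref{lem:hyperplane_regions_net} applied with $\epsilon = 1/n$, there is a data-dependent net $\mathcal{N}^x\subset S^{d-1}$ of cardinality $|\mathcal{N}^x|\le e^{Cd\log n}$ together with a projection $\pi$ such that $\sigma'(w\cdot x_i) = \sigma'(\pi(w)\cdot x_i)$ for every $i$; hence $Y(w) = Y(\pi(w))$ identically on $S^{d-1}$, reducing the sup to $\sup_{w\in\mathcal{N}^x}\|Y(w)\|$.

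To convert the norm into a scalar inner product, let $\mathcal{N}_{1/2}$ be a $1/2$-net of $S^{d-1}$ of size $\le 5^d$, so that $\|Y(w)\| \le 2\sup_{u\in\mathcal{N}_{1/2}} u\cdot Y(w)$. For each fixed $(u,w)$, the scalar
\begin{equation*}
    u\cdot Y(w) \;=\; \frac{1}{n}\sum_{i=1}^n \epsilon_i\,(u\cdot x_i)\,\sigma'(w\cdot x_i)
\end{equation*}
is a sum of independent mean-zero terms each bounded by $\varsigma|u\cdot x_i|/n$, with the sum of squared bounds at most $\tfrac{\varsigma^2}{n^2} u^\top X^\top X u \le \tfrac{\varsigma^2}{n^2}\|X\|^2 \lesssim \tfrac{\varsigma^2(n+d+\iota)}{n^2}$. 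Hoeffding's inequality then gives, with probability $\ge 1-2e^{-z}$,
\begin{equation*}
    |u\cdot Y(w)| \;\lesssim\; \varsigma\sqrt{\tfrac{z(n+d+\iota)}{n^2}}.
\end{equation*}

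Finally, I would union-bound over $\mathcal{N}_{1/2}\times\mathcal{N}^x$ (combined cardinality at most $e^{C'd\log n}$) and set $z = \iota + C'd\log n$, which yields with probability at least $1-2e^{-\iota}$
\begin{equation*}
    \sup_{w\in S^{d-1}}\|Y(w)\| \;\lesssim\; \varsigma\sqrt{\tfrac{(d\log n+\iota)(n+d+\iota)}{n^2}} \;\lesssim\; \varsigma\sqrt{\tfrac{d\iota}{n}},
\end{equation*}
where the final simplification uses that $\iota = C_\iota\log(nmd)$ absorbs $\log n$ and that $n+d+\iota\lesssim n$ in the regime $n\gtrsim d$ enforced by the main theorems. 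The only delicate step is the discretization of $w$ so that $\sigma'(w\cdot x_i)$ is exactly preserved on the dataset; Lemma \ref{lem:hyperplane_regions_net} handles this cleanly, after which everything reduces to standard scalar concentration of a Rademacher sum against a fixed feature matrix.
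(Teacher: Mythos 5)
Your proof is correct and follows the same skeleton as the paper's: exactly preserve $\sigma'(w\cdot x_i)$ via the hyperplane-cell net of \Cref{lem:hyperplane_regions_net}, reduce the vector norm to a scalar over a sphere net, concentrate for fixed $(u,w)$, and union bound. The one genuine difference is in the concentration step. The paper treats $\epsilon_i(u\cdot x_i)\sigma'(w\cdot x_i)$ as an $O(\varsigma)$-sub-Gaussian variable over the \emph{joint} randomness of $(x_i,\epsilon_i)$ and concentrates directly, which yields the $\varsigma\sqrt{\iota/n}$ tail for fixed $(u,w)$ with no extra data-dependent factor. You instead condition on the data, run Hoeffding over the Rademacher signs alone, and pay a factor of $\norm{X}/\sqrt{n}$ that you then control via \Cref{lem:gaussian_operator_norm} and the standing assumption $n\gtrsim d$. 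Your route is slightly longer but arguably cleaner on a point the paper's compressed proof glosses over: the net $\mathcal{N}^x$ is data-dependent, so a union bound over it is only immediately legitimate after conditioning on $\{x_i\}$, which is exactly what you do. The paper's joint-sub-Gaussianity shortcut requires a little more care to make rigorous on this point (e.g.\ a two-stage conditioning as in its own proof of \Cref{lem:gradient_concentration}), but avoids the explicit appeal to $\norm{X}\lesssim\sqrt{n}$. In the regime of the main theorems, $n\ge\tilde\Omega(d^2\kappa^2 r)$ and $\iota\le cd$, so your extra factor $(n+d+\iota)/n$ is $O(1)$ and both bounds match.
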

\begin{proof}
	Note that
	\begin{align*}
		\sup_w \norm{\frac{1}{n} \sum_{i=1}^n \epsilon_i x_i \sigma'(w \cdot x_i)} = \sup_{u,w} \qty[\frac{1}{n} \sum_{i=1}^n \epsilon_i (u \cdot x_i) \sigma'(w \cdot x_i)].
	\end{align*}
	Next, note that for fixed $u,w$, $\epsilon_i (u \cdot x_i) \sigma'(w \cdot x_i)$ is $\varsigma^2$ sub-Gaussian so for any $\iota > 0$, with probability $1-2e^{-\iota}$,
	\begin{align*}
	\frac{1}{n} \sum_{i=1}^n \epsilon_i (u \cdot x_i) \sigma'(w \cdot x_i) \le \varsigma \sqrt{\frac{\iota}{n}}.
	\end{align*}
	Therefore,
	\begin{align*}
		\sup_{u,w} \qty[\frac{1}{n} \sum_{i=1}^n \epsilon_i (u \cdot x_i) \sigma'(w \cdot x_i)] \lesssim \sup_{u \in \mathcal{N}_{1/4}, w \in \mathcal{N}_{1/4}^x} \sum_{i=1}^n \epsilon_i (u \cdot x_i) \sigma'(w \cdot x_i)].
	\end{align*}
	By a union bound, with probability at least $1-2e^\iota$,
	\begin{align*}
	\sup_{u \in \mathcal{N}_{1/4}, w \in \mathcal{N}_{1/4}^x} \sum_{i=1}^n \epsilon_i (u \cdot x_i) \sigma'(w \cdot x_i)] \lesssim \varsigma \sqrt{\frac{d \log n + \iota}{n}} \lesssim \varsigma\sqrt{\frac{d\iota}{n}}
	\end{align*}
	which completes the proof.
\end{proof}

\begin{corollary}\label{cor:full_gradient_concentration}
	With high probability,
	\begin{align*}
		\sup_w \norm{g(w) - g_n(w)} \lesssim \sqrt{\frac{d\iota^{p+1}}{n}}.
	\end{align*}
\end{corollary}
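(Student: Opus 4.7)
The plan is to split $g_n(w) - g(w)$ into four additive pieces, each of which is either an instance of an already-proven uniform concentration bound or is immediately subdominant. Expanding the definition $\widehat f^\star(x) = f^\star(x) - \alpha - \beta \cdot x$ and pulling the (data-dependent but $x$-independent) constants $\alpha,\beta$ out of both the empirical sum and the expectation over $x$, I would write
\begin{align*}
	g_n(w) - g(w) &= \underbrace{\Big[\tfrac{1}{n}\textstyle\sum_i f^\star(x_i)\, x_i\, \sigma'(w\cdot x_i) - \E_x[f^\star(x)\, x\, \sigma'(w\cdot x)]\Big]}_{\text{(I)}} \\
	&\quad -\alpha \underbrace{\Big[\tfrac{1}{n}\textstyle\sum_i x_i\, \sigma'(w\cdot x_i) - \E_x[x\, \sigma'(w\cdot x)]\Big]}_{\text{(II)}} \\
	&\quad -\underbrace{\Big[\tfrac{1}{n}\textstyle\sum_i (\beta\cdot x_i)\, x_i\, \sigma'(w\cdot x_i) - \E_x[(\beta\cdot x)\, x\, \sigma'(w\cdot x)]\Big]}_{\text{(III)}} \\
	&\quad +\underbrace{\tfrac{1}{n}\textstyle\sum_i \epsilon_i\, x_i\, \sigma'(w\cdot x_i)}_{\text{(IV)}}.
\end{align*}

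Terms (I), (II), and (IV) are essentially direct invocations. For (I) apply \Cref{lem:gradient_concentration} to the fixed polynomial $f^\star$ (degree $p$, $\E[(f^\star)^2] = 1$) to get $\sup_w \|\text{(I)}\| \lesssim \sqrt{d\iota^{p+1}/n}$. For (II) use \Cref{lem:alpha_beta} to conclude $|\alpha| = |C_0| + \tilde O(1/\sqrt{n}) = O(1)$ with high probability, and apply \Cref{lem:gradient_concentration} to the constant polynomial $1$ to bound the bracketed factor by $\sqrt{d\iota/n}$. For (IV), \Cref{lem:gradient_noise_concentration} directly yields $\varsigma \sqrt{d\iota/n}$, and $\varsigma = O(1)$. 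All three contributions are dominated by $\tilde O(\sqrt{d\iota^{p+1}/n})$.

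The main obstacle is Term (III), because $\beta$ is a data-dependent random vector and \Cref{lem:gradient_concentration} applies only to a fixed polynomial. I would resolve this by writing $\beta = C_1 + (\beta - C_1)$ and handling each piece separately. The $C_1$ part is a deterministic degree-$1$ polynomial with $\E[(C_1 \cdot x)^2] = \|C_1\|^2 \le 1$ by Parseval (\Cref{lem:parseval}), so \Cref{lem:gradient_concentration} supplies the bound $\sqrt{d\iota^2/n}$ uniformly in $w$. For the residual $(\beta - C_1) \cdot x$, observe that the corresponding bracket equals $[M_n(w) - M(w)](\beta - C_1)$, where $M_n(w) := \tfrac{1}{n}\sum_i x_i x_i^T \sigma'(w\cdot x_i)$ and $M(w) := \E_x[x x^T \sigma'(w\cdot x)]$. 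By \Cref{lem:alpha_beta}, $\|\beta - C_1\| \lesssim \sqrt{d\iota^{p+1}/n}$, and a covering-net argument analogous to the proof of \Cref{lem:gradient_concentration} (using that $\E[(u\cdot x)^2 (v\cdot x)^2 \sigma'(w\cdot x)^2] = O(1)$ for unit $u,v$ and the standard operator-norm bound $\lambda_{\max}(\tfrac{1}{n}\sum_i x_i x_i^T) = O(1)$ from \Cref{lem:gaussian_operator_norm}) gives $\sup_w \|M_n(w) - M(w)\| = \tilde O(1)$ with high probability. Cauchy--Schwarz then yields $\sup_w \|\text{(III,residual)}\| = \tilde O(\sqrt{d/n})$, which is again subdominant.

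Adding the four contributions gives the claimed $\tilde O(\sqrt{d\iota^{p+1}/n})$ bound. The only nontrivial step is the uniform operator-norm control of $M_n(w) - M(w)$ in the residual part of Term (III); this is where all of the care in the argument is concentrated, but it follows the same net-plus-Bernstein template already used for \Cref{lem:gradient_concentration}.
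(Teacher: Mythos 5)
Your proof is correct and, to my reading, is exactly the argument this ``corollary'' compresses. The paper's stated failure probability $1-4ne^{-\iota}$ is the union bound of the two $1-2ne^{-\iota}$ events in \Cref{lem:gradient_concentration} and \Cref{lem:gradient_noise_concentration}, i.e.\ the paper treats this as a direct invocation of those two lemmas. But as you observe, $\alpha$ and $\beta$ inside $\widehat f^\star$ are data-dependent, so \Cref{lem:gradient_concentration} does not literally apply to $\widehat f^\star$ as a fixed polynomial; your four-term split, together with the $\beta = C_1 + (\beta - C_1)$ peeling in term (III), is precisely the bookkeeping needed to make that step rigorous.

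Two small remarks. First, the covering-net argument you sketch for $\sup_w\norm{M_n(w)-M(w)}$ is more than you need: since $0 \le \sigma' \le 1$, one has $0 \preceq M_n(w) \preceq \frac{1}{n}\sum_i x_i x_i^T$ and $0 \preceq M(w) \preceq I_d$ uniformly in $w$, so $\sup_w\norm{M_n(w)-M(w)} \le \lambda_{\max}\qty(\frac{1}{n}\sum_i x_i x_i^T) + 1 = O(1)$ follows directly from \Cref{lem:gaussian_operator_norm} with no net; your parenthetical already contains the key ingredient. Second, that step --- and hence your bound on the residual part of (III) --- silently uses $n \gtrsim d$, which the corollary's statement does not record. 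It holds everywhere the corollary is invoked (the main theorems require $n = \tilde\Omega(d^2\kappa^2 r)$ and \Cref{lem:gn_xi_concentration} already assumes $n \ge d^2\iota^p$), so this is cosmetic rather than a real gap, but worth flagging if you want the corollary to stand alone as stated.
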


\section{CSQ Lower Bound}\label{sec:appendix:CSQproof}
\begin{proof}[Proof of \Cref{lem:general_CSQ}]
	The proof is a modified version of the proof in \citet{Szrnyi2009CharacterizingSQ}. Let $\langle \cdot,\cdot \rangle_\mathcal{D}$ denote the $L^2$ inner product with respect to $\mathcal{D}$. We will show that there are at least two functions $f,g \in \mathcal{F}$ such that for each query $h_k$, $\abs{\langle f,h_k \rangle_\mathcal{D}} \le \tau$ and $\abs{\langle g,h_k \rangle_\mathcal{D}} \le \tau$. Therefore, we can simply respond to each query adversarially with $0$ and it is impossible for the learner to distinguish between $f,g$. Note that failing to do so will result in a loss of $\norm{f-g}_\mathcal{D}^2 \ge 2-2\epsilon$. Let the $k$th query be $h_k$ and let
	\begin{align*}
		A_k^+ = \qty{f \in \mathcal{F} ~:~ \langle f,h_k\rangle_\mathcal{D} \ge \tau} \qand A_k^- = \qty{f \in \mathcal{F} ~:~ \langle f,h_k\rangle_\mathcal{D} \le -\tau}
	\end{align*}
	Then by Cauchy-Schwarz we have
	\begin{align*}
		\abs{A_k^+}^2 \tau^2 \le \iprod{h_k,\sum_{f \in A_k^+} f}_\mathcal{D}^2 \le \norm{\sum_{f \in A_k^+} f}_\mathcal{D}^2 = \sum_{f,g \in A_k^+} \langle f,g \rangle_\mathcal{D} \le \abs{A_k^+} + \epsilon\qty(\abs{A_k^+}^2 - \abs{A_k^+})
	\end{align*}
	which implies
	\begin{align*}
		\abs{A_k^+} \le \frac{1-\epsilon}{\tau^2-\epsilon} \le \frac{1}{\tau^2 - \epsilon}.
	\end{align*}
	Similarly, we have that $\abs{A_k^-} \le \frac{1}{\tau^2 - \epsilon}$ so the number of functions that are eliminated from the $k$th query is at most $\frac{2}{\tau^2-\epsilon}$. We can continue this process for at most $\frac{\abs{F}(\tau^2-\epsilon)}{2}$ iterations.
\end{proof}

\begin{proof}[Proof of \Cref{lem:approx_orthogonal_vectors}]
	Let $v_1,\ldots,v_k \sim S^{d-1}$. Then for every pair $i \ne j$, $v_i \cdot v_j$ is $O(d^{-1})$ subgaussian so for an absolute constant $c$, with probability $1-2e^{-2c\epsilon^2 d}$, $\abs{v_i \cdot v_j} \le \epsilon$. Therefore with probability $1-k^2 e^{-2c\epsilon^2 d} > 0$ this holds for all $i \ne j$ so there must exist at least one collection of such points.
\end{proof}

\begin{proof}[Proof of \Cref{thm:CSQ_lower_bound}]
	Let $S$ be the set constructed in \Cref{lem:approx_orthogonal_vectors}. Let
	\begin{align*}
	\mathcal{F} = \qty{x \to \frac{He_p(v \cdot x)}{\sqrt{k!}} ~:~ v \in S}
	\end{align*}
	and note that for all $f \in \mathcal{F}$, $\norm{f}_\mathcal{D} = 1$.
	Then for $v,w \in S$ and $v \ne w$,
	\begin{align*}
		\abs{\iprod{\frac{He_k(v \cdot x)}{\sqrt{k!}},\frac{He_k(w \cdot x)}{\sqrt{k!}}}_\mathcal{D}} = \abs{(v \cdot w)^k} \le \epsilon^k.
	\end{align*}
	Therefore, by \Cref{lem:general_CSQ} we have for any $\epsilon$,
	\begin{align*}
		4q \ge e^{c\epsilon^2 d}(\tau^2 - \epsilon^k)
	\end{align*}
	In particular if we take $\epsilon = \sqrt{\frac{\log(4 q(cd)^{k/2})}{cd}}$ we get
	\begin{align*}
		\tau^2 \le \frac{1 + \log^{k/2} \qty(4q(cd)^{k/2})}{(cd)^{k/2}} \lesssim \frac{\log^{k/2} \qty(qd)}{d^{k/2}}.
	\end{align*}
\end{proof}

\section{Additional Technical Lemmas}

For a $k$ tensor $T$, let $\sym(T)$ denote the symmetrization of $T$ along all $k!$ permutations of indices.

\begin{lemma}\label{lem:f_taylor_decomposition}
	There exist $T_0,\ldots,T_p$ such that
	\begin{align*}
		f^\star(x) = \sum_{k \le p} \langle T_k, x^{\otimes k} \rangle
	\end{align*}
	and $\norm{T_k}_F \lesssim r^\frac{p-k}{4}$ for $k \le p$.
\end{lemma}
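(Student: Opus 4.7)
The plan is to start from the Hermite expansion of $f^\star$ that has already been set up in \Cref{appendix:hermite_expansion} and convert it into a monomial expansion via the standard tensor identity relating $He_k(x)$ to powers of $x$ contracted with copies of the identity. Because $f^\star$ depends only on the $r$-dimensional subspace $S^\star$ (\Cref{assumption:rdim}), every such contraction will pick up a factor of $\sqrt{r}$ rather than $\sqrt{d}$, which is what produces the claimed bound.

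First, I will use the multivariate Hermite identity $He_k(x) = \sum_{j=0}^{\lfloor k/2\rfloor} \frac{(-1)^j k!}{j!\,2^j (k-2j)!} \sym\bigl(x^{\otimes(k-2j)} \otimes I^{\otimes j}\bigr)$ together with the symmetry of $C_k$, which lets me replace $\sym$ by the plain tensor product inside the inner product. Substituting into $f^\star(x) = \sum_{k=0}^p \langle C_k, He_k(x)\rangle/k!$ and collecting terms of degree $m$ in $x$ gives
\begin{align*}
T_m \;=\; \sum_{j=0}^{\lfloor (p-m)/2 \rfloor} \frac{(-1)^j}{j!\,2^j\,m!}\, D_{m+2j}^{\,j}, \qquad D_k^{\,j} \;:=\; \text{trace of $C_k$ along $j$ disjoint pairs of indices},
\end{align*}
so $D_k^j$ is a symmetric $(k-2j)$-tensor. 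The identification of the $T_m$ is then essentially bookkeeping, and since $p = O(1)$ the combinatorial prefactors are all $O(1)$.

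The main step is the contraction bound $\|D_k^{\,j}\|_F \lesssim r^{j/2}\|C_k\|_F$. I will prove this inductively on $j$; the case $j=1$ is the heart of the argument. Letting $u_1,\dots,u_r$ be an orthonormal basis of $S^\star$, \Cref{assumption:rdim} forces $C_k = C_k(\Pi^\star\,\cdot,\dots,\Pi^\star\,\cdot)$, so contracting the last two indices against $I = \sum_i e_i e_i^\top$ is identical to contracting against $\Pi^\star = \sum_{\ell=1}^r u_\ell u_\ell^\top$. Then for any multi-index $\alpha$ on the remaining $k-2$ slots, Cauchy--Schwarz in the $r$ summation variable $\ell$ gives
\begin{align*}
\bigl|(D_k^1)_\alpha\bigr|^2 \;=\; \Bigl|\sum_{\ell=1}^r (C_k)_{\alpha,u_\ell,u_\ell}\Bigr|^2 \;\le\; r \sum_{\ell=1}^r (C_k)_{\alpha,u_\ell,u_\ell}^2 \;\le\; r\,\sum_{i,j} (C_k)_{\alpha,i,j}^2,
\end{align*}
and summing over $\alpha$ yields $\|D_k^1\|_F^2 \le r\,\|C_k\|_F^2$. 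Since $D_k^1$ is again supported on $(S^\star)^{\otimes(k-2)}$, induction gives $\|D_k^j\|_F \le r^{j/2}\|C_k\|_F$, and \Cref{lem:parseval} bounds $\|C_k\|_F \le \sqrt{k!} = O(1)$.

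Putting the pieces together, the triangle inequality gives
\begin{align*}
\|T_m\|_F \;\lesssim\; \sum_{j=0}^{\lfloor (p-m)/2 \rfloor} r^{j/2} \;\lesssim\; r^{(p-m)/4},
\end{align*}
since the sum is dominated by its largest term at $j = \lfloor(p-m)/2\rfloor$. The main obstacle is really the contraction step above: without \Cref{assumption:rdim} one would only get $d^{j/2}$ in place of $r^{j/2}$ and the resulting bound would be useless; the key subtlety is noticing that trace contractions interact with the low-dimensional support of $C_k$ in exactly the right way.
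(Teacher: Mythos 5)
Your proof is correct and takes essentially the paper's route: express $T_k$ as a combination of trace contractions $C_{k+2j}(I^{\otimes j})$, exploit the low-dimensional support of $C_n$ from \Cref{assumption:rdim} to pay only a factor of $r^{1/2}$ per contraction, and observe that the deepest contraction order is $\lfloor(p-k)/2\rfloor$, giving $r^{(p-k)/4}$. The only cosmetic difference is that you collect monomials directly from the Hermite-to-monomial change of basis, whereas the paper identifies $T_k=\nabla^k f^\star(0)/k!$ and evaluates the Hermite expansion of $\nabla^k f^\star$ at zero; after simplifying $(2j-1)!!/(2j)!=1/(2^j j!)$ these yield the identical formula, and your Cauchy--Schwarz argument for the trace bound is the same fact the paper invokes as $\|T(I^{\otimes l})\|_F\le\|T\|_F\,\|\Pi_{\spn(T)}^{\otimes l}\|_F$.
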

\begin{proof}
	Note that from the Taylor series of $f^\star(x)$ we have
	\begin{align*}
	T_k = \frac{\nabla^k f^\star(0)}{k!} = \sum_{j \le p-k} \frac{C_{j+k}(He_j(0))}{k!j!} = \sum_{2j \le p-k} \frac{(-1)^j (2j-1)!! C_{2j+k}(I^{\otimes j})}{k!(2j)!}.
	\end{align*}
	Therefore,
	\begin{align*}
	\norm{T_k}_F \lesssim \sum_{2j \le p-k} \norm{C_{2j+k}(I^{\otimes j})} \lesssim r^\frac{p-k}{4}.
	\end{align*}
\end{proof}

\subsection{Gaussian Lemmas}

\begin{lemma}\label{lem:gaussian_moment_tensor}
	\begin{align*}
		\E_{w \sim N(0,I_d)}[w^{\otimes 2k}] = (2k-1)!! \sym(I_d^{\otimes k})
	\end{align*}
\end{lemma}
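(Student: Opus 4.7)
The plan is to prove this as a straightforward consequence of Isserlis' (Wick's) theorem, combined with careful bookkeeping of the symmetrization constant. Working entrywise, for $w \sim N(0, I_d)$, Isserlis' theorem gives
\[
\E[w_{i_1} w_{i_2} \cdots w_{i_{2k}}] = \sum_{\pi \in P_{2k}} \prod_{\{a,b\} \in \pi} \delta_{i_a i_b},
\]
where $P_{2k}$ denotes the set of perfect matchings of $\{1, \ldots, 2k\}$. A standard count shows that $|P_{2k}| = (2k-1)!!$. So the task reduces to identifying this sum over matchings with $(2k-1)!! \sym(I_d^{\otimes k})$.

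Next, I would unpack the definition of $\sym$. The tensor $I_d^{\otimes k}$ has entries $(I_d^{\otimes k})_{i_1 \cdots i_{2k}} = \delta_{i_1 i_2} \delta_{i_3 i_4} \cdots \delta_{i_{2k-1} i_{2k}}$, which is the indicator of the one specific matching $\{\{1,2\},\{3,4\},\ldots,\{2k-1,2k\}\}$. Under the symmetrization
\[
\sym(I_d^{\otimes k})_{i_1 \cdots i_{2k}} = \frac{1}{(2k)!} \sum_{\sigma \in S_{2k}} \delta_{i_{\sigma(1)} i_{\sigma(2)}} \cdots \delta_{i_{\sigma(2k-1)} i_{\sigma(2k)}},
\]
each matching $\pi \in P_{2k}$ is produced by exactly $2^k\, k!$ permutations (swapping the two elements inside each of the $k$ pairs contributes the factor $2^k$, and permuting the ordering of the $k$ pairs contributes $k!$). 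Grouping the sum over $\sigma$ by matching therefore yields
\[
\sym(I_d^{\otimes k})_{i_1 \cdots i_{2k}} = \frac{2^k k!}{(2k)!} \sum_{\pi \in P_{2k}} \prod_{\{a,b\} \in \pi} \delta_{i_a i_b} = \frac{1}{(2k-1)!!} \sum_{\pi \in P_{2k}} \prod_{\{a,b\} \in \pi} \delta_{i_a i_b}.
\]
Substituting the Isserlis expression into this identity gives the claim.

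If one preferred to avoid quoting Isserlis directly, an alternative route is induction on $k$ using Stein's lemma: for $f(w) = w^{\otimes (2k-1)}$ one has $\E[w \otimes f(w)] = \E[\nabla f(w)]$, which expresses $\E[w^{\otimes 2k}]$ as a sum of $2k-1$ contractions of $I_d$ with $\E[w^{\otimes 2k-2}]$. The recursion $(2k-1)!! = (2k-1)\cdot (2k-3)!!$ then matches the combinatorial factor in the inductive hypothesis. The only real obstacle in either approach is the bookkeeping of the normalization in $\sym$; once the $2^k k!$ factor is correctly identified, everything collapses to a single line of algebra.
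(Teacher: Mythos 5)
Your argument is correct, and it takes a cleaner route than the paper's. The paper also works coordinatewise, but it does so via an explicit case analysis on the multiplicity vector $c = (c_1,\ldots,c_d)$ counting how often each value appears among the indices: it observes that both sides vanish unless every $c_j$ is even, quotes the product formula $\E[\prod_j w_j^{c_j}] = \prod_j (c_j-1)!!$ for the left side, and then directly counts the permutations making the symmetrization indicator nonzero as $k!\prod_j \frac{c_j!}{(c_j/2)!}$, after which the two sides match by algebra. You instead invoke Isserlis/Wick to rewrite the left side as an unnormalized sum over perfect matchings, and then observe that $\sym(I_d^{\otimes k})$ is precisely the $\frac{1}{(2k-1)!!}$-normalized version of that same sum, since each matching arises from exactly $2^k k!$ of the $(2k)!$ permutations. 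This bypasses the parity case split and the $c_j$ bookkeeping entirely, reducing the proof to a single orbit-counting observation; the cost is that you lean on Isserlis rather than the elementary univariate moment formula. Both are sound, and your version is the one more commonly found in textbooks. Your alternative Stein's-lemma induction sketch is also a valid fallback, though it would require a bit more care to track the symmetrization through the recursion.
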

\begin{proof}
	We will show equality for each coordinate. Let $i_1,\ldots,i_{2k}$ be an index set and let $c_1,\ldots,c_d$ be defined by $c_j = \abs{\qty{k ~:~ i_k = j}}$. First we will consider the case there is an odd $c_j$. Then, $\E_{w \sim N(0,I_d)}[w^{\otimes 2k}]_{i_1,\ldots,i_{2k}} = 0$ and $\qty[(2k-1)!! \sym(I_d^{\otimes k})]_{i_1,\ldots,i_{2k}} = 0$ because in order for this to be nonzero there must exist a pairing of $i_1,\ldots,i_{2k}$ such the numbers in each pair are identical.
	
	Next, assume that each $c_j$ is even. Then, $\E_{w \sim N(0,I_d)}[w^{\otimes 2k}]_{i_1,\ldots,i_{2k}} = \prod_{j=1}^d (c_j-1)!!$ by the standard formula for Gaussian moments. Finally, consider
	\begin{align*}
		\qty[(2k-1)!! \sym(I_d^{\otimes k})]_{i_1,\ldots,i_{2k}} = \frac{(2k-1)!!}{2k!}\sum_\sigma \1_{i_{\sigma_1}=i_{\sigma_2}} \cdots \1_{i_{\sigma_{2k-1}}=i_{\sigma_{2k}}}.
	\end{align*}
	Note that by a simple counting argument, the number of permutations such that this product of indicators is nonzero is exactly $k!\prod_{i=1}^d \frac{c_j!}{(c_j/2)!}$ as you can first order the indices corresponding to each $c_j$, then split them into groups of two, then shuffle these groups of two. Therefore,
	\begin{align*}
		\qty[(2k-1)!! I_d^k]_{i_1,\ldots,i_{2k}} = \frac{k!(2k-1)!!}{2k!}\prod_{i=1}^d \frac{c_j!}{(c_j/2)!} = \frac{1}{2^k}\prod_{i=1}^d (c_j-1)!! 2^{c_j/2} = \prod_{i=1}^d (c_j-1)!!
	\end{align*}
	because $\sum_j c_j = 2k$, which completes the proof.
\end{proof}

\begin{definition}
	Let $\{h_{kl}\}$ and $\{h^{-1}_{kl}\}$ denote the change of basis matrices between Hermite polynomials and monomials, i.e.
	\begin{align*}
		He_k(x) = \sum_{l \le k} h_{kl} x^l \qand x^k = \sum_{l \le k} h^{-1}_{kl} He_l(x).	
	\end{align*}
	Note that
	\begin{align*}
		h_{kl} =
		\begin{cases}
 			(-1)^\frac{k-l}{2} (k-l-1)!! \binom{k}{l} & 2 \mid k-l \\
 			0 & 2 \nmid k-l
		\end{cases}
		\qand 
		h^{-1}_{kl} =
		\begin{cases}
 			(k-l-1)!! \binom{k}{l} & 2 \mid k-l \\
 			0 & 2 \nmid k-l
		\end{cases}.
	\end{align*}
\end{definition}

\begin{lemma}\label{lem:gaussian_power_tensor_expansion} Let $T$ be a symmetric $p$-tensor and let $w \sim N(0,I_d)$. Then for $k \le p$,
	\begin{align*}
		\E \|T(w^{\otimes k})\|_F^2 = \sum_{2l \le k} (k-2l)! ((2l-1)!!)^2\binom{k}{2l}^2 \|T(I^{\otimes l})\|_F^2.
	\end{align*}
\end{lemma}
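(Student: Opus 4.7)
The proof proceeds by applying Wick's theorem (Isserlis' formula) directly in coordinates and then grouping the resulting Gaussian pairings by how many pairs are internal to each copy of $w^{\otimes k}$. Concretely, expand
\[
\E \|T(w^{\otimes k})\|_F^2
= \sum_{\vec i, \vec j, \vec j'} T_{\vec j, \vec i}\, T_{\vec j', \vec i}\,
\E\qty[w_{j_1}\cdots w_{j_k}\, w_{j'_1}\cdots w_{j'_k}],
\]
where $\vec i$ ranges over the free $p-k$ indices. By Isserlis, the Gaussian moment equals the sum over perfect matchings $\pi$ of the $2k$ symbols $\{j_1,\ldots,j_k, j'_1,\ldots,j'_k\}$ of $\prod_{(a,b)\in\pi}\delta_{ab}$.

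Next, classify each matching by the number $l$ of pairs that are internal to the $\vec j$-block; by a parity/counting argument, the number of pairs internal to the $\vec j'$-block is also $l$, and there are exactly $k-2l$ cross pairs. The number of matchings of type $l$ is
\[
\underbrace{\binom{k}{2l}(2l-1)!!}_{\text{internal on }\vec j}\ \cdot\ \underbrace{\binom{k}{2l}(2l-1)!!}_{\text{internal on }\vec j'}\ \cdot\ \underbrace{(k-2l)!}_{\text{cross bijection}}
= \binom{k}{2l}^2 \big((2l-1)!!\big)^2 (k-2l)!.
\]
Using the complete symmetry of $T$, I would then argue that every matching of type $l$ makes the same contribution: the internal pairs on each side contract $T$ with $l$ copies of the identity, producing $T(I^{\otimes l})$, while the cross pairs force equality of the $k-2l$ surviving $\vec j$-coordinates with the corresponding $\vec j'$-coordinates. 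Summing over those coordinates together with $\vec i$ yields exactly $\|T(I^{\otimes l})\|_F^2$. Multiplying the count by this per-matching contribution and summing over $l$ with $2l\le k$ gives the claimed identity.

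The main obstacle is Step 3, the bookkeeping that each type-$l$ matching really contributes $\|T(I^{\otimes l})\|_F^2$ independent of which specific pairs were chosen. The cleanest way to handle it is to fix a canonical matching (pair $(j_1 j_2),(j_3 j_4),\ldots,(j_{2l-1} j_{2l})$ internally on each side, and $j_{2l+a}=j'_{2l+a}$ across), apply full symmetry of $T$ to reduce an arbitrary type-$l$ matching to this canonical form by permuting the indices of $\vec j$ and $\vec j'$ (which leaves $T_{\vec j,\vec i}T_{\vec j',\vec i}$ unchanged), and then evaluate the canonical sum directly as $\sum_{\vec i,\vec j_{2l+1:k}} T(I^{\otimes l})_{\vec j_{2l+1:k},\vec i}^{\,2}=\|T(I^{\otimes l})\|_F^2$. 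A slicker alternative would be to expand $w^{\otimes k}$ in multivariate Hermite tensors and use their orthogonality, but the combinatorial route above is self-contained and requires no additional machinery beyond Isserlis.
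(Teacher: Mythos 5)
Your proof is correct, but it proceeds along a genuinely different route from the paper's. The paper first writes $T$ as a sum of symmetric rank-one tensors $T = \sum_i c_i v_i^{\otimes p}$ with $\|v_i\|=1$, reduces $\E\|T(w^{\otimes k})\|_F^2$ to a double sum of one-dimensional quantities $\E[(w\cdot v_i)^k(w\cdot v_j)^k](v_i\cdot v_j)^{p-k}$, and evaluates those via the monomial-to-Hermite change of basis $x^k = \sum_l h^{-1}_{kl}He_l(x)$ together with the Gaussian orthogonality relation $\E[He_l(w\cdot v_i)He_{l'}(w\cdot v_j)]=\delta_{ll'}\,l!\,(v_i\cdot v_j)^l$; the coefficient $(k-2l)!\bigl((2l-1)!!\bigr)^2\binom{k}{2l}^2$ comes directly from $l!\,(h^{-1}_{kl})^2$ after reindexing. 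You instead work coordinatewise, apply Isserlis' theorem to the $2k$ copies of $w$, and classify the resulting perfect matchings by the number $l$ of pairs internal to each block; the same coefficient emerges as a product of binomial, double-factorial, and factorial counts, and the complete symmetry of $T$ is what makes every type-$l$ matching contribute the same amount $\|T(I^{\otimes l})\|_F^2$ (your canonical-matching reduction is exactly the right way to formalize this). Both arguments are valid. Your approach is more elementary and self-contained (no rank-one decomposition, no Hermite orthogonality) at the cost of slightly heavier combinatorial bookkeeping; the paper's approach is shorter once one has the Hermite machinery, and dovetails with the Hermite expansions used elsewhere in the appendix, but treats the rank-one decomposition and the Gaussian Hermite identities as black boxes. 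One small point worth noting if you write this up formally: in the paper's displayed computation there is a minor typo ($c_ic_i$ should read $c_ic_j$), whereas your version bypasses any such decomposition entirely.
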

\begin{proof}
	Let $T = \sum_i c_i v_i^p$ with $\|v_i\| = 1$. Using the change of basis $x^k \to \sum_{l \le k} h^{-1}_{kl} He_l(x)$,
	\begin{align*}
	\E \|T(w^{\otimes k})\|_F^2
	&= \sum_{ij} c_i c_j \E[(w \cdot v_i)^k (w \cdot v_j)^k] (v_i \cdot v_j)^{p-k} \\
	&= \sum_{l \le k} l! (h^{-1}_{kl})^2 \sum_{ij} c_i c_i (v_i \cdot v_j)^{p-k+l} \\
	&= \sum_{2l \le k} (k-2l)! ((2l-1)!!)^2\binom{k}{2l}^2 \|T(I^{\otimes l})\|_F^2.
	\end{align*}
\end{proof}
\begin{corollary}\label{cor:tensor_power_r}
	Let $T$ be a symmetric $p$-tensor with $\dim(\spn(T)) = r$. For $k \le p$,
	\begin{align*}
		\E \|T(w^{\otimes k})\|_F^2 \lesssim r^{\lfloor\frac{k}{2}\rfloor} \norm{T}_F^2.
	\end{align*}
\end{corollary}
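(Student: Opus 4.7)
The plan is to feed the decomposition from \Cref{lem:gaussian_power_tensor_expansion} into a dimension-sensitive bound on each partial trace, namely
\begin{align*}
\|T(I^{\otimes l})\|_F^2 \lesssim r^l \|T\|_F^2 \quad \text{for } 0 \le l \le \lfloor k/2 \rfloor.
\end{align*}
Once this single-term estimate is available, the expansion
\begin{align*}
\E\,\|T(w^{\otimes k})\|_F^2 = \sum_{2l \le k} (k-2l)!\,((2l-1)!!)^2 \binom{k}{2l}^2 \|T(I^{\otimes l})\|_F^2
\end{align*}
from \Cref{lem:gaussian_power_tensor_expansion} immediately delivers the corollary: for $k \le p = O(1)$ the combinatorial prefactors and the number of summands are $O(1)$, so taking the maximum over $l$ and observing that the exponent is largest at $l = \lfloor k/2 \rfloor$ gives the desired $\lesssim r^{\lfloor k/2 \rfloor} \|T\|_F^2$.

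To prove the single-term bound, invoke the hypothesis $\dim(\spn T) = r$. Let $V := \spn(T)$ and fix an orthonormal basis $e_1, \ldots, e_d$ of $\R^d$ whose first $r$ vectors span $V$. Then the coordinate entries $T_{i_1 \cdots i_p}$ of $T$ in this basis vanish whenever any $i_j > r$, and contraction with the identity collapses onto contraction with the projection $\Pi_V$:
\begin{align*}
(T(I^{\otimes l}))_{j_1 \cdots j_{p-2l}} = \sum_{i_1,\ldots,i_l = 1}^{r} T_{i_1 i_1 \cdots i_l i_l\, j_1 \cdots j_{p-2l}}.
\end{align*}
Applying Cauchy--Schwarz to this inner sum (which has exactly $r^l$ terms) and then summing over the free indices $j_1, \ldots, j_{p-2l}$ yields
\begin{align*}
\|T(I^{\otimes l})\|_F^2 \le r^l \sum_{i_1,\ldots,i_l,\, j_1,\ldots,j_{p-2l}} T_{i_1 i_1 \cdots i_l i_l\, j_1 \cdots j_{p-2l}}^2 \le r^l \|T\|_F^2,
\end{align*}
since the diagonal-pattern entries appearing on the right are a subcollection of all squared entries of $T$.

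The only step requiring any thought is the Cauchy--Schwarz move: the point is that restricting the trace indices to the $r$-dimensional support of $T$ turns what would otherwise be a sum over $d$ values per traced pair into a sum over only $r$ values, producing the factor $r^l$ rather than $d^l$. Everything else---the Hermite/tensor expansion of $\E\|T(w^{\otimes k})\|_F^2$ and the absorption of $O(1)$ combinatorial constants in $p$---is already packaged in \Cref{lem:gaussian_power_tensor_expansion}, so no further probabilistic input is needed.
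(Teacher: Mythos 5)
Your proof is correct and takes essentially the same route as the paper: feed the decomposition of Lemma~\ref{lem:gaussian_power_tensor_expansion} into the bound $\|T(I^{\otimes l})\|_F^2 \le r^l\|T\|_F^2$, obtained by noting that the contraction collapses onto $\Pi_{\spn T}$ and then applying Cauchy--Schwarz (the paper phrases this as $\|T(\Pi_{\spn T}^{\otimes l})\|_F^2 \le \|T\|_F^2\,\|\Pi_{\spn T}^{\otimes l}\|_F^2 = r^l\|T\|_F^2$, while you carry out the equivalent coordinate estimate with $r^l$ summands).
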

\begin{proof}
	The proof follows directly from \Cref{lem:gaussian_power_tensor_expansion} and the inequality $\|T(I^{\otimes l})\|_F = \|T(\Pi_{\spn(T)}^{\otimes l})\|_F \le \norm{T}_F^2 \norm{\Pi_{\spn(T)}^{\otimes l}}_F^2 = r^l \norm{T}_F^2$ for $2l \le k$.
\end{proof}

\begin{corollary}\label{lem:gaussian_power_frobenius_bound}
	Let $T$ be a symmetric $p$-tensor with $\dim(\spn(T)) = r$. With probability at least $1-2e^{-\iota}$,
	\begin{align*}
		\|T(w^{\otimes k})\|_F \lesssim \norm{T}_F \sqrt{r^{\lfloor\frac{k}{2}\rfloor} \iota^{k}}.
	\end{align*}
\end{corollary}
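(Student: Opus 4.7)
The plan is to view $P(w) := \|T(w^{\otimes k})\|_F^2$ as a nonnegative polynomial of degree $2k$ in the Gaussian vector $w$, and then apply the polynomial tail bound from \Cref{lem:poly_tail} together with the expectation bound already established in \Cref{cor:tensor_power_r}. This reduces the problem to controlling $\sqrt{\E[P(w)^2]}$ in terms of the expectation $\E[P(w)]$, which is a standard consequence of Gaussian hypercontractivity for polynomials of bounded degree.

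Concretely, I would proceed as follows. First, \Cref{cor:tensor_power_r} gives directly
\[
\E[P(w)] = \E\|T(w^{\otimes k})\|_F^2 \lesssim r^{\lfloor k/2\rfloor}\|T\|_F^2.
\]
Second, since $P$ is a polynomial of degree $2k$ in a Gaussian vector and $P \ge 0$, Gaussian hypercontractivity (as in \Cref{lem:gaussian_hypercontractivity} used elsewhere in the appendix) yields the reverse moment comparison $\E[P(w)^2]^{1/2} \lesssim_k \E[P(w)]$, so in particular
\[
\sqrt{\E[P(w)^2]} \lesssim r^{\lfloor k/2\rfloor}\|T\|_F^2.
\]
Third, I apply \Cref{lem:poly_tail} to the degree-$2k$ polynomial $P$: for any $\delta \ge 1$,
\[
\P\!\left[P(w) - \E P(w) \ge \delta\sqrt{\E[P(w)^2]}\right] \le 2\exp\!\bigl(-C_{2k}\min(\delta^2,\delta^{1/k})\bigr).
\]

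Choosing $\delta = (\iota/C_{2k})^k$ makes the right-hand side at most $2e^{-\iota}$, and then
\[
P(w) \;\le\; \E P(w) + \delta\sqrt{\E[P(w)^2]} \;\lesssim\; \iota^k\, r^{\lfloor k/2\rfloor}\|T\|_F^2.
\]
Taking square roots gives $\|T(w^{\otimes k})\|_F \lesssim \|T\|_F\sqrt{r^{\lfloor k/2\rfloor}\iota^k}$, which is exactly the claimed bound.

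The only mildly delicate point is the choice of exponent on $\iota$: \Cref{lem:poly_tail} has two regimes, subgaussian for small deviations and Weibull-type $\exp(-\delta^{1/k})$ for large deviations, and since we are tracking polynomial-in-$\iota$ factors rather than constants, we are automatically in the Weibull regime, which is exactly what forces the $\iota^k$ inside the square root (rather than an $\iota$ that one might naively hope for from a subgaussian-style argument). Besides this, everything is an immediate application of results already proved earlier in the appendix, so no further obstacle should arise.
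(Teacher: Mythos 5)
Your proof is correct, and it genuinely differs from the paper's. The paper bounds $\E\bigl[F(w)^2\bigr]$ (with $F(w)=\|T(w^{\otimes k})\|_F^2$) by an explicit tensor computation: it forms the $(k,k)$-tensor $\tilde T_k$ obtained by contracting $T\otimes T$ over the last $p-k$ indices, observes $F(w)=\tilde T_k(w^{\otimes 2k})$ with $\|\tilde T_k\|_F\le\|T\|_F^2$, and then invokes \Cref{lem:gaussian_power_tensor_expansion} to get $\E[F^2]\lesssim r^{2\lfloor k/2\rfloor}\|T\|_F^4$, finishing with \Cref{lem:poly_tail} exactly as you do. You replace the explicit contraction step with Gaussian hypercontractivity and \Cref{cor:tensor_power_r}, which is arguably cleaner since it avoids introducing $\tilde T_k$ at all. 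The one point worth tightening is that \Cref{lem:gaussian_hypercontractivity} as stated gives $\|P\|_{L^4}\lesssim\|P\|_{L^2}$ for a degree-$2k$ polynomial $P$, not directly the reverse moment comparison $\|P\|_{L^2}\lesssim\|P\|_{L^1}=\E[P]$ you need. It does follow, either by interpolating $\|P\|_{L^2}^2\le\|P\|_{L^1}^{2/3}\|P\|_{L^4}^{4/3}$ (Cauchy--Schwarz with exponents $3$ and $3/2$) and rearranging, or by writing $P=\sum_I g_I^2$ with each $g_I$ of degree $k$, expanding $\E[P^2]=\sum_{I,J}\E[g_I^2 g_J^2]\le\sum_{I,J}\|g_I\|_{L^4}^2\|g_J\|_{L^4}^2\lesssim_k\bigl(\sum_I\|g_I\|_{L^2}^2\bigr)^2=(\E P)^2$; you should state one of these explicitly rather than cite hypercontractivity as a black box here. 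Your accounting of the Weibull tail in \Cref{lem:poly_tail} (degree $2k\Rightarrow$ exponent $\delta^{1/k}$, hence $\delta\asymp\iota^k$) matches the paper and is the right explanation for the $\iota^k$ under the square root.
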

\begin{proof}
	Note that $F(w) = \norm{T(w^{\otimes k})}_F^2$ is a polynomial of degree $2k$. For $k \le p$, let $\tilde T_k$ be the $(k,k)$ tensor which comes from contracting the last $d-k$ indices of $T \otimes T$, i.e.
	\begin{align*}
		(\tilde T_k)_{i_1,\ldots,i_k}^{j_1,\ldots,j_k} = T_{i_1,\ldots,i_k,i_{k+1},\ldots,i_p}T^{j_1,\ldots,j_k,i_{k+1},\ldots,i_p}.
	\end{align*}
	Note that $F(w) = \E_w[\tilde T_k(w^{\otimes 2k})]$ and $\norm{\tilde T_k}_F \le \norm{T}_F^2$. Then by \Cref{lem:gaussian_power_tensor_expansion},
	\begin{align*}
		\E_w[F(w)^2] \lesssim \sum_{l \le k} \norm{\sym(\tilde T_k)(I^l)}_F^2 \lesssim \sum_{l \le \frac{k}{2}} \norm{T(I^l)}_F^4 \le \norm{T}_F^4 r^{2\lfloor \frac{k}{2} \rfloor}.
	\end{align*}
	Therefore by \Cref{lem:poly_tail}, with probability at least $1-2e^{-\iota}$, $F(w) \lesssim \norm{T}_F^2 r^{\lfloor \frac{k}{2} \rfloor} \iota^{k}$ and taking square roots completes the proof.
\end{proof}

\begin{corollary}
	For $k \le p$,
	\begin{align*}
		\E \|T(w^{\otimes k})\|_F^2 \le \E \langle T, w^{\otimes p} \rangle^2.
	\end{align*}
\end{corollary}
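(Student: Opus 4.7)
The plan is to reduce the claim to a term-by-term comparison via Lemma \ref{lem:gaussian_power_tensor_expansion}. First I would observe that $\langle T, w^{\otimes p} \rangle$ is a scalar, so $\E\langle T, w^{\otimes p}\rangle^2 = \E\|T(w^{\otimes p})\|_F^2$, and hence both sides of the desired inequality are covered by the same expansion. Applying Lemma \ref{lem:gaussian_power_tensor_expansion} with $k$ on the left and with $p$ on the right gives
\[
\E\|T(w^{\otimes k})\|_F^2 = \sum_{2l \le k} (k-2l)!\,((2l-1)!!)^2 \binom{k}{2l}^{\!2} \|T(I^{\otimes l})\|_F^2,
\]
and an analogous formula with $p$ replacing $k$. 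Since $k \le p$, the right-hand sum ranges over a superset of the indices of the left-hand sum, and every term $\|T(I^{\otimes l})\|_F^2$ is nonnegative, so the ``extra'' terms with $k < 2l \le p$ can be discarded for free.

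It therefore suffices to verify the coefficient inequality
\[
(k-2l)!\binom{k}{2l}^{\!2} \;\le\; (p-2l)!\binom{p}{2l}^{\!2} \qquad \text{for every } 2l \le k.
\]
Writing both sides as $(n!)^2/\bigl[((2l)!)^2(n-2l)!\bigr]$ with $n = k$ and $n = p$ respectively, this reduces to monotonicity of $f(n) := (n!)^2/(n-2l)!$ in $n$ for $n \ge 2l$. A one-line ratio computation gives $f(n+1)/f(n) = (n+1)^2/(n-2l+1) \ge 1$, so $f$ is nondecreasing, yielding the claim.

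I do not foresee any genuine obstacle: the statement is essentially a direct corollary of the preceding lemma, and the only work is the elementary monotonicity check above. The only point that requires a moment of care is making sure the inner norm $\|T(w^{\otimes p})\|_F$ is interpreted as $|\langle T, w^{\otimes p}\rangle|$ when all $p$ indices are contracted, so that both sides are covered by the same formula in Lemma \ref{lem:gaussian_power_tensor_expansion}.
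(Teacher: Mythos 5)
Your proposal is correct and follows essentially the same route as the paper: apply Lemma~\ref{lem:gaussian_power_tensor_expansion} to both sides and compare coefficients term by term using $(k-2l)!\binom{k}{2l}^2 \le (p-2l)!\binom{p}{2l}^2$. The paper simply asserts this coefficient inequality without justification, whereas you supply the short monotonicity argument for $n \mapsto (n!)^2/(n-2l)!$, which is a welcome bit of added detail but not a different method.
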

\begin{proof}
	This follows immediately from \Cref{lem:gaussian_power_tensor_expansion} and $(k-2l)!\binom{k}{2l}^2 \le (p-2l)!\binom{p}{2l}^2$.
\end{proof}

\begin{corollary}\label{cor:gauss_tensor_eval_lowerbound}
	Let $w \sim N(0,I_d)$. Then,
	\begin{align*}
		\E[w^{\otimes 2k}] \succeq k! \Pi_{\sym^k(\R^d)}
	\end{align*}
	where $\Pi_{\sym^k(\R^d)}$ denotes the projection onto symmetric $k$-tensors.
\end{corollary}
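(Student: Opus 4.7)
The plan is to reduce the operator inequality to a quadratic-form statement on vectorized tensors, and then invoke \Cref{lem:gaussian_power_tensor_expansion} with $p = k$ to isolate the $l=0$ term. Viewing $\E[w^{\otimes 2k}]$ as the $d^{k} \times d^{k}$ matrix $M$ obtained by matricizing the first $k$ indices against the last $k$, we have for any $k$-tensor $U$ (not assumed symmetric) the identity
\begin{align*}
\vec(U)^{T} M \vec(U) \;=\; \sum_{i_{1},\ldots,i_{k},j_{1},\ldots,j_{k}} U_{i_{1}\cdots i_{k}} U_{j_{1}\cdots j_{k}} \E\bigl[w_{i_{1}}\cdots w_{i_{k}} w_{j_{1}}\cdots w_{j_{k}}\bigr] \;=\; \E\bigl[\langle U, w^{\otimes k}\rangle^{2}\bigr].
\end{align*}

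Next I would use that $w^{\otimes k}$ is itself a symmetric tensor, so $\langle U, w^{\otimes k}\rangle = \langle \sym(U), w^{\otimes k}\rangle$, and therefore $\vec(U)^{T} M \vec(U) = \E[\langle \sym(U), w^{\otimes k}\rangle^{2}]$. Applying \Cref{lem:gaussian_power_tensor_expansion} to the symmetric $k$-tensor $T := \sym(U)$ with $p = k$ gives
\begin{align*}
\E\bigl[\langle \sym(U), w^{\otimes k}\rangle^{2}\bigr] \;=\; \sum_{2l \le k} (k-2l)!\, ((2l-1)!!)^{2} \binom{k}{2l}^{2} \|\sym(U)(I^{\otimes l})\|_{F}^{2}.
\end{align*}
Every term on the right is nonnegative, and the $l = 0$ contribution alone equals $k!\,\|\sym(U)\|_{F}^{2}$. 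Since orthogonal projection onto symmetric $k$-tensors satisfies $\vec(U)^{T} \Pi_{\sym^{k}(\R^{d})}\vec(U) = \|\sym(U)\|_{F}^{2}$, we obtain
\begin{align*}
\vec(U)^{T} M \vec(U) \;\ge\; k!\, \vec(U)^{T} \Pi_{\sym^{k}(\R^{d})} \vec(U) \qquad \forall\, U,
\end{align*}
which is precisely the claimed operator inequality $\E[w^{\otimes 2k}] \succeq k!\, \Pi_{\sym^{k}(\R^{d})}$.

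There is no real obstacle here; the work has already been done in \Cref{lem:gaussian_power_tensor_expansion}. The only care needed is notational: keeping track of the matricization convention so that the quadratic form matches $\E\langle U, w^{\otimes k}\rangle^{2}$, and noting that the inequality extends from symmetric $U$ to arbitrary $U$ by replacing $U$ with $\sym(U)$ inside the inner product (which is automatic because $w^{\otimes k}$ is symmetric). Alternatively one could derive the same conclusion directly from \Cref{lem:gaussian_moment_tensor}, since $(2k-1)!!\,\sym(I_{d}^{\otimes k})$ acts on a symmetric tensor $T$ by averaging $(2k-1)!!$ contractions, a combinatorial count of which reveals exactly $(2k-1)!! \cdot \tfrac{k!\,2^{k}}{(2k)!} \cdot (2k)! /(k!\,2^{k}) = k!$ nontrivial terms, matching the bound; I would use the route above because it is cleaner and reuses a lemma already in the paper.
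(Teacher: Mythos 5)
Your proof takes essentially the same route as the paper: evaluate the quadratic form $\E\langle T, w^{\otimes k}\rangle^{2}$ via \Cref{lem:gaussian_power_tensor_expansion} with $p=k$, drop all but the $l=0$ term, and note that passing to $\sym(U)$ handles general tensors because $w^{\otimes k}$ is symmetric (a point the paper leaves implicit). The aside about deriving this directly from \Cref{lem:gaussian_moment_tensor} has an arithmetic slip---the middle factor should be $\tfrac{(k!)^{2}\,2^{k}}{(2k)!}$ rather than $\tfrac{k!\,2^{k}}{(2k)!}$, which is what makes the product equal $k!$ instead of $(2k-1)!!$---but since you explicitly chose the other route this does not affect the proof.
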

\begin{proof}
	Considering only the $l=0$ term in the above expansion of $\E[w^{\otimes 2k}](T,T)$ gives
	\begin{align*}
		\E[w^{\otimes 2k}](T,T) \ge k! \norm{T}_F^2.
	\end{align*}
\end{proof}

\begin{lemma}[Theorem 4.3 in \citep{pratowick2007}]\label{lem:gaussian_hypercontractivity}
	Let $f$ be a polynomial of degree $p$. Then
	\begin{align*}
		\E_{w \sim N(0,I_d)}[f(w)^k] \le O_{k,p}(1)\qty(\E_{w \sim N(0,I_d)}[f(w)^2])^{k/2}.
	\end{align*}
\end{lemma}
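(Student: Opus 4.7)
The plan is to prove this hypercontractivity statement by reducing to Nelson's hypercontractivity theorem for the Ornstein-Uhlenbeck semigroup, which is the cleanest way to obtain a constant depending only on $k$ and $p$ (and \emph{not} on the ambient dimension $d$ or on the individual coefficients of $f$). The naive approach of expanding $f$ in the multivariate Hermite basis and bounding term-by-term will fail because the number of Hermite polynomials of degree $\le p$ in $d$ variables grows like $d^p$, so any bound through Cauchy--Schwarz over the basis would pick up factors of $d$.

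First I would recall the Hermite / Wiener chaos decomposition: any $L^2(\mathcal{D})$ polynomial $f$ of degree $p$ admits a unique decomposition $f = \sum_{k=0}^{p} f_k$, where $f_k$ lies in the $k$-th Wiener chaos $\mathcal{H}_k$ (the span of products $\prod_j He_{\alpha_j}(w_j)$ with $\sum_j \alpha_j = k$). By orthogonality of the Hermite basis, $\E[f^2] = \sum_{k=0}^{p} \E[f_k^2]$. Next, I would introduce the Ornstein--Uhlenbeck semigroup $T_t$ on Gaussian space, which acts on each chaos $\mathcal{H}_k$ as multiplication by $e^{-kt}$. Consequently, if $g_k \in \mathcal{H}_k$ and $f = \sum_{k \le p} e^{-kt} g_k$, then $f = T_t g$ with $g = \sum_{k \le p} g_k$.

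Second, I would invoke Nelson's hypercontractivity: for $q \ge 2$ and any $t \ge 0$ with $e^{2t} \ge q-1$, the operator $T_t : L^2 \to L^q$ is a contraction, i.e.\ $\|T_t g\|_{L^q} \le \|g\|_{L^2}$. Choose $t = \tfrac{1}{2}\log(q-1)$ and set $g_k := e^{kt} f_k$, so that $T_t g = f$ where $g = \sum_{k \le p} e^{kt} f_k$. Then Nelson yields
\begin{align*}
\|f\|_{L^q} \;=\; \|T_t g\|_{L^q} \;\le\; \|g\|_{L^2} \;=\; \Bigl(\sum_{k=0}^{p} e^{2kt}\,\E[f_k^2]\Bigr)^{1/2} \;\le\; e^{pt}\,\|f\|_{L^2} \;=\; (q-1)^{p/2}\,\|f\|_{L^2}.
\end{align*}
Specializing to $q = k$ and using that $\E[f(w)^k] = \|f\|_{L^k}^k$, this gives $\E[f(w)^k] \le (k-1)^{kp/2}\, \E[f(w)^2]^{k/2}$, which is the claimed bound with constant $O_{k,p}(1) = (k-1)^{kp/2}$. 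For odd $k$, one can replace $f$ by $|f|$ in the monotone chain $\E[|f|^k] \le \E[f^{2\lceil k/2 \rceil}]^{k/(2\lceil k/2 \rceil)}$ and apply the above to an even exponent, then invoke Jensen.

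The main obstacle is really the appeal to Nelson's theorem itself; once one has it, the rest of the argument is bookkeeping on chaos eigenvalues. A fully self-contained route would either (i) prove the two-point hypercontractive inequality on $\{-1,+1\}$ by calculus and tensorize to the Gauss space via the central limit theorem, or (ii) prove the logarithmic Sobolev inequality for the Gaussian measure and integrate it along the OU flow (the Bakry--Émery / Gross argument). Either route is standard but occupies several pages, which is exactly why the paper opts to cite \citet{pratowick2007} as a black box rather than reproduce the derivation.
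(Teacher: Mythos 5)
The paper does not prove this lemma; it is cited as a black box from the reference, so there is no in-paper argument to compare against. Your derivation is the standard one and is correct: decompose $f$ into Wiener chaoses, observe that the Ornstein--Uhlenbeck semigroup $T_t$ scales the $j$-th chaos by $e^{-jt}$, and apply Nelson's hypercontractive bound $\|T_t g\|_{L^q} \le \|g\|_{L^2}$ at $t=\tfrac12\log(q-1)$ to obtain $\|f\|_{L^q}\le (q-1)^{p/2}\|f\|_{L^2}$, which is the claimed dimension-free estimate. Your handling of odd $k$ via $\E[f^k]\le \E[|f|^k]$ and Lyapunov's inequality to bootstrap from an even exponent is also fine, and you correctly flag that the only nontrivial ingredient is Nelson's theorem (or equivalently the Gaussian log-Sobolev inequality integrated along the OU flow), which is exactly why the paper cites rather than reproves.
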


\subsection{Sphere Lemmas}

\begin{lemma}\label{lem:chi_moment}
	Let $\nu \sim \chi(d)$. Then,
	\begin{align*}
		\E[\nu^{2k}] = \prod_{j=0}^{k-1} (d+2j) = d(d+2)\cdots(d+2k-2) = \Theta(d^k).
	\end{align*}
\end{lemma}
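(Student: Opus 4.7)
The plan is to reduce the computation to a gamma integral. Since $\nu \sim \chi(d)$, we have $\nu^2 \sim \chi^2(d)$, whose density on $(0,\infty)$ is
\[
f(x) = \frac{1}{2^{d/2}\,\Gamma(d/2)}\, x^{d/2 - 1} e^{-x/2}.
\]
So I would first write $\E[\nu^{2k}] = \int_0^\infty x^k f(x)\,dx$ and substitute $u = x/2$ to convert the integral into a gamma function evaluation, giving
\[
\E[\nu^{2k}] = \frac{2^k\,\Gamma(d/2 + k)}{\Gamma(d/2)}.
\]

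Next, I would use the functional equation $\Gamma(z+1) = z\Gamma(z)$ iteratively to expand the ratio of gammas as a Pochhammer-style product:
\[
\frac{\Gamma(d/2 + k)}{\Gamma(d/2)} = \prod_{j=0}^{k-1}\left(\tfrac{d}{2} + j\right).
\]
Multiplying by the $2^k$ prefactor distributes one factor of $2$ into each term of the product, turning $d/2 + j$ into $d + 2j$, which yields the claimed identity
\[
\E[\nu^{2k}] = \prod_{j=0}^{k-1}(d + 2j) = d(d+2)\cdots(d+2k-2).
\]

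For the asymptotic claim $\Theta(d^k)$, I would simply observe that each factor $d + 2j$ with $0 \le j \le k-1$ satisfies $d \le d + 2j \le d + 2(k-1)$, so the product is sandwiched between $d^k$ and $(d + 2k-2)^k = d^k(1 + O(k/d))^k$, which is $\Theta(d^k)$ for fixed $k$.

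No real obstacle is expected here; the only thing to be careful about is the change of variables in the gamma integral (tracking the factors of $2$ from $x = 2u$) and the cosmetic step of absorbing $2^k$ into the product over $j$. The entire proof is a two-line computation plus a one-line sandwich bound.
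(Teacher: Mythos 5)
The paper states this lemma without proof (it is presented as a standard fact about chi moments), so there is no paper argument to compare against. Your proof via the gamma integral, the functional equation $\Gamma(z+1)=z\Gamma(z)$, and the $d \le d+2j \le d+2(k-1)$ sandwich for the $\Theta(d^k)$ claim is correct and is exactly the standard derivation one would supply here.
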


\begin{lemma}
	Let $\bw \sim S^{d-1}$. Then,
	\begin{align*}
		\E\qty[\bw^{\otimes 2k}] = \frac{\E_{w \sim N(0,I_d)}[w^{2k}]}{\E_{\nu \sim \chi(d)}[\nu^{2k}]}.
	\end{align*}
\end{lemma}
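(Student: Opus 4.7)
The plan is to leverage the standard polar decomposition of an isotropic Gaussian vector. If $w \sim N(0, I_d)$, then by rotational invariance the norm $\nu := \|w\|$ and the direction $\bar w := w/\|w\|$ are independent, with $\nu \sim \chi(d)$ and $\bar w \sim \unif(S^{d-1})$. This is the only nontrivial ingredient; the rest is algebraic manipulation of tensor products.

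First I would write $w = \nu \bar w$ and note that since scalars factor through tensor products,
\begin{equation*}
w^{\otimes 2k} = (\nu \bar w)^{\otimes 2k} = \nu^{2k}\, \bar w^{\otimes 2k}.
\end{equation*}
Next I would take expectations on both sides. By the independence of $\nu$ and $\bar w$, the expectation of the product factors:
\begin{equation*}
\E_{w \sim N(0,I_d)}\!\qty[w^{\otimes 2k}] = \E[\nu^{2k}]\cdot \E[\bar w^{\otimes 2k}] = \E_{\nu \sim \chi(d)}[\nu^{2k}]\cdot \E_{\bar w \sim S^{d-1}}[\bar w^{\otimes 2k}].
\end{equation*}
Since $\E_{\nu \sim \chi(d)}[\nu^{2k}] > 0$ (it equals the positive quantity computed in the preceding \Cref{lem:chi_moment}), I can divide to obtain
\begin{equation*}
\E_{\bar w \sim S^{d-1}}[\bar w^{\otimes 2k}] = \frac{\E_{w \sim N(0,I_d)}[w^{\otimes 2k}]}{\E_{\nu \sim \chi(d)}[\nu^{2k}]},
\end{equation*}
which is the claimed identity.

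There is no real obstacle here: the whole proof rests on the single classical fact that the radial and angular parts of a standard Gaussian are independent, which justifies the factorization of the expectation. The only point worth stating carefully in the write-up is that $(\nu \bar w)^{\otimes 2k} = \nu^{2k} \bar w^{\otimes 2k}$ as a tensor identity (this is immediate entrywise since each of the $2k$ factors contributes one copy of $\nu$). Combined with \Cref{lem:gaussian_moment_tensor} and \Cref{lem:chi_moment} from earlier in the paper, this lemma effectively yields an explicit closed form for $\E[\bar w^{\otimes 2k}]$ in terms of symmetrized identity tensors, which is how it will be used in the downstream arguments.
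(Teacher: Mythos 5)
Your proof is correct and matches the paper's argument exactly: both use the polar decomposition $w = \nu\bw$ with $\nu \sim \chi(d)$ and $\bw \sim \unif(S^{d-1})$ independent, factor the expectation, and divide by $\E[\nu^{2k}]$. The paper states this in one line; your write-up simply spells out the same steps.
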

\begin{proof}
	This follows from the decomposition $w = \nu \bw$ with $\nu \sim \chi(d),\bw \sim S^{d-1}$ independent.
\end{proof}

\begin{corollary}\label{lem:sphere_power_frobenius_bound}
	Let $T$ be a symmetric $p$-tensor with $\dim(\spn(T)) = r$. With probability at least $1-2e^{-\iota}$,
	\begin{align*}
		\|T(\bw^{\otimes k})\|_F \lesssim \norm{T}_F \sqrt{\frac{r^{\lfloor\frac{k}{2}\rfloor} \iota^{k}}{d^k}}.
	\end{align*}
\end{corollary}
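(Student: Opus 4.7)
The plan is to reduce the sphere case to the Gaussian case that is already handled by Corollary \ref{lem:gaussian_power_frobenius_bound}, using the standard polar decomposition of a Gaussian vector. Specifically, if $w \sim N(0,I_d)$, then $w$ can be written as $w = \nu \bw$ where $\nu \sim \chi(d)$, $\bw \sim \unif(S^{d-1})$, and $\nu,\bw$ are independent. By homogeneity of a $k$-tensor contraction,
\begin{align*}
\norm{T(w^{\otimes k})}_F = \nu^k \norm{T(\bw^{\otimes k})}_F,
\end{align*}
so controlling $\norm{T(\bw^{\otimes k})}_F$ reduces to controlling $\norm{T(w^{\otimes k})}_F$ from above and $\nu$ from below.

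For the numerator, Corollary \ref{lem:gaussian_power_frobenius_bound} gives that with probability at least $1 - e^{-\iota}$,
\begin{align*}
\norm{T(w^{\otimes k})}_F \lesssim \norm{T}_F \sqrt{r^{\lfloor k/2 \rfloor} \iota^k}.
\end{align*}
For the denominator, $\nu^2 \sim \chi^2(d)$, and the lower tail bound in Lemma \ref{lem:chi_square} with $t = \iota$ (using the assumption $\iota \le cd$ so that $d - 2\sqrt{d\iota} \ge d/2$) yields $\nu^2 \ge d/2$, and hence $\nu^k \gtrsim d^{k/2}$, with probability at least $1 - e^{-\iota}$.

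Combining the two events by a union bound and dividing the first inequality by $\nu^k$ gives, with probability at least $1 - 2e^{-\iota}$,
\begin{align*}
\norm{T(\bw^{\otimes k})}_F \lesssim \norm{T}_F \sqrt{\frac{r^{\lfloor k/2 \rfloor}\iota^k}{d^k}},
\end{align*}
which is the claimed bound. There is no real obstacle here: the only mild subtlety is to make sure the threshold for $\nu$ is a deterministic lower bound that can be pulled out of the tail estimate, which is exactly what Lemma \ref{lem:chi_square} provides under the standing assumption $\iota \le cd$ adopted at the beginning of the appendix.
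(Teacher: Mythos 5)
Your proposal is correct and matches the route the paper intends: the surrounding lemmas are precisely the polar decomposition $w=\nu\bw$ with $\nu\sim\chi(d)$, $\bw\sim S^{d-1}$ independent, and the Gaussian bound of \Cref{lem:gaussian_power_frobenius_bound}, so dividing $\norm{T(w^{\otimes k})}_F=\nu^k\norm{T(\bw^{\otimes k})}_F$ by a high-probability lower bound on $\nu^k$ is exactly the argument. The only nit is bookkeeping of the failure probability: \Cref{lem:gaussian_power_frobenius_bound} is stated with $1-2e^{-\iota}$, so combined with the $\chi^2$ lower tail your union bound actually yields $1-3e^{-\iota}$; this is absorbed by adjusting the constant $C_\iota$ in the definition of $\iota$ and does not affect the conclusion.
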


\begin{corollary}\label{lem:sphere_power_C2_growth}
	Let $\bw \sim S^{d-1}$. For $k \le p$,
	\begin{align*}
		\E \|T(\bw^{\otimes k})\|_F^2 \lesssim d^{p-k}\E \langle T, \bw^{\otimes p} \rangle^2.
	\end{align*}
\end{corollary}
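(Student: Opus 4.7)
The plan is to reduce the spherical statement to the corresponding Gaussian one that was just established. The preceding corollary (derived from \Cref{lem:gaussian_power_tensor_expansion}) shows that for $w \sim N(0,I_d)$ and $k \le p$, we have $\E\|T(w^{\otimes k})\|_F^2 \le \E\langle T, w^{\otimes p}\rangle^2$. The idea is that $\bw$ differs from $w$ only by an independent radial factor, and that radial factor contributes a clean power of $d$ on each side which we can compare using \Cref{lem:chi_moment}.

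More concretely, I would use the polar decomposition $w = \nu\, \bw$ with $\nu \sim \chi(d)$ and $\bw \sim S^{d-1}$ independent. Since $T$ is multilinear, scaling gives $T(w^{\otimes k}) = \nu^k T(\bw^{\otimes k})$ and $\langle T, w^{\otimes p}\rangle = \nu^p \langle T, \bw^{\otimes p}\rangle$. Taking expectations and exploiting independence of $\nu$ and $\bw$ yields
\begin{align*}
\E\|T(w^{\otimes k})\|_F^2 = \E[\nu^{2k}]\cdot\E\|T(\bw^{\otimes k})\|_F^2, \qquad \E\langle T, w^{\otimes p}\rangle^2 = \E[\nu^{2p}]\cdot\E\langle T, \bw^{\otimes p}\rangle^2.
\end{align*}
Substituting these into the Gaussian inequality rearranges to
\begin{align*}
\E\|T(\bw^{\otimes k})\|_F^2 \le \frac{\E[\nu^{2p}]}{\E[\nu^{2k}]} \cdot \E\langle T, \bw^{\otimes p}\rangle^2.
\end{align*}

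The remaining step is to bound the ratio of chi moments. By \Cref{lem:chi_moment}, $\E[\nu^{2k}] = d(d+2)\cdots(d+2k-2) = \Theta(d^k)$ and similarly $\E[\nu^{2p}] = \Theta(d^p)$, so $\E[\nu^{2p}]/\E[\nu^{2k}] = \Theta(d^{p-k})$, which absorbs into the $\lesssim$ and gives the claim.

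There is no real obstacle here: the entire argument is a change of variables followed by an invocation of the previous corollary, and the only arithmetic input is the explicit formula for even moments of $\chi(d)$. The one subtlety worth double-checking is that the factors $\nu^k$ and $\nu^p$ pull out cleanly under the Frobenius norm and the tensor pairing respectively (they do, by multilinearity of $T$), and that the Gaussian inequality was proved for arbitrary symmetric $p$-tensors (which it was), so it applies after we rescale.
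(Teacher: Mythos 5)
Your proof is correct and follows exactly the route the paper intends: reduce to the Gaussian corollary $\E \|T(w^{\otimes k})\|_F^2 \le \E \langle T, w^{\otimes p}\rangle^2$ via the polar decomposition $w = \nu\,\bw$ (the independence and multilinearity factorizations are just the content of the preceding lemma relating $\E[\bw^{\otimes 2k}]$ to $\E[w^{\otimes 2k}]/\E[\nu^{2k}]$), and then compare the chi moment ratio $\E[\nu^{2p}]/\E[\nu^{2k}] = (d+2k)(d+2k+2)\cdots(d+2p-2) = \Theta(d^{p-k})$ using \Cref{lem:chi_moment}. Nothing is missing.
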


\subsection{Rademacher Complexity Bounds}

\begin{lemma}\label{lem:rademacher_linear}
	Let $f = a^T \sigma(W^\star x + b)$ be a two layer neural network. For fixed $W,b$, Let
	\begin{align*}
		\mathcal{F} = \qty{f_{(a,W,b)} ~:~ \norm{a}_2 \le B_a}.
	\end{align*}
	Then,
	\begin{align*}
		\mathfrak{R}_n(\mathcal{F}) \le \sqrt{\frac{B_a^2(\|W\|_F^2+\|b\|^2)}{n}}.
	\end{align*}
\end{lemma}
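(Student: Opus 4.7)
The plan is to carry out the standard chain of Cauchy--Schwarz followed by Jensen's inequality, exploiting the fact that the class $\mathcal{F}$ is the image of an $\ell_2$ ball under a linear map in the hidden features $\sigma(Wx+b)$. Since $W,b$ are fixed, the sup over $\mathcal{F}$ amounts to a sup over a Euclidean ball in the last-layer weights, which immediately collapses to a norm.

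First I would unfold the definition of the Rademacher complexity and pull the linear dependence on $a$ outside:
\begin{align*}
\mathfrak{R}_n(\mathcal{F}) = \E_{x,\epsilon}\left[ \sup_{\|a\|\le B_a} \frac{1}{n}\sum_{i=1}^n \epsilon_i\, a^T\sigma(Wx_i+b)\right] = \frac{B_a}{n}\,\E_{x,\epsilon}\left\|\sum_{i=1}^n \epsilon_i\,\sigma(Wx_i+b)\right\|,
\end{align*}
where I have used Cauchy--Schwarz (tight at $a = B_a v/\|v\|$ with $v=\sum_i\epsilon_i\sigma(Wx_i+b)$). Next I would apply Jensen's inequality to move the expectation under the square root and then expand the squared norm, using the independence and zero mean of the Rademacher variables $\epsilon_i$, so that the cross terms vanish and only the diagonal $\E[\epsilon_i^2]=1$ survives:
\begin{align*}
\E_{x,\epsilon}\left\|\sum_i \epsilon_i \sigma(Wx_i+b)\right\| \le \sqrt{\,\E_{x,\epsilon}\left\|\sum_i \epsilon_i \sigma(Wx_i+b)\right\|^2\,} = \sqrt{\,\sum_{i=1}^n \E_x\,\|\sigma(Wx_i+b)\|^2\,}.
\end{align*}

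Finally, I would use that $\sigma=\relu$ satisfies $|\sigma(z)|\le|z|$ coordinatewise, so $\|\sigma(Wx_i+b)\|^2 \le \|Wx_i+b\|^2$, and that for $x\sim N(0,I_d)$ we have the direct computation $\E_x\|Wx+b\|^2 = \|W\|_F^2 + \|b\|^2$ (the cross term $2b^T W\E[x]$ vanishes). Substituting back gives
\begin{align*}
\mathfrak{R}_n(\mathcal{F}) \le \frac{B_a}{n}\sqrt{\,n(\|W\|_F^2+\|b\|^2)\,} = \sqrt{\frac{B_a^2(\|W\|_F^2+\|b\|^2)}{n}},
\end{align*}
as claimed.

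This lemma is essentially textbook, so I do not expect a real obstacle. The only mild subtlety is that the statement does not distinguish the empirical and population Rademacher complexities; I would simply take both expectations (over $\epsilon$ and over $x$) at the outset, and note that the same bound for the empirical Rademacher complexity would instead produce $\frac{1}{n}\sum_i \|\sigma(Wx_i+b)\|^2$ inside the square root, which concentrates around $\|W\|_F^2+\|b\|^2$ by standard Gaussian concentration if needed downstream.
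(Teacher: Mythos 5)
Your proof is correct and follows essentially the same line of argument as the paper: dualize the sup over the $\ell_2$ ball into a norm, apply Jensen, use independence of the Rademacher signs to kill the cross terms, then bound $\E\|\sigma(Wx+b)\|^2$ via $|\sigma(z)|\le|z|$ and $\E_x\|Wx+b\|^2=\|W\|_F^2+\|b\|^2$. The paper leaves the last contraction step implicit, but the chain of inequalities is the same.
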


\begin{proof}
	\begin{align*}
		\mathfrak{R}_n(\mathcal{F})
		&= \E_{x,\sigma}\qty[\sup_{f \in \mathcal{F}} \qty[\frac{1}{n} \sum_i \sigma_i \qty(a^T \sigma(W x_i + b))]] \\
		&= \frac{B_a}{n} \E_{x,\sigma}\qty[\norm{\sum_i \sigma_i \sigma(W x_i + b)}_2] \\
		&\le \frac{B_a}{n} \sqrt{\E_{x,\sigma}\qty[\norm{\sum_i \sigma_i \sigma(W x_i + b)}_2^2]} \\
		&= \frac{B_a}{\sqrt{n}} \sqrt{\E_{x} \norm{\sigma(W x_1 + b)}_2^2} \\
		&\le \sqrt{\frac{B_a^2(\|W\|_F^2+\|b\|^2)}{n}}.
		\end{align*}
\end{proof}

\begin{lemma}\label{lem:rademacher_2layernn}
	Let $\theta = (a,W,b)$ and let $f = a^T \sigma(W x + b)$ be a two layer neural network. Let
	\begin{align*}
		\mathcal{F} = \qty{f_\theta ~:~ \norm{a}_2 \le B_a, \norm{w_j} \le B_w}.
	\end{align*}
	Then,
	\begin{align*}
		\mathfrak{R}_n(\mathcal{F}) \le 2 B_a B_w\sqrt{\frac{md}{n}}.
	\end{align*}
\end{lemma}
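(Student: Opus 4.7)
The plan is to follow the template of the proof of Lemma~\ref{lem:rademacher_linear}, which handled only the $\ell_2$-bounded outer weights $a$ for fixed $W$, and additionally take a supremum over the hidden weights. First, by Cauchy--Schwarz on the linear dependence on $a$ followed by Jensen's inequality,
\[
\mathfrak{R}_n(\mathcal{F}) \;\le\; \frac{B_a}{n}\,\E \sup_W \Bigl\|\textstyle\sum_i \sigma_i \sigma(Wx_i + b)\Bigr\|_2 \;\le\; \frac{B_a}{n}\sqrt{\E \sup_W \Bigl\|\textstyle\sum_i \sigma_i \sigma(Wx_i + b)\Bigr\|_2^2}.
\]
Writing $v_j(w_j) := \sum_i \sigma_i \sigma(w_j \cdot x_i + b_j)$, each $w_j$ appears only in its own coordinate, so the squared norm decouples as $\sup_W \|\cdot\|_2^2 = \sum_{j=1}^m \sup_{\|w_j\|\le B_w} v_j(w_j)^2$. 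It therefore suffices to bound the second moment of a single Rademacher supremum, $\E \sup_{\|w\|\le B_w} v(w)^2$.

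For this, the symmetry $\sigma_i \stackrel{d}{=} -\sigma_i$ yields $\E \sup_w v(w)^2 \le 4\,\E(\sup_w v(w))^2$. I would then apply Talagrand's contraction inequality to the $1$-Lipschitz map $\phi(u) := \sigma(u + b_j) - \sigma(b_j)$ (which vanishes at $0$; this centering trick handles the nonzero bias, since the residual $\sigma(b_j)\sum_i\sigma_i$ is independent of $w$ and has zero mean) to strip off the ReLU, giving
\[
\E_\sigma \sup_{\|w\|\le B_w} v(w) \;\le\; \E_\sigma \sup_{\|w\|\le B_w} \textstyle\sum_i \sigma_i\, w \cdot x_i \;=\; B_w\,\E_\sigma \Bigl\|\textstyle\sum_i \sigma_i x_i\Bigr\|_2 \;\le\; B_w\sqrt{\textstyle\sum_i \|x_i\|^2}.
\]
For the second moment, I combine this first-moment bound with the Efron--Stein inequality: flipping a single $\sigma_i$ changes $\sup_w v(w)$ by at most $2\sup_w \sigma(w \cdot x_i + b_j) \lesssim B_w\|x_i\| + |b_j|$, so $\Var(\sup_w v) \lesssim B_w^2 \sum_i \|x_i\|^2$. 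Together these give $\E (\sup_w v)^2 \lesssim B_w^2 \sum_i \|x_i\|^2$, and taking $\E_x$ with $x \sim N(0,I_d)$ yields $\E \sum_i \|x_i\|^2 = nd$.

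Summing over $j$ and plugging back, $\sum_j \E \sup_{\|w_j\|\le B_w} v_j(w_j)^2 \lesssim m\,B_w^2\,nd$, which gives $\mathfrak{R}_n(\mathcal{F}) \lesssim B_a B_w\sqrt{md/n}$ up to the claimed constant $2$. The main obstacle is the second-moment bound on $\sup_w v(w)$: Jensen converts $\E \sup_W \|\cdot\|_2$ into $\sqrt{\E \sup_W \|\cdot\|_2^2}$, but $\E (\sup v)^2$ is not simply $(\E \sup v)^2$, and the naive Cauchy--Schwarz bound $v(w)^2 \le n\sum_i \sigma(w\cdot x_i + b_j)^2$ loses the crucial Rademacher cancellation and gives a $\sqrt{m}$-too-large answer. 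Concentration of the Rademacher supremum (via bounded differences), whose variance turns out to be of the same order as $(\E \sup v)^2$, is the key technical ingredient that closes this gap.
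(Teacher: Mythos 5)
Your route is genuinely different from the paper's and is essentially sound, but it is considerably heavier machinery for the same result. The paper instead uses $\norm{\cdot}_2 \le \sqrt m \norm{\cdot}_\infty$ to decouple coordinates (paying the same $\sqrt m$ factor you pay through Jensen and the squared $\ell_2$ norm), then a symmetrization to remove the absolute value (factor $2$), then a single one-sided Talagrand contraction to strip $\sigma$, and finally the linear Rademacher bound. Because it goes through $\ell_\infty$, the paper's proof never needs a second moment of a Rademacher supremum, so the Efron--Stein/bounded-differences apparatus is entirely avoided. Your diagnosis that a naive Cauchy--Schwarz on $v(w)^2$ wrecks the cancellation is right (though the excess is a factor $\sqrt n$, not $\sqrt m$), and your fix -- centering the process, a first-moment contraction, and a variance bound -- does close the gap. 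One point you should not gloss over: in the Efron--Stein step you bound a single bounded difference by $B_w\norm{x_i} + \abs{b_j}$ and then silently discard $\abs{b_j}$. That is not harmless in general: evaluating at $w=0$ gives $\E_\sigma \sup_w v_j(w)^2 \ge \E_\sigma v_j(0)^2 = n\,\sigma(b_j)^2$, so the per-coordinate second moment cannot be bounded by $B_w^2\sum_i\norm{x_i}^2$ alone when $\sigma(b_j)$ is large; carried through, your argument actually yields an additional $B_a\norm{\sigma(b)}/\sqrt n$ term. To be fair, the paper's proof also leans on a contraction with a bias that varies with the coordinate index $j$, which is not a standard application of the contraction lemma either; in the application both issues are harmless because $\max_j \abs{b_j} = \tilde O(1)$ is dominated by $B_w\sqrt d$. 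If you want your version to stand alone, state explicitly that the bias contribution is absorbed (or set $b=0$, or center first and carry the $\norm{\sigma(b)}/\sqrt n$ term).
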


\begin{proof}
	\begin{align*}
		\mathfrak{R}_n(\mathcal{F})
		&= \E_{x,\sigma}\qty[\sup_{f \in \mathcal{F}} \qty[\frac{1}{n} \sum_i \sigma_i \qty(a^T \sigma(Wx_i + b))]] \\
		&= \frac{B_a}{n} \E_{x,\sigma}\qty[\sup_{f \in \mathcal{F}} \norm{\sum_i \sigma_i \sigma(Wx_i + b)}_2] \\
		&\le \frac{B_a \sqrt{m}}{n} \E_{x,\sigma}\qty[\sup_{f \in \mathcal{F}} \norm{\sum_i \sigma_i \sigma(Wx_i + b)}_\infty] \\
		&= \frac{B_a \sqrt{m}}{n} \E_{x,\sigma}\qty[\sup_{f \in \mathcal{F}} \abs{\sum_i \sigma_i \sigma(w_j \cdot x_i + b_j)}] \\
		&\le \frac{2 B_a \sqrt{m}}{n} \E_{x,\sigma}\qty[\sup_{f \in \mathcal{F}} \sum_i \sigma_i \sigma(w_j \cdot x_i + b_j)] \\
		&\le \frac{2 B_a \sqrt{m}}{n} \E_{x,\sigma}\qty[\sup_{f \in \mathcal{F}} \sum_i \sigma_i (w_j \cdot x_i)] \\
		&\le 2 B_a B_w\sqrt{\frac{md}{n}}.
		\end{align*}
\end{proof}

\end{document}